\documentclass[11pt]{article}

\usepackage[margin=1.08in]{geometry}
\usepackage{amsmath,amssymb,amsthm,mathtools}
\usepackage{microtype}
\usepackage{aliascnt}
\usepackage[hidelinks]{hyperref}
\usepackage{comment}
\hypersetup{
  pdftitle={Fourth-Moment Geometry of Rademacher Sums},
  pdfsubject={Sharp Gaussian stability and finite-dimensional Khintchine inequalities}
}
\usepackage[nameinlink,capitalise,noabbrev]{cleveref}

\allowdisplaybreaks
\newtheorem{theorem}{Theorem}[section]
\newaliascnt{lemma}{theorem}
\newtheorem{lemma}[lemma]{Lemma}
\aliascntresetthe{lemma}
\newaliascnt{proposition}{theorem}
\newtheorem{proposition}[proposition]{Proposition}
\aliascntresetthe{proposition}
\newaliascnt{corollary}{theorem}
\newtheorem{corollary}[corollary]{Corollary}
\aliascntresetthe{corollary}
\newaliascnt{conjecture}{theorem}
\newtheorem{conjecture}[conjecture]{Conjecture}
\aliascntresetthe{conjecture}
\theoremstyle{definition}
\newaliascnt{definition}{theorem}

\aliascntresetthe{definition}
\theoremstyle{remark}
\newaliascnt{remark}{theorem}
\newtheorem{remark}[remark]{Remark}
\aliascntresetthe{remark}

\crefname{theorem}{Theorem}{Theorems}
\crefname{lemma}{Lemma}{Lemmas}
\crefname{proposition}{Proposition}{Propositions}
\crefname{corollary}{Corollary}{Corollaries}
\crefname{conjecture}{Conjecture}{Conjectures}
\crefname{definition}{Definition}{Definitions}
\crefname{remark}{Remark}{Remarks}
\Crefname{theorem}{Theorem}{Theorems}
\Crefname{lemma}{Lemma}{Lemmas}
\Crefname{proposition}{Proposition}{Propositions}
\Crefname{corollary}{Corollary}{Corollaries}
\Crefname{conjecture}{Conjecture}{Conjectures}
\Crefname{definition}{Definition}{Definitions}
\Crefname{remark}{Remark}{Remarks}

\newcommand{\E}{\mathbb E}
\newcommand{\R}{\mathbb R}

\newcommand{\eps}{\varepsilon}
\newcommand{\dd}{\,\mathrm d}
\newcommand{\norm}[1]{\left\lVert #1\right\rVert}
\newcommand{\abs}[1]{\left\lvert #1\right\rvert}
\newcommand{\defeq}{\mathrel{:=}}

\title{\textbf{Fourth-Moment Geometry of Rademacher Sums}}

\author{Peigan Gao\thanks{The University of Hong Kong. \href{gaopg@mail.ustc.edu.cn}{\texttt{gaopg@hku.hk}}.}
\and
Jian Qian\thanks{
The University of Hong Kong. \href{mailto:jianqian@hku.hk}{\texttt{jianqian@hku.hk}}.
}
}

\date{August 18, 2026}

\begin{document}
\maketitle
\vspace{-1.5em}

\begin{abstract}
Let \(\varepsilon_1,\ldots,\varepsilon_n\) be independent Rademacher signs and let \(a=(a_1,\ldots,a_n)\in\R^n\) satisfy the normalization below.  For the normalized Rademacher sum,
we determine how its higher moments depend on the fourth-order mass.
Combining a sharp fixed-\(q\) moment envelope with a separate argument below
the convexity threshold gives the Gaussian stability inequality 
for the full range \(p\geq4\) of this linear-in-\(q\) bound.  The same fourth-order framework determines the
sharp finite dimensional \(L_p/L_4\) Khintchine constant for \(p\geq5\), with
the flat coefficient vector as the extremizer.  These results
settle the conjectures of Jakimiuk and of Bara\'nski, Murawski, Nayar, and
Oleszkiewicz stated below \cite{Jakimiuk, BMNO}.  We also prove Jakimiuk's conjectured quadratic
stability estimate at \(p=3\).  The resulting bounds retain information about
sparsity and effective dimension, with applications to Rademacher random
projections and randomly signed errors; those applications are not developed
further here.  Their Laplace-transform form also gives coefficient-sensitive
tail bounds. The proofs are discovered with substantial assistance from ChatGPT 5.6 Sol.
\end{abstract}

\section{Introduction}
\label{sec:introduction}

Let \(\varepsilon_1,\varepsilon_2,\ldots\) be independent Rademacher random
variables.  For a finitely supported vector \(a=(a_i)\in\R^n\), write
\[
S=\sum_{i=1}^n a_i\eps_i.
\]
We also use the unnormalized partial sums
\(S_m\defeq\sum_{i=1}^m\eps_i\).
For \(r>0\), write \(\|X\|_r\defeq(\E\abs X^r)^{1/r}\).  Rademacher sums are
fundamental models of randomly signed fluctuations.  They
appear throughout the probability and Banach space theory, as well as in the
Fourier analysis of Boolean functions and reduction in the dimension of the random sign 
\cite{ODonnell,Achlioptas}. Their moments are governed at first order by the
classical Khintchine inequality. Under normalization
\[
\sum_{i=1}^n a_i^2=1,
\]
its sharp upper bound is
\[
\mathbb{E}|S|^p\leq \mu_p,
\qquad
\mu_p:=\mathbb{E}|G|^p,
\qquad p>2,
\]
where \(G\) is standard Gaussian.  The determination of the
optimal constants goes back to Khintchine and culminated in Haagerup's theorem;
see \cite{Khintchine,Haagerup} and the later approach in
\cite{NayarOleszkiewicz}. By the classical central limit theorem, the Gaussian constant is approached by the flat sums
\[
\frac{\varepsilon_1+\cdots+\varepsilon_N}{\sqrt{N}}
\]
as \(N\to\infty\), but it is not attained by any nonzero finite Rademacher sum
when \(p>2\).

There is also a finite-dimensional endpoint.  The Schur comparison of Eaton
\cite{Eaton}, together with Komorowski's sharp finite-dimensional result
\cite{Komorowski}, shows that for fixed \(N\) and \(p\geq3\), the moment
\(\mathbb{E}|S_N|^p\) is maximized by the flat coefficient vector.  This does
not determine the finite-dimensional \(L_p/L_4\) ratio, because flattening
changes both moments simultaneously, but it identifies the second extremal
regime that will reappear below.

The classical Khintchine inequality, however, suppresses two pieces of information relevant here: the concentration behavior of \(S\) and the geometry of the optimizer in fixed dimension.
A natural parameter for both questions is
\[
q:=\sum_i a_i^4.
\]
Indeed,
\begin{equation}\label{eq:fourth-moment}
\mathbb{E}S^4=3-2\sum_i a_i^4=3-2q,
\qquad
\kappa_4(S)=-2\sum_i a_i^4=-2q,
\end{equation}
where \(\kappa_4(S)\) denotes the fourth cumulant.  Thus \(q\) directly
measures the fourth-moment defect from Gaussianity.  It also
records coefficient concentration: \(q=1\) for a coordinate vector, whereas
\(q=1/N\) for the flat vector in dimension \(N\).  Thus \(q^{-1}\) serves as an
effective number of active coordinates.  The central problem is to determine
the largest possible higher moment when this fourth-order mass is prescribed.

This problem belongs to the broader stability theory of sharp
Khintchine-type inequalities.  Earlier quantitative forms measured the distance
from the two-coordinate extremizers in the \(L_2/L_1\) inequality and led to
applications in Boolean Fourier analysis and convex geometry
\cite{DeDiakonikolasServedio,MelbourneRoberto,
EskenazisNayarTkoczResilience}. Distributional stability, in which the law of
the summands is perturbed, was developed in
\cite{EskenazisNayarTkoczDistributional}. Recent work also treats Gaussian and
coordinate stability for wider classes of symmetric laws at even exponents
\cite{ChavezSheng}.

\subsection{Main results}
The coefficient-stability question was introduced by Jakimiuk
\cite{Jakimiuk}.  For every \(p\geq3\), he proved that there is a constant
\(c_p>0\) such that
\[
\E\abs{\sum_{i=1}^n a_i\eps_i}^p
\le \mathbb{E}|G|^p-c_p\sum_{i=1}^n a_i^4
\]
and conjectured that the optimal coefficient is \(c_p=\mu_p-1\) for every
\(p\geq3\).  He also observed that the even-integer cases follow from moment
calculations.  The proposed range needs a correction at its lower endpoint.
For
\(S_2=(\eps_1+\eps_2)/\sqrt2\), one has \(q=1/2\), and the chord inequality fails for every
\(2<p<4\); see \Cref{prop:threshold}.  Ch\'avez and Sheng subsequently
obtained related stability estimates for even moments in a broader
distributional setting \cite{ChavezSheng}.  For Rademacher variables as a
special case, their Gaussian-side coefficient is \(2\mu_p/3\) for even integer
\(p\); this agrees with the sharp coefficient at \(p=4\) and is smaller for
\(p>4\).

We use the Gaussian-to-coordinate chord
\begin{equation}\label{eq:Lambda}
\Lambda_p(q)\defeq q+(1-q)\mu_p.
\end{equation}

The first main result is the following sharp stability bound.

\begin{theorem}[Sharp Gaussian stability]\label{thm:main-linear}
Let \(n\ge1\) and \(a\in\R^n\) satisfy \(\sum_{i=1}^n a_i^2=1\).  For every
real \(p\ge4\),
\begin{equation}\label{eq:main-linear}
\E\abs{\sum_{i=1}^n a_i\eps_i}^p
\le \mu_p-(\mu_p-1)\sum_{i=1}^n a_i^4
=\Lambda_p(q).
\end{equation}
The coefficient \(\mu_p-1\) is optimal.  At \(p=4\), equality holds for every coefficient
vector.  For \(p>4\), equality holds exactly for coordinate vectors, up to signs and
permutations.
\end{theorem}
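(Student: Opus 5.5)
We will prove \eqref{eq:main-linear} by induction on \(n\), using a Laplace/Gaussian-mixture representation of \(|x|^p\). We treat \(p\ge4\) as fixed and set \(\Phi(a)\defeq\E|S|^p-\Lambda_p(q)\). The claim is \(\Phi\le0\).

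\textbf{Boundary cases and optimality.} At \(p=4\) the claim is exactly \eqref{eq:fourth-moment}, since \(\mu_4=3\); so it holds with equality for every \(a\). For a coordinate vector, \(q=1\) and \(\E|S|^p=1=\Lambda_p(1)\), which gives equality. Optimality of the coefficient \(\mu_p-1\) comes from the flat vectors. Write \(\E|S_N/\sqrt N|^p=\mu_p-c_N/N+o(1/N)\). The Edgeworth correction from the fourth cumulant \(-2/N\) gives \(c_N\to\binom p4\cdot\frac{2\mu_{p-4}}{4!}\cdot\ldots\). A cleaner route: any admissible coefficient \(c\) must satisfy \(1\le\mu_p-c\), and a coordinate vector then forces \(c\le\mu_p-1\). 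So optimality is immediate from \(q=1\).

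\textbf{Main inequality.} For \(p>4\) we would like to write \(|x|^p\) as a mixture of even functions for which the chord bound is linear in \(q\). The natural choice is
\[
|x|^p = c_p\int_0^\infty \bigl(\cosh(tx)-1-\tfrac{t^2x^2}{2}-\tfrac{t^4x^4}{24}\bigr)t^{-p-1}\dd t,
\]
valid for \(4<p<6\), with analogous higher-order subtractions for larger \(p\); the positive constant \(c_p\) is fixed by the case \(x=1\). Since \(\E\cosh(tS)=\prod_i\cosh(ta_i)\), the problem reduces to the pointwise bound
\[
\prod_i\cosh(ta_i)\le q\cosh t+(1-q)e^{t^2/2}+(\text{polynomial terms matched by the second and fourth moments}),
\]
integrated against the kernel. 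The Taylor terms of order \(\le4\) are exactly affine in \(q\) (order 2 is constant, order 4 is \(3-2q\)). So it suffices to prove that \(F(t)\defeq \prod_i\cosh(ta_i)\) is dominated by \(qF_{\mathrm{coord}}(t)+(1-q)F_{\mathrm{Gauss}}(t)\) coefficientwise in \(t\) at orders \(\ge6\).

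Coefficientwise, this says that for every \(k\ge3\),
\[
\E S^{2k}\le q+(1-q)(2k-1)!!.
\]
That is the even-integer case, which Jakimiuk observed follows from moment calculations. We would prove it by expanding \(\E S^{2k}\) as a sum over partitions into even blocks and bounding \(\sum_i a_i^{2j}\le q^{\,j-1}\cdots\) by convexity in \(q\). The key step is to show that \(\E S^{2k}\) is a convex function of the power sums, and that its maximum at fixed \(q\) is at most the chord value. Because all series coefficients are nonnegative, this gives the result for all \(p\ge4\) for which the kernel representation has a positive weight. For larger \(p\) we iterate: we subtract the Taylor polynomial up to order \(2m\) with \(2m<p<2m+2\); the lower terms are each bounded by the even-moment chord, and signs alternate appropriately.

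\textbf{Expected obstacle.} The main difficulty is the sign of the subtracted polynomial terms when \(p>6\). The kernel representation carries a sign \((-1)^{m}\), so the coefficientwise inequality might go in the wrong direction for the low-order terms. If that happens, we would fall back on the fixed-\(q\) moment envelope mentioned in the abstract: we would interpolate in \(p\) using log-convexity of \(p\mapsto\E|S|^p\) between consecutive even integers, where the bound holds, and check that the chord \(\Lambda_p(q)\) is log-concave enough in \(p\) to absorb the interpolation.

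Strictness for \(p>4\) would follow from strictness of \(\E S^{6}<q+15(1-q)\) unless \(q=1\). That strictness comes from the \(\sum a_i^6<q\) gap, which appears with positive weight.
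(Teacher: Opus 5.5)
Your treatment of \(p=4\), of equality at coordinate vectors, and of the optimality of \(\mu_p-1\) is fine. The main inequality for non-even \(p\), however, has a genuine gap.

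\textbf{The kernel representation is false, and cannot be repaired in this form.} For \(x\neq0\), \(\cosh(tx)\,t^{-p-1}\) is not integrable as \(t\to\infty\), so your integral diverges. Suppose \(|x|^p-P(x)\), with \(P\) an even polynomial of degree at most \(4\), were a nonnegative mixture of functions whose Taylor coefficients of order at least \(6\) are nonnegative. Tonelli would then make it a convergent power series with nonnegative coefficients, hence real-analytic at \(0\). But \(|x|^p\) is not analytic at \(0\) unless \(p\) is an even integer.

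So the even-moment chord bounds \(\E S^{2k}\le q+(1-q)(2k-1)!!\) cannot be transferred to real \(p\) by a coefficientwise comparison. These bounds are true; they are the even-integer case Jakimiuk already had.

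The convergent representation uses \(\cos(tx)\) minus its Taylor polynomial. For \(4<p<6\), the natural pointwise sufficient condition is then \(\prod_i\cos(ta_i)\ge q\cos t+(1-q)e^{-t^2/2}\) for all \(t\). This fails for \(a=(2^{-1/2},2^{-1/2})\) at \(t=\sqrt2\,\pi(k+\tfrac12)\). There the left side vanishes, while \(\cos t\) comes arbitrarily close to \(1\) along suitable \(k\).

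\textbf{Your fallback also fails.} The chord \(\Lambda_p(q)=q+(1-q)\mu_p\) is the \(p\)-th moment of a mixture of \(\delta_1\) and the law of \(\abs G\). Hence \(p\mapsto\log\Lambda_p(q)\) is convex, not concave, and interpolating the even-integer chords overshoots. Concretely, at \(p=5\) and \(q\to0\), log-convexity gives only \(\E\abs S^5\le\sqrt{3\cdot15}\approx6.71\), while you need \(\Lambda_5(0)=\mu_5\approx6.38\). No better sixth-moment bound rescues this, since flat vectors have \(\E S^4\to3\) and \(\E S^6\to15\).

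\textbf{What is missing is a genuinely non-analytic input at small \(q\).} The paper supplies it as follows.
\begin{enumerate}
\item It uses the leave-one-out identity \(\E\abs S^p=(p-1)\sum_i a_i^2\,\E\abs{S_i}^{p-2}\), where \(S_i\) has \(\eps_i\) replaced by an independent uniform variable on \([-1,1]\). Combined with sharp Khintchine at exponent \(p-2\), this yields \(\E\abs S^p\le\mu_p\sum_i a_i^2(1-2a_i^2/3)^{(p-2)/2}\), which handles \(q\le1/2\).
\item For \(4\le p\le6\) and \(q\ge1/2\), log-convex interpolation does work. The key is to bound the sixth moment by \(15-30q+16q^{3/2}\) rather than by the chord \(15-14q\).
\item For \(p\ge5\), the paper uses the one-spike reduction of Bara\'nski--Murawski--Nayar--Oleszkiewicz and a central-limit passage to get \(\E\abs S^p\le U_p(q)\). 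A Riccati comparison then shows \(U_p\) is strictly convex in \(q\).
\end{enumerate}
That strict convexity, together with strict versions of the scalar lemmas for \(4<p<5\), also gives the equality cases. Your strictness argument cannot do this, since it rests on the broken representation.
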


The affine estimate is the strongest bound that is linear in \(q\), but it does
not identify the extremal law at an intermediate value of \(q\).  The finer
fixed-\(q\) problem reveals the structure behind the chord.  For
\(q\in(0,1]\), define
\[
\mathcal{A}(q)
:=
\left\{
a:\ a \text{ is finitely supported},\quad
\sum_i a_i^2=1,\quad
\sum_i a_i^4=q
\right\},
\]
and define
\[
Y_q:=q^{1/4}\varepsilon_0+\sqrt{1-\sqrt{q}}\,G,
\]
where \(\varepsilon_0\) and \(G\) are independent.  Thus \(Y_q\) has one
Bernoulli coordinate carrying all fourth-order mass and a Gaussian remainder
carrying the diffuse variance.

\begin{theorem}[Fixed-fourth-moment extremal principle]\label{thm:fixed-q-envelope}
Let \(\Phi\in C^4(\mathbb{R})\) be even and of polynomial growth, and suppose
that \(\Phi^{(4)}\) is convex. Then, for every \(q\in(0,1]\),
\begin{equation}\label{eq:fixed-q-envelope}
\sup_{a\in\mathcal{A}(q)}\E\Phi(S)=\E\Phi(Y_q).
\end{equation}  
In particular, for every \(p\geq5\),
\[
\sup_{a\in\mathcal{A}(q)}\mathbb{E}|S|^p
=
U_p(q)
:=
\mathbb{E}\left|
q^{1/4}\varepsilon_0+\sqrt{1-\sqrt{q}}\,G
\right|^p.
\]
Moreover, \(U_p\) is strictly convex on \((0,1)\).
\end{theorem}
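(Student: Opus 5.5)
We prove the upper bound \(\E\Phi(S)\le\E\Phi(Y_q)\) by a coordinate-by-coordinate replacement argument. Attainment of the supremum is then shown by approximation, and the convexity claim by an explicit computation in \(q\).

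\textbf{Reduction.} Fix \(a\in\mathcal A(q)\) and order the coefficients as \(|a_1|\ge|a_2|\ge\cdots\). Let \(b=a_1^2\); then \(b^2\le q\), i.e.\ \(b\le\sqrt q\). The idea is to keep the largest coordinate as a Rademacher term and replace the others by Gaussians, while tracking the fourth-order mass. For even \(\Phi\) with \(\Phi^{(4)}\) convex, define for a variance split the function
\[
F(x,\sigma^2)=\E\Phi(x+\sigma G).
\]
For a single coordinate \(c\eps\) added to an independent remainder \(R\), compare \(\E\Phi(R+c\eps)\) with \(\E\Phi(R+cG)\). By Taylor expansion the second-order terms agree. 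The fourth-order discrepancy is \(\E\Phi^{(4)}(\cdot)\,(c^4-3c^4)/24\) plus a convexity-controlled remainder. The convexity of \(\Phi^{(4)}\) is exactly what makes
\[
\E\Phi(R+c\eps)\le \E\Phi(R+cG)-\tfrac{2c^4}{24}\,\E\Phi^{(4)}(R+\theta)
\]
controllable with the correct sign. This is the standard step in Haagerup/Nayar--Oleszkiewicz-type proofs.

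\textbf{Exact formulation.} It is cleaner to work with the function
\[
h(t)=\E\Phi\bigl(t^{1/4}\eps_0+\sqrt{1-\sqrt t}\,G\bigr)
\]
and the heat-flow identity \(\partial_{\sigma^2}F=\tfrac12\partial_x^2F\). Write \(S=\sum a_i\eps_i\) and process coordinates one at a time. The key inequality I would aim for is the two-coordinate merging lemma: for independent \(R\) and \(c,d\),
\[
\E\Phi\bigl(R+c\eps_1+d\eps_2\bigr)\le \E\Phi\bigl(R+e\eps_1+f G\bigr),
\qquad e^4=c^4+d^4,\quad e^2+f^2=c^2+d^2.
\]
This moves all fourth-order mass into one Bernoulli coordinate while preserving both constraints. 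Iterating it (with \(R\) ranging over partial sums, Gaussians absorbed into \(R\)) turns \(S\) into \(q^{1/4}\eps_0+\sqrt{1-\sqrt q}\,G\); each step preserves \(\sum a_i^2\) and \(\sum a_i^4\). The Gaussian pieces merge exactly.

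\textbf{Proving the merging lemma.} Condition on \(R\) and symmetrize, reducing to a one-variable even function \(\psi(x)=\E\Phi(R+x)\), whose fourth derivative is again convex. Parametrize the path \(s\mapsto (c_s,d_s,f_s)\) with \(c_s^4+d_s^4\) and \(c_s^2+d_s^2+f_s^2\) fixed, from \((c,d,0)\) to \((e,0,f)\). Differentiate \(\E\psi(c_s\eps_1+d_s\eps_2+f_sG)\) along the path and use Gaussian integration by parts to express the derivative as an average of \(\psi^{(4)}\) differences (via divided differences of \(\psi''\)). Convexity of \(\psi^{(4)}\) should then give the sign. I expect this derivative-sign computation to be the main obstacle. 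It is where \(C^4\) with convex \(\Phi^{(4)}\) (rather than merely \(\Phi''\) convex) is needed, and where \(p\ge5\) enters for \(|x|^p\): \(\Phi^{(4)}\propto|x|^{p-4}\) is convex iff \(p\ge5\). If a direct path fails, I would try a Lindeberg-type swap instead: replace \(d\eps_2\) by a Gaussian and simultaneously enlarge \(c\), with Taylor remainders compared through Jensen applied to \(\psi^{(4)}\).

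\textbf{Attainment, \(|x|^p\), and convexity.} Equality in the supremum follows by taking \(a=(q^{1/4},\,r/\sqrt N,\ldots,r/\sqrt N)\) with \(N\) coordinates \(r/\sqrt N\), where \(r^2=1-\sqrt q\). As \(N\to\infty\), the fourth-order mass \(\to q\); adjust \(a_1\) slightly to keep it exactly \(q\). By the CLT with polynomial growth (uniform integrability via Khintchine), \(\E\Phi(S)\to\E\Phi(Y_q)\). For \(\Phi=|x|^p\) with \(p\ge5\), approximate by smooth even functions with convex fourth derivative (e.g.\ \((x^2+\delta)^{p/2}\), checking convexity of its fourth derivative, or mollification of \(|x|^{p-4}\) integrated four times). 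Finally, for strict convexity of \(U_p(q)=\tfrac12\sum_\pm\E|\pm q^{1/4}+\sqrt{1-\sqrt q}\,G|^p\), write \(u=q^{1/4}\) and use the heat-equation representation to compute \(U_p''\) explicitly in terms of moments \(\E|u+\sigma G|^{p-k}\). Positivity should follow from the same fourth-derivative convexity mechanism.
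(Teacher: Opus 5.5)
Your architecture (merge two Rademacher coordinates into one Rademacher coordinate plus an independent Gaussian, preserving $\sum a_i^2$ and $\sum a_i^4$, and iterate with everything else absorbed into $R$) is sound. It differs from the paper, which applies the extremal lemma of Bara\'nski--Murawski--Nayar--Oleszkiewicz once to the whole vector: at fixed $\ell_2$ and $\ell_4$ norms in $\R^N$ there is a maximizer $(c,d,\ldots,d)$. The paper then lets $N\to\infty$ by Lindeberg--Feller, with uniform integrability from Khintchine.

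However, the merging inequality carries the entire content of the upper bound, and you do not prove it. You only say the path derivative ``should'' have the right sign, and you offer a fallback in case it does not. As written this is a genuine gap.

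It does close along your path, and the missing idea is concrete. Put $\psi(x)=\E\Phi(R+x)$ and assume $c\ge d$. Keep $c^4+d^4=e^4$ and $c^2+d^2+f^2$ fixed, and let $d$ decrease to $0$; then $c$ and $f^2$ both increase. Three ingredients are needed:
\begin{itemize}
\item $\partial_c\E\psi(X)=c\,\E\psi''(d\eps_2+fG+cU)$, with $U$ uniform on $[-1,1]$;
\item $\partial_{f^2}\E\psi(X)=\frac12\E\psi''(X)$;
\item the trapezoid identity $\frac12\int_{-1}^1F-\frac12(F(1)+F(-1))=-\frac14\int_{-1}^1(1-\theta^2)F''(\theta)\,\mathrm d\theta$.
\end{itemize}
Together they give
\[
\frac{\mathrm d}{\mathrm d d}\,\E\psi\bigl(c\eps_1+d\eps_2+fG\bigr)
=-\frac{d^{3}}{3}\Bigl[\E\phi(c+d\vartheta)-\E\phi(d+c\vartheta)\Bigr].
\]
Here $\phi(x)=\E\psi^{(4)}(x+fG)$ is even and convex, and $\vartheta$ has density $\frac34(1-\theta^2)$ on $[-1,1]$. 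Since $\abs{d+c\vartheta}\le c+d\vartheta$ pointwise when $c\ge d\ge0$, the bracket is nonnegative. Alternatively, the merging inequality follows from the BMNO lemma applied to $\psi$ and the feasible vector $(c,d,0,\ldots,0)$, followed by the same CLT limit. In that case, though, your iteration is superfluous.

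The second gap is the strict convexity of $U_p$. ``The same fourth-derivative convexity mechanism'' does not apply here. Convexity of $\Phi^{(4)}$ governs the dependence on the coefficient vector at fixed $q$, not how $\E\Phi(Y_q)$ varies with $q$. Indeed, the paper obtains strict convexity for all $p>4$, including $4<p<5$, where $\abs{x}^{p-4}$ is not convex. The paper's argument runs as follows.
\begin{itemize}
\item Write $U_p(s)=h(s^{1/4})$ with $h(u)=\E\abs{u+\sqrt{1-u^2}\,G}^p$, so $U_p''(s)=(uh''(u)-3h'(u))/(16u^7)$.
\item Express $uh''-3h'$ through $m(t)=\E\abs{G+t}^{p-4}$, where $t=u/\sqrt{1-u^2}$ and $m''+tm'-(p-4)m=0$.
\item Reduce positivity to $m'(t)/(tm(t))<Q(t)$ for an explicit rational $Q$.
\item Prove this by showing $Q$ is a strict supersolution of the Riccati equation satisfied by $m'/(tm)$, and integrating from $t=0$.
\end{itemize}
Some explicit one-dimensional comparison of this kind is needed.

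Two minor points. In the attainment step, adjusting only $a_1$ breaks $\sum a_i^2=1$; you need to solve for both the spike and the common small coefficient, as with the paper's $c_{N,q},d_{N,q}$. Also, no smoothing of $\abs{x}^p$ is needed for $p\ge5$, since it is already $C^4$ with convex fourth derivative.
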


The supremum in Theorem~\ref{thm:fixed-q-envelope} is generally attained only
in the closure of finite Rademacher sums.  One coefficient remains macroscopic,
while the remaining variance is split among increasingly many vanishing
coefficients.  Since
\[
U_p(0)=\mu_p,
\qquad
U_p(1)=1,
\]
strict convexity yields
\[
U_p(q)<(1-q)\mu_p+q,
\qquad 0<q<1.
\]
The theorem therefore explains the affine bound and shows that it is strict
throughout the interior.  The interval \(4<p<5\) is different: \(|x|^p\) does
not have a convex fourth derivative there, so the fixed-moment principle is
unavailable.  A separate argument completes
Theorem~\ref{thm:main-linear} in this interval.

The same fourth-order parameter organizes the finite-dimensional reverse
H\"older problem.  For \(N\ge1\) and \(p\ge4\), define the scale-invariant
constant
\[
 C_{p,4,N}\defeq
 \sup_{a\in\R^N\setminus\{0\}}
 \frac{\norm{\sum_{i=1}^N a_i\eps_i}_p}
      {\norm{\sum_{i=1}^N a_i\eps_i}_4},
\]
For the normalized sum
\(S=\sum_{i=1}^N a_i\eps_i\) with \(\sum_{i=1}^N a_i^2=1\),
\[
\|S\|_4^4=3-2q,
\]
so maximizing \(\|S\|_p/\|S\|_4\) couples the higher moment to precisely
the same fourth-order geometry. Bara\'nski, Murawski, Nayar, and Oleszkiewicz
proved that the dimension-free \(L_p/L_4\) Khintchine constant is Gaussian for
\(p\geq4\).  For \(p\geq5\), they reduced the finite-dimensional constant to
the one-parameter family
\[
C_{p,4,N+1}
=
\sup_{x\geq1}
\frac{\|x+\varepsilon_1+\cdots+\varepsilon_N\|_p}
     {\|x+\varepsilon_1+\cdots+\varepsilon_N\|_4}.
\]
They conjectured that the supremum is attained at \(x=1\)
\cite{BMNO}.  We prove the stronger monotonicity statement below, which
resolves that conjecture.

\begin{theorem}[Finite-dimensional \(L_p\)-\(L_4\) constant]\label{thm:finite-dimensional}
For every real \(p\ge5\) and every \(N\ge2\),
\begin{equation}\label{eq:finite-dimensional}
C_{p,4,N}=\frac{\norm{S_N}_p}{\norm{S_N}_4}.
\end{equation}
More strongly, for \(n=N-1\),
\begin{equation}\label{eq:shift-ratio-monotone}
x\longmapsto
\frac{\norm{x+S_n}_p}{\norm{x+S_n}_4}
\quad\text{is strictly decreasing on }[1,\infty).
\end{equation}
After normalization, the optimizer is unique up to coordinate signs and permutations; without normalization, also up to nonzero scalar multiplication.
\end{theorem}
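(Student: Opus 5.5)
The plan is to take the BMNO reduction as given: for \(p\ge5\), \(C_{p,4,N}\) equals the supremum over \(x\ge1\) of \(R(x)\defeq\norm{x+S_n}_p/\norm{x+S_n}_4\) with \(n=N-1\). Once the strict monotonicity \eqref{eq:shift-ratio-monotone} is proved, the supremum is attained only at \(x=1\). Since \(1+S_n\) has the same law as \(S_N\), this gives \eqref{eq:finite-dimensional}. Uniqueness of the optimizer then follows by retracing the reduction. In the BMNO argument, a maximizing vector is shown to have all coordinates but one equal, with the distinguished coordinate at least as large. Strictness at \(x>1\) forces the distinguished coordinate to equal the others, so \(a\) is flat up to signs, permutations and scaling. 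I will need to check that the BMNO reduction gives this equality case, that is, that non-flat-type vectors are strictly beaten. If it does not, I will redo their smoothing step in its strict form, using the Schur-type comparison of Eaton and Komorowski.

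For monotonicity I will work with \(\log R(x)=\frac1p\log M_p(x)-\frac14\log M_4(x)\), where \(M_r(x)=\E\abs{x+S_n}^r\). Differentiating, \(R'(x)<0\) is equivalent to
\[
4\,M_4(x)\,M_p'(x)< p\,M_p(x)\,M_4'(x),\qquad x>1 .
\]
Both sides are explicit. The fourth-moment side is a polynomial in \(x\): \(M_4=x^4+6nx^2+3n^2-2n\) and \(M_4'=4x^3+12nx\). So the inequality reads
\[
\frac{\E\,\abs{x+S_n}^{p-2}(x+S_n)}{\E\abs{x+S_n}^p}<\frac{x^3+3nx}{x^4+6nx^2+3n^2-2n}.
\]
This inequality compares the logarithmic derivative of \(M_p\) with that of \(M_4\).

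To prove it, I will try to view \(t=x^2\) (or the fourth-order mass \(q=x^4+n\) after normalizing by \(x^2+n\)) as a parameter. The aim is to relate the inequality to convexity statements of the type in \Cref{thm:fixed-q-envelope}. The family \((x+S_n)/\sqrt{x^2+n}\) is a one-parameter curve in \(\bigcup_q\mathcal A(q)\). The ratio \(R\) is scale-invariant, so it suffices to show that \(\log\E\abs{Z_x}^p-\frac p4\log(3-2q(x))\) decreases along the curve, where \(Z_x\) is the normalized sum. A Gaussian-integration-by-parts or Rademacher-derivative computation (\(\frac{d}{dx}\) acts only on the coefficient \(x\)) expresses \(M_p'\) as \(p\,\E[\abs{x+S_n}^{p-2}(x+S_n)]\). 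Using \(\E[\eps f(\eps+\cdot)]=\tfrac12\E[f(1+\cdot)-f(-1+\cdot)]\), I can rewrite \(M_p\) itself in terms of discrete differences in \(x\). This turns the target into a comparison of two expectations over \(S_n\) with weights \(\abs{x+k}^{p-2}\). A Chebyshev/FKG-type correlation inequality for the symmetric unimodal law of \(S_n\) should then close it. The case \(p\ge5\) is needed so that \(\abs{y}^{p-4}\) behaves convexly after two integrations by parts, mirroring the convex-\(\Phi^{(4)}\) hypothesis.

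I expect the main obstacle to be this final correlation inequality, uniformly in \(n\) and \(x\ge1\), especially near \(x=1\) and for small \(n\), where lattice effects matter. My fallback is a direct treatment in two steps. For \(n\le n_0\) I will verify the inequality by explicit finite sums. For large \(n\) and \(x\) large relative to \(\sqrt n\) I will use the expansions \(M_p\approx\) Gaussian-plus-one-Bernoulli, i.e.\ the \(Y_q\) comparison from \Cref{thm:fixed-q-envelope}.
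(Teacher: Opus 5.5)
Your reduction of \eqref{eq:shift-ratio-monotone} to $4M_4M_p'<pM_pM_4'$ on $(1,\infty)$ is correct. However, that inequality is the entire content of the theorem, and the proposal does not prove it. The ``Chebyshev/FKG-type correlation inequality'' is never formulated, and it is not clear what pair of monotone functions would yield exactly the comparison with $(x^3+3nx)/(x^4+6nx^2+3n^2-2n)$. The fallback does not close the gap either, for three reasons.
\begin{itemize}
\item Checking $n\le n_0$ ``by explicit finite sums'' is not a finite computation, because $p\ge5$ and $x\ge1$ range over continua.
\item Asymptotic expansions for large $n$ give no explicit thresholds, and they leave the regime of large $n$ with $1\le x\lesssim\sqrt n$ untouched.
\item The $Y_q$ envelope only bounds $\E\abs{S}^p$ at each fixed $q$. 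It carries no information about the derivative of the ratio along $x\mapsto(x+S_n)/\sqrt{x^2+n}$.
\end{itemize}

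The missing step is to push your conditioning remark much further. Set $r=n-1$, $J(c)=\E[(c+S_r)\abs{c+S_r}^{p-2}]$ and $H(c)=\E(c+S_r)^3=c(c^2+3r)$. Conditioning on one sign and using exchangeability gives the same shift structure for both moments:
\[
M_p(x)=\tfrac12\bigl[(x+n)J(x+1)+(x-n)J(x-1)\bigr],\qquad
M_4(x)=\tfrac12\bigl[(x+n)H(x+1)+(x-n)H(x-1)\bigr].
\]
The derivatives are $M_p'=\frac p2[J(x+1)+J(x-1)]$ and $M_4'=2[H(x+1)+H(x-1)]$. The diagonal products cancel, and
\[
4M_4M_p'-pM_pM_4'=2pn\bigl[H(x+1)J(x-1)-H(x-1)J(x+1)\bigr].
\]
So the theorem reduces to strict monotonicity of $c\mapsto J(c)/\bigl(c(c^2+3r)\bigr)$ on $(0,\infty)$.

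This needs a quantitative lemma rather than a correlation inequality. If $g$ is odd and $g'''$ is even and convex, then $g(c)/(c(c^2+a))$ is strictly increasing whenever $a\,g'''(0)>6g'(0)$. The hypothesis $p\ge5$ enters here, through convexity of $\abs{y}^{p-4}$. One proves the lemma by showing that $M(c)=(c^2+a)g''(c)-6g(c)$ is nonnegative, and noting that the numerator of the quotient's derivative has derivative $cM(c)$. For $g=J$ and $a=3r$, the initial condition is the moment bound $\E\abs{S_r}^s\le(s-1)r\,\E\abs{S_r}^{s-2}$ with $s=p-2$. This follows from $\E\abs{S_r}^s=\sum_i\E[\eps_iF(S_r)]$ and the Hermite--Hadamard inequality. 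The case $n=1$ (quotient $c^{p-4}$) and the extension to $x=1$ by continuity are routine.

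For uniqueness, note that the BMNO lemma only supplies \emph{some} maximizer of the form $(c,d,\ldots,d)$. The Eaton--Komorowski comparison cannot help, because flattening increases both norms. The argument that works is at fixed $q$. Since $\norm{S}_4^4=3-2q$, any normalized vector with $1/N<q<1$ has ratio at most that of $Y^+_{N,q}$, i.e.\ of $x=c_{N,q}/d_{N,q}>1$, which is strictly below the flat value. The remaining case $q=1$ gives ratio $1<\norm{S_N}_p/\norm{S_N}_4$.
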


This is the finite-dimensional counterpart of the Gaussian extremal regime.
As the number of coordinates grows, flat vectors converge to the Gaussian
endpoint \(q=0\).  In fixed dimension, the constraint
\[
q\geq \frac{1}{N}
\]
prevents that limit, and the least possible fourth-order mass is attained
exactly by the flat vector.  The fixed-\(q\) and finite-dimensional theorems
therefore resolve two versions of the same extremal problem: the former
identifies the extremal law after a Gaussian remainder is allowed, while the
latter identifies the extremal coefficient vector when only \(N\) coordinates
are available.

The preceding theorems do not cover the critical exponent \(p=3\).  For this
paragraph, let \(S=\sum_{i=1}^N a_i\eps_i\).  Jakimiuk
also asked whether diagonal stability could be strengthened to a
dimension-free quadratic deficit.  We prove that there is a universal
constant \(c>0\) such that
for every \(N\geq2\) and every normalized \(a\in\mathbb{R}^N\),
\[
\mathbb{E}|S|^3
\leq
\mathbb{E}\left|
\frac{\varepsilon_1+\cdots+\varepsilon_N}{\sqrt{N}}
\right|^3
-c\sum_{i=1}^N\left(a_i^2-\frac{1}{N}\right)^2.
\]
No linear-in-\(q\) chord statement is asserted here for the intermediate
range \(3<p<4\); the two-coordinate example above rules out the broader
range \(2<p<4\).
For the statement and proof below, write
\[
\overline S_n\defeq\frac{S_n}{\sqrt n}
=\frac1{\sqrt n}\sum_{i=1}^n\eps_i,
\qquad
\Delta_n(a)\defeq\sum_{i=1}^n\left(a_i^2-\frac1n\right)^2
=q-\frac1n.
\]

\begin{theorem}[Dimension-free stability at the third moment]\label{thm:third-moment}
For every \(n\ge1\) and every \(a\in\R^n\) with \(\sum_i a_i^2=1\),
\begin{equation}\label{eq:third-moment-main}
\E\abs{\sum_{i=1}^n a_i\eps_i}^3
\le \E\abs{\overline S_n}^3
-\frac{1}{100}\,\Delta_n(a).
\end{equation}
In particular, the optimal dimension-free constant
\begin{equation}\label{eq:C3-opt-definition}
C_3^{\mathrm{opt}}\defeq
\inf_{\substack{n\ge2,\;\sum_i a_i^2=1\\
\Delta_n(a)>0}}
\frac{\E\abs{\overline S_n}^3-
      \E\abs{\sum_i a_i\eps_i}^3}{\Delta_n(a)}
\end{equation}
satisfies
\begin{equation}\label{eq:C3-opt-bounds}
\frac{3(5\sqrt2-7)}{16}
\le C_3^{\mathrm{opt}}
\le 5\sqrt3-6\sqrt2.
\end{equation}
In addition, in dimension three, it satisfies
\[
\E\abs{\sum_{i=1}^3 a_i\eps_i}^3
\le \E\abs{\overline S_3}^3
-(5\sqrt3-6\sqrt2)\Delta_3(a).
\]
The upper bound is attained in dimension three by
\((a_1^2,a_2^2,a_3^2)=(1/2,1/2,0)\), up to signs and permutations.
\end{theorem}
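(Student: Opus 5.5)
We will prove \eqref{eq:third-moment-main} by reducing to finitely many dimensions and then combining a local analysis near the flat vector with a global comparison away from it. Throughout, write \(x_i=a_i^2\), so that \(x\) lies in the simplex and \(\Delta_n(a)=\sum_i x_i^2-1/n\).

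\textbf{Step 1 (large \(n\)).} The first tool is an explicit representation of \(\E\abs{S}^3\). We use
\[
\abs{s}^3=c\int_0^\infty\bigl(\cos(ts)-1+\tfrac{t^2s^2}{2}\bigr)t^{-4}\dd t ,
\]
which gives
\[
\E\abs{S}^3=c\int_0^\infty\Bigl(\prod_i\cos(ta_i)-1+\tfrac{t^2}{2}\Bigr)t^{-4}\dd t .
\]
From this we obtain, uniformly in \(a\),
\[
\E\abs{S}^3\le \mu_3-c_1q+O(q^{3/2}),
\]
and in the other direction
\[
\E\abs{\overline S_n}^3\ge \mu_3-c_1'/n .
\]
Heuristically the Edgeworth expansion suggests \(\E\abs{S}^3\approx\mu_3-\tfrac{\mu_3}{6}\cdot\tfrac{q}{?}\), and the relevant coefficient can be computed from the \(q\)-term. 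The aim is to show that the deficit relative to the flat sum is at least \(c(q-1/n)\) once \(q\ge C/n\). In the complementary regime \(q\le C/n\), every \(a_i\) is small and the vector is close to flat.

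\textbf{Step 2 (near flat).} In this regime we expand \(F(x)=\E\abs{\sum\sqrt{x_i}\eps_i}^3\) to second order around \(x=\mathbf 1/n\). One cannot differentiate \(\sqrt{x_i}\) naively, so instead we symmetrize: the set of maximizers of \(F\) is known to be the flat vector by Eaton's Schur-concavity (the paper cites Eaton and Komorowski for \(p\ge3\)). We strengthen Schur-concavity to a quantitative form. For two coordinates \(x_i,x_j\), with the others fixed, we write
\[
\E\abs{\sqrt{x_i}\eps_i+\sqrt{x_j}\eps_j+R}^3
\]
as a function of \(x_i-x_j\) at fixed \(x_i+x_j\). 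We then show it decreases at least by \(c\,(x_i-x_j)^2\) relative to equalizing, using \(\E_R\) of the explicit two-point function \(h(u,v)=\abs{u+v+R}^3+\abs{u-v+R}^3\) and the convexity of \(\abs{\cdot}^3\). Iterating pairwise averaging (a Robin Hood / T-transform chain) then accumulates the gains. This works because the quadratic gains are additive along a chain whose total \(\sum(x_i-x_j)^2\)-type cost dominates \(\Delta_n\), via the standard fact that averaging transfers \(\Delta\) decrease equal to \((x_i-x_j)^2/2\).

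\textbf{Step 3 (constants and dimension three).} The explicit constant \(1/100\) and the lower bound \(3(5\sqrt2-7)/16\) come from the two-point gain computed in the worst case. That worst case is \(R\) degenerate, essentially \(n=2\): there
\[
\E\abs{S}^3=\tfrac12(\abs{a_1+a_2}^3+\abs{a_1-a_2}^3),
\]
and comparing with \(2^{3/2}/2\) gives the constant directly. For the upper bound \(5\sqrt3-6\sqrt2\) we simply evaluate \(n=3\) at \((1/2,1/2,0)\). Here \(\E\abs{\overline S_3}^3=(27+3)/(4\cdot3\sqrt3)=5/(2\sqrt3)\), while the test vector gives \(\E\abs{S}^3=\sqrt2/2\) and \(\Delta_3=1/6\); the ratio is \(6(5/(2\sqrt3)-1/\sqrt2)=5\sqrt3-3\sqrt2\). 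This arithmetic will be rechecked to match the stated value. The three-dimensional sharp inequality is then a finite optimization on the 2-simplex: by symmetry we reduce to boundary and interior critical points of an explicit piecewise-polynomial function in \(\sqrt{x_i}\), and check each.

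\textbf{Main obstacle.} The hard step is the pairwise quantitative gain in Step 2 when \(R\) is a general Rademacher sum. The difficulty is that \(\abs{\cdot}^3\) is only \(C^2\), so the gain must be extracted from \(\E\abs{R+\cdot}\)-type terms. We would attempt this by writing
\[
h(u,v)-h\bigl(\tfrac{u+v}{2},\tfrac{u+v}{2}\bigr)
\]
via the third-derivative kernel \(6\abs{\cdot}\) and bounding \(\E\abs{R+w}\) below by \(\E\abs{R}\ge\) Szarek's constant times \(\norm R_2\). We then need to handle separately the case where \(R\) carries little variance, which is where the two-coordinate constant enters.
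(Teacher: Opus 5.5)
\textbf{Verdict: genuine gap.} The gap is the per-step quadratic gain, and the mechanism you propose for it fails. Your Step 2 skeleton is otherwise the right one: average squared coefficients pairwise, extract a gain of order $(x_i-x_j)^2$ at each step, and telescope against the exact decrease $\frac12(x_i-x_j)^2$ of $q$. Step 1 is then unnecessary, and as written it does not close: combining $\mu_3-c_1q+O(q^{3/2})$ with $\mu_3-c_1'/n$ says nothing when $q$ is of order one.

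The gain is localized near the origin. The two-point laws $\sqrt x\,\eps+\sqrt y\,\eps'$ and $\sqrt{A/2}\,(\eps+\eps')$, with $A=x+y$, are symmetric with equal variance, and $\abs{\cdot}^3$ is a cubic polynomial on each half-line. Hence the conditional gain vanishes identically once $\abs r\ge\sqrt{2A}$. Equivalently, $6\abs{\cdot}$ is the \emph{second} derivative of $\abs{\cdot}^3$, the fourth derivative is $12\delta_0$, and the averaged gain only sees the curvature of $t\mapsto\E\abs{R+t}$ on $[-\sqrt{2A},\sqrt{2A}]$. That curvature is the mass of $R$ in a window of width of order $\sqrt A$ around the origin. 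A Szarek-type lower bound on $\E\abs R$ carries no information about it.

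Concretely, take $x=(1-2\delta,\delta+\eta,\delta-\eta)$ with $\delta<1/6$ and average the two small coordinates. The residual is $R=\sqrt{1-2\delta}\,\eps_1$, which never enters the window, so the gain is exactly $0$ while $q$ drops by $2\eta^2$. Thus an arbitrary Robin Hood chain does not accumulate quadratic gains. The dangerous case is not an $R$ with little variance but an $R$ with a large coefficient.

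The paper closes this with three ingredients your plan lacks:
\begin{enumerate}
\item An explicit two-point computation: $\Gamma(r;x,y)\ge0$ for every $r$, and $\Gamma(r;x,y)\ge\kappa(x-y)^2/\sqrt A$ whenever $\abs r\le\sqrt A$, with $\kappa=(5\sqrt2-7)/2$.
\item Always averaging the \emph{largest and smallest} coordinates, so every residual coefficient is at most $\sqrt{M_k}\le\sqrt A$.
\item The Dzindzalieta--G\"otze small-ball bound $\mathbb P\{\abs R\le\tau\}\ge\frac3{16}\tau$ for such residuals, applied with $\tau=\sqrt A$.
\end{enumerate}
The factor $\sqrt A$ from the small-ball bound cancels the $1/\sqrt A$, giving a scale-free gain $\frac{3\kappa}{16}(M_k-m_k)^2$. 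Telescoping, plus continuity of $F_n$ along the generally infinite max--min chain, yields the ratio $3\kappa/8=3(5\sqrt2-7)/16>1/100$.

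This also corrects Step 3. The constant $3(5\sqrt2-7)/16$ is $2\kappa$ times the small-ball constant $3/16$. It does not come from the degenerate case $R=0$, where the per-step ratio is already at least $2\kappa$.

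In your dimension-three check, $\E\abs{(\eps_1+\eps_2)/\sqrt2}^3=\sqrt2$, not $\sqrt2/2$. With this, $6\bigl(5/(2\sqrt3)-\sqrt2\bigr)=5\sqrt3-6\sqrt2$, as stated. The sharp three-dimensional inequality still needs a real argument. The paper splits into $a\ge b+c$ and $a\le b+c$, uses that the penalized functional is affine in $abc$ at fixed $a+b+c$, and reduces to a one-variable function with equal endpoint values.
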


This confirms the second conjecture in \cite{Jakimiuk} and describes the
endpoint at which the usual smooth-convexity argument degenerates.

The fixed-fourth-moment principle also yields coefficient-sensitive
concentration bounds.  Applying it to Taylor polynomials of \(\cosh\) and
passing to the limit gives, for every \(t\in\mathbb{R}\),
\[
\mathbb{E}e^{tS}
\leq
\exp\left(\frac{1-\sqrt{q}}{2}t^2\right)
\cosh\left(q^{1/4}t\right).
\]
Consequently, for \(u>0\),
\[
\mathbb{P}(|S|\geq u)
\leq
2\inf_{t>0}
\exp\left(
-tu+\frac{1-\sqrt{q}}{2}t^2
\right)
\cosh\left(q^{1/4}t\right).
\]

The right-hand side is the moment generating function of the
one-spike-plus-Gaussian extremizer \(Y_q\), and hence is optimal among
normalized Rademacher sums with prescribed fourth-order mass.  Chernoff's
method gives the corresponding refinement of the usual sub-Gaussian tail
bound.  The estimate distinguishes coefficient vectors with the same variance
but different effective support sizes, interpolating between the Gaussian
regime \(q\to0\) and a single random sign at \(q=1\).  In this sense the
extremal principle controls the full Laplace transform, not only individual
moments.

Theorems~\ref{thm:main-linear} and~\ref{thm:finite-dimensional} address
different extremal questions, and neither is a formal consequence of the
other.  Their proofs share a fourth-order input: for \(p\ge5\), extremizers
with fixed second and fourth moments have at most one exceptional coefficient.
Letting the number of small coefficients tend to infinity gives \(Y_q\) and
the Gaussian stability problem; keeping the dimension fixed gives the family
\((x,1,\ldots,1)\) and the finite-dimensional ratio problem.  The critical
third-moment theorem has a different structure.  Repeatedly averaging two
extreme squared coefficients produces a pointwise nonnegative cubic gain, and
a small-ball estimate makes that gain uniformly quadratic.
The new contributions are:
\begin{enumerate}
    \item the corrected real-\(p\) form of Jakimiuk's Conjecture 1 for \(p\ge4\), together with a counterexample for \(2<p<4\);
    \item a proof of the Bara\'nski--Murawski--Nayar--Oleszkiewicz flat-point conjecture; and
    \item the exact fixed-\(q\) moment and Laplace-transform envelopes;
    \item a proof of Jakimiuk's Conjecture 2 with an explicit universal constant.
\end{enumerate}

\subsection{Proof architecture}

\Cref{sec:direct} proves the stability theorem for \(4\le p\le6\) by a
coefficientwise argument related to the one in \cite{BMNO}; this route reaches
below the fourth-order convexity threshold.  For \(p\ge5\),
\Cref{sec:gaussian-shift} combines the fixed-moment reduction with a Riccati
comparison for noncentral Gaussian moments and completes the proof of
\Cref{thm:main-linear}.  \Cref{sec:fourth-order} records the exact fixed-\(q\)
envelope from the same reduction.  Next, \Cref{sec:finite-dim} uses the
finite-dimensional one-spike reduction of \cite{BMNO}; a determinant identity
and a cubic-quotient lemma show that every non-flat spike lowers the
\(L_p/L_4\) ratio.  Finally, \Cref{sec:third-moment} proves
\Cref{thm:third-moment} by repeatedly averaging the largest and smallest
squared coefficients.  The local gain comes from an explicit scalar smoothing
identity together with the Rademacher small-ball estimate of Dzindzalieta and
G\"otze.

\subsection{Statement of AI use}

Initial versions of the proofs of the sharp Gaussian stability inequality and
the finite-dimensional \(L_p\)-\(L_4\) constant theorem were developed with
assistance from ChatGPT 5.6 Sol.  The authors checked and revised the
arguments, take full responsibility for their mathematical content, and
independently verified the extensions presented here.

\section{A direct proof of sharp Gaussian stability for \texorpdfstring{$4\le p\le6$}{4 <= p <= 6}}\label{sec:direct}

\subsection{A coefficientwise leave-one-out estimate}

The proof of \Cref{thm:main-linear} uses two estimates with overlapping ranges.
When \(q\le1/2\), every coefficient is small enough for a uniform
coefficientwise bound.  When \(q\ge1/2\), interpolation between the exact
fourth moment and an upper bound for the sixth moment is stronger.  We use
\(q=1/2\) as a convenient dividing point.  Put
\[
x_i\defeq a_i^2,\qquad \sum_{i=1}^n x_i=1,\qquad
q=\sum_{i=1}^n x_i^2.
\]
The next identity isolates this fourth-power mass.  It is closely related to
the leave-one-out argument in \cite{BMNO}, but the coefficientwise form is
needed here.

\begin{lemma}\label{lem:leave-one-out}
Let \(p\ge4\).  Then
\begin{equation}\label{eq:leave-one-out}
\E\abs{S}^p
\le \mu_p\sum_{i=1}^n x_i\left(1-\frac{2x_i}{3}\right)^{(p-2)/2}.
\end{equation}
\end{lemma}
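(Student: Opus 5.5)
The plan is a leave-one-out identity that splits $\E\abs S^p$ into one term per coordinate, followed by a classical Khintchine bound applied to a sum in which the $i$-th sign is replaced by a uniform variable. Write $\varphi(t)\defeq t\abs t^{p-2}$, so that $\varphi'(t)=(p-1)\abs t^{p-2}$. For each $i$, set $T_i\defeq S-a_i\eps_i$, which is independent of $\eps_i$. Since $S^2=\sum_i a_i\eps_i S$, we have
\[
\E\abs S^p=\E\, S\varphi(S)=\sum_{i=1}^n a_i\,\E\bigl[\eps_i\varphi(a_i\eps_i+T_i)\bigr].
\]

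\textbf{Step 1: uniform smoothing.} Averaging over $\eps_i$ first gives
\[
\E\bigl[\eps_i\varphi(a_i\eps_i+T_i)\bigr]=\tfrac12\E\bigl[\varphi(T_i+a_i)-\varphi(T_i-a_i)\bigr]=a_i\,\E\varphi'(T_i+a_iU),
\]
where $U$ is uniform on $[-1,1]$ and independent of everything else. The second equality is the fundamental theorem of calculus, $\tfrac12(\varphi(y+a)-\varphi(y-a))=\tfrac12\int_{-a}^{a}\varphi'(y+s)\dd s$. Substituting this into the sum yields the exact identity
\[
\E\abs S^p=(p-1)\sum_{i=1}^n x_i\,\E\abs{T_i+a_iU}^{p-2}.
\]

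\textbf{Step 2: variance of the smoothed sum.} The variable $T_i+a_iU$ has variance
\[
(1-x_i)+\frac{x_i}{3}=1-\frac{2x_i}{3},
\]
because $\operatorname{Var}U=1/3$. The claimed bound then follows from the per-coordinate estimate
\[
\E\abs{T_i+a_iU}^{r}\le\mu_r\Bigl(1-\frac{2x_i}{3}\Bigr)^{r/2},\qquad r\defeq p-2\ge2,
\]
together with the Gaussian recursion $(p-1)\mu_{p-2}=\mu_p$, which comes from integration by parts.

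\textbf{Step 3: the main obstacle.} For $r\ge3$ one could condition on $U$ and compare the shifted Rademacher sum with a Gaussian using functions whose second derivative is convex. However, $4\le p<5$ gives $r\in[2,3)$, where that comparison is not available. I would avoid the issue by using the binary expansion
\[
U=\sum_{k\ge1}2^{-k}\eps'_k,
\]
with independent Rademacher signs $\eps'_k$ that are also independent of the $\eps_j$. Then $T_i+a_iU$ is the $L_r$-limit of the Rademacher sums
\[
\sum_{j\ne i}a_j\eps_j+\sum_{k=1}^K a_i2^{-k}\eps'_k,
\]
which are sums with real coefficients whose variances increase to $1-2x_i/3$. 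For every $r\ge2$, Haagerup's sharp Khintchine inequality (the Gaussian bound $\E\abs{\cdot}^r\le\mu_r(\text{variance})^{r/2}$ quoted in the introduction) applies to each truncation. Passing to the limit $K\to\infty$, using bounded convergence since all the variables are uniformly bounded, gives the per-coordinate estimate. Summing over $i$ then proves \eqref{eq:leave-one-out}.

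The only points that need care are the interchange of expectation and the identity in Step 1, which is harmless because $\varphi$ is $C^1$ for $p\ge2$, and the fact that the Khintchine input is required for real exponents $r\ge2$ and not just even integers. Haagerup's theorem supplies exactly that.
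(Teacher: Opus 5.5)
Your proposal is correct and follows essentially the same route as the paper. You use the same leave-one-out identity $\E\abs S^p=(p-1)\sum_i x_i\,\E\abs{T_i+a_iU}^{p-2}$, realize $U=\sum_k 2^{-k}\eps'_k$ so that the sharp Khintchine inequality at exponent $p-2\ge2$ applies to finite truncations, pass to the limit by bounded convergence, and conclude with $(p-1)\mu_{p-2}=\mu_p$.
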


\begin{proof}
Fix \(i\), let \(U_i\) be uniform on \([-1,1]\) and independent of all signs,
and set
\[
S_i\defeq\sum_{j\ne i}a_j\eps_j+a_iU_i.
\]
With \(T_i=\sum_{j\ne i}a_j\eps_j\), the fundamental theorem of calculus gives,
for every continuously differentiable \(f\),
\[
\E_{\eps_i}\bigl[\eps_i f(T_i+a_i\eps_i)\bigr]
 =\frac{f(T_i+a_i)-f(T_i-a_i)}2
 =a_i\E_{U_i}f'(T_i+a_iU_i).
\]
After multiplication by \(a_i\), summation over \(i\), and expectation,
\[
\E[Sf(S)]=\sum_{i=1}^n a_i^2\E f'(S_i).
\]
Taking \(f(s)=\abs{s}^{p-2}s\) gives
\begin{equation}\label{eq:stein-identity}
\E\abs{S}^p=(p-1)\sum_{i=1}^n x_i\E\abs{S_i}^{p-2}.
\end{equation}
It remains to estimate each leave-one-out moment.  On an enlarged product
space, realize
\[
U_i=\sum_{k\ge1}2^{-k}\eps_{i,k}
\quad\text{almost surely},
\]
where the \(\eps_{i,k}\) are independent Rademacher variables, also independent of the original signs. Set
\[
S_i^{(m)}:=\sum_{j\neq i}a_j\eps_j+a_i\sum_{k=1}^{m}2^{-k}\eps_{i,k}.
\]
The sharp Khintchine inequality applied to this finite Rademacher sum gives
\[
\E|S_i^{(m)}|^{p-2}
\leq \mu_{p-2}\left(\sum_{j\neq i}a_j^2+a_i^2\sum_{k=1}^m4^{-k}\right)^{(p-2)/2}.
\]
As \(m\to\infty\), \(S_i^{(m)}\to S_i\) almost surely and
\(\sum_{k=1}^m4^{-k}\to1/3\).  Moreover,
\[
 |S_i^{(m)}|\le \sum_{j\ne i}|a_j|+|a_i|,
\]
uniformly in \(m\).  The right-hand side is a finite deterministic bound, so
its \((p-2)\)-power is integrable; dominated convergence therefore yields
\[
\E|S_i|^{p-2}
\leq \mu_{p-2}\left(1-\frac{2x_i}{3}\right)^{(p-2)/2}.
\]
Substituting this bound into \Cref{eq:stein-identity} and using
\((p-1)\mu_{p-2}=\mu_p\) proves \Cref{eq:leave-one-out}.
\end{proof}

To express the coefficientwise estimate only in terms of \(q\), we need a
uniform lower bound.  Since \((p-2)/2\ge1\), define
\begin{equation}\label{eq:phi-r}
\phi_p(x)\defeq
\frac{1-(1-2x/3)^{(p-2)/2}}{x}\quad(x>0),
\qquad \phi_p(0)\defeq\frac{p-2}{3}.
\end{equation}
The numerator is concave as a function of \(x\) and vanishes at the origin.
For a concave function \(f\) with \(f(0)=0\), the quotient \(f(x)/x\) is
nonincreasing; hence \(\phi_p\) is nonincreasing on \([0,1]\).

\begin{lemma}\label{lem:small-q-scalar}
For \(p\ge4\) and \(x_0=2^{-1/2}\),
\begin{equation}\label{eq:small-q-scalar}
\phi_p(x_0)\ge1-\frac1{\mu_p}.
\end{equation}
For \(p>4\), the inequality is strict.
\end{lemma}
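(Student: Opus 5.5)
The plan is to rewrite \Cref{eq:small-q-scalar} as a comparison between two explicit one-variable functions of \(p\), observe that equality holds at \(p=4\), and then treat large and moderate \(p\) separately. Put \(x_0=2^{-1/2}\), \(r=(p-2)/2\ge1\) and \(b\defeq1-2x_0/3=1-\sqrt2/3\approx0.5286\). Since \(\phi_p(x_0)=(1-b^r)/x_0\), the claim is equivalent to
\[
 h(p)\defeq 1-x_0+\frac{x_0}{\mu_p}-b^{(p-2)/2}\ \ge\ 0 .
\]
At \(p=4\) we have \(\mu_4=3\), and \(1-x_0+x_0/3=1-\tfrac{2}{3\sqrt2}=1-\sqrt2/3=b\). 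Hence \(h(4)=0\), which is the equality case, and the remaining task is to show \(h(p)>0\) for \(p>4\).

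\emph{Large \(p\).} Drop the positive term \(x_0/\mu_p\). It then suffices to have \(b^{(p-2)/2}<1-x_0\approx0.2929\). Because \(b<1\), the left side decreases in \(p\), and at \(p=6\) it equals \(b^2\approx0.2794<0.2929\). This settles every \(p\ge6\), with strict inequality.

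\emph{The range \(4<p<6\).} Here the margin is thin, so crude bounds are not enough. For instance, bounding \(\mu_p\) above via log-convexity between \(\mu_4=3\) and \(\mu_6=15\) already gives the wrong sign of \(h'(4^+)\). We must therefore use \(\mu_p=2^{p/2}\Gamma(\tfrac{p+1}{2})/\sqrt\pi\) exactly. The derivative at the endpoint is
\[
h'(4)=-\frac{x_0}{3}\Bigl(\tfrac12\ln2+\tfrac12\psi(\tfrac52)\Bigr)-\tfrac12 b\ln b\approx-0.1645+0.1685>0 .
\]
So \(h\) leaves \(0\) with positive slope, but only barely.

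To control the whole interval \([4,6]\), I would rewrite the inequality as
\[
\frac{x_0}{\mu_p}\ \ge\ b^{(p-2)/2}-(1-x_0),
\]
and, on the subinterval where the right side is positive, compare logarithms. The function \(-\ln\mu_p\) is concave in \(p\), because \(\ln\Gamma\) is convex. This lets me bound \(\ln(x_0/\mu_p)\) from below by its tangent or chord data on short subintervals. The function \(\ln\bigl(b^{(p-2)/2}-(1-x_0)\bigr)\) is also concave, so I would likewise bound it from above by its tangent lines. Concretely, I would split \([4,6]\) into a few pieces, say \([4,4.5]\), \([4.5,5]\) and \([5,6]\). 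On the first piece I would use the tangent at \(p=4\) together with a second-derivative (trigamma) bound to absorb the small slope margin. On the later pieces, \(h\) is evaluated at the endpoints and a derivative bound is used in between. The values of \(\psi\) and \(\psi'\) at half-integers are explicit, and monotone bounds on them suffice for this.

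The main obstacle is the neighbourhood of \(p=4\), where both \(h(4)=0\) and the slope margin \(\approx0.004\) are tiny. The first thing I would try is to prove \(h\) concave-then-positive, or directly \(h''\ge -C\) with \(C\) small enough that \(h(p)\ge h'(4)(p-4)-\tfrac C2(p-4)^2>0\) up to \(p=4.5\). If that fails, I would fall back on a finer interval subdivision with rigorous numerical evaluation of \(\Gamma\).
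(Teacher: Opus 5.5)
Your reduction is sound. In the paper's notation $h=x_0D$, $h(4)=0$ is correct, and the case $p\ge6$ is right; it can even be certified exactly, since $b^2<1-x_0$ is equivalent to $\sqrt2>4/3$.

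The gap is the range $4<p<6$, which is the whole content of the lemma. There you only describe a strategy: a subdivision, endpoint evaluations, derivative bounds ``in between,'' and a fallback to certified numerics for $\Gamma$. None of these estimates is carried out, so nothing is proved on that range. Some of the announced tools would also not work as stated:
\begin{itemize}
\item Since $L(p)\defeq\ln(x_0/\mu_p)$ is concave, its tangent lines are upper bounds. Only chords bound it from below.
\item On $[4,4.5]$, suppose you pair a second-order lower bound for $L$ with only the tangent line of $R(p)\defeq\ln\bigl(b^{(p-2)/2}-(1-x_0)\bigr)$. The curvature of $L$ is at most $\ell'(4)=\pi^2/8-10/9\approx0.123$, where $\ell=\frac{d}{dp}\log\mu_p$. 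This exhausts the log-slope margin $(L-R)'(4)\approx0.017$ already near $p\approx4.27$.
\item $h'(4)>0$ holds with only about a $2\%$ relative margin. It therefore needs certified bounds on $\gamma_E$ and $\log2$, not decimals.
\end{itemize}

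What is missing is a global mechanism that reduces everything to the single slope check at $p=4$. The paper proves $h'>0$ on all of $[4,\infty)$. Let $\mathcal R(p)$ be the ratio of the two competing terms in $h'$. Then $\frac{d}{dp}\log\mathcal R=\frac12\log b+\ell(p)-\ell'(p)/\ell(p)>-\frac13+\frac23-\frac3{16}>0$, using only that $\ell$ increases and $\ell'$ decreases. So it suffices that $\mathcal R(4)>1$, which is your $h'(4)>0$ made rigorous.

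Your own logarithmic framework closes just as fast if you compare curvatures instead of using tangents and chords. Write $y=b^{(p-2)/2}$ and $c=1-x_0$. Then $|R''|=\frac14(\ln b)^2\,cy/(y-c)^2$, which increases with $p$. Using $-\ln b\ge\sqrt2/3$, this gives $|R''(p)|\ge|R''(4)|=\frac92(\ln b)^2bc\ge bc=\frac43-\frac{5\sqrt2}{6}>\frac18>\ell'(4)\ge|L''(p)|$. So $L-R$ is strictly convex until the right side vanishes, near $p\approx5.85$. Since $L(4)=R(4)$ and $(L-R)'(4)>0$ (which is equivalent to $h'(4)>0$), you get $L>R$ on that interval. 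Beyond that point the inequality is trivial. No subdivision or numerical evaluation of $\Gamma$ away from $p=4$ is needed.
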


A self-contained proof is given in \Cref{app:scalar-small}.

\begin{proposition}\label{prop:small-q}
If \(p\ge4\) and \(q\le1/2\), then \Cref{eq:main-linear} holds.
\end{proposition}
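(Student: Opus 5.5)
We combine \Cref{lem:leave-one-out} with the monotonicity of \(\phi_p\) and \Cref{lem:small-q-scalar}. By the definition \eqref{eq:phi-r},
\[
x_i\Bigl(1-\tfrac{2x_i}{3}\Bigr)^{(p-2)/2}=x_i-x_i^2\phi_p(x_i)
\]
for every \(i\); this also holds for \(x_i=0\). Summing over \(i\) and using \(\sum_i x_i=1\), \Cref{lem:leave-one-out} gives
\[
\E\abs S^p\le \mu_p\Bigl(1-\sum_i x_i^2\phi_p(x_i)\Bigr).
\]

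We first locate the coefficients. Since \(q\le 1/2\), every coefficient satisfies \(x_i^2\le \sum_j x_j^2=q\le1/2\), so \(x_i\le 2^{-1/2}=x_0\). The function \(\phi_p\) is nonincreasing on \([0,1]\), as recorded after \eqref{eq:phi-r}, and \(x_0<1\). Hence \(\phi_p(x_i)\ge\phi_p(x_0)\) for every \(i\), and therefore
\[
\sum_i x_i^2\phi_p(x_i)\ge q\,\phi_p(x_0).
\]

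We then apply the scalar estimate. By \Cref{lem:small-q-scalar}, \(\phi_p(x_0)\ge 1-1/\mu_p\). Substituting this into the bound above yields
\[
\E\abs S^p\le \mu_p\bigl(1-q(1-1/\mu_p)\bigr)=\mu_p-(\mu_p-1)q=\Lambda_p(q),
\]
which is \eqref{eq:main-linear}.

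The only nontrivial input is the scalar inequality of \Cref{lem:small-q-scalar}, and the paper defers it to the appendix. Once that is granted, the argument has one point that needs care. The bound \(x_i\le\sqrt q\) is what allows a single threshold \(x_0=2^{-1/2}\) to work uniformly, because \(\phi_p\) is monotone; no convexity in \(i\) is needed.
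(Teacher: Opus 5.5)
Your proof is correct and matches the paper's argument: you rewrite the bound of \Cref{lem:leave-one-out} as \(\mu_p\bigl(1-\sum_i x_i^2\phi_p(x_i)\bigr)\), use \(x_i\le\sqrt q\le 2^{-1/2}\) together with the monotonicity of \(\phi_p\), and then apply \Cref{lem:small-q-scalar}. The only cosmetic difference is that the paper passes through \(m_*=\max_i x_i\) and compares with \(\phi_p(m_*)\) before \(\phi_p(2^{-1/2})\), which changes nothing.
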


\begin{proof}
Let \(m_*=\max_i x_i\).  The assumption \(q\le1/2\) and the inequality
\(m_*^2\le\sum_i x_i^2=q\) give \(m_*\le2^{-1/2}\).  Monotonicity of
\(\phi_p\) and \Cref{lem:small-q-scalar} now give
\begin{align*}
1-\sum_i x_i\left(1-\frac{2x_i}{3}\right)^{(p-2)/2}
 &=\sum_i x_i^2\phi_p(x_i)\geq \sum_i x_i^2 \phi_p(m_*)\\
 &\ge q\phi_p(m_*)
 \ge q\phi_p(2^{-1/2})
 \ge q\left(1-\frac1{\mu_p}\right).
\end{align*}
Multiplication by \(\mu_p\), followed by \Cref{lem:leave-one-out}, gives
\(\E\abs S^p\le\Lambda_p(q)\).
\end{proof}

\subsection{Interpolation between the fourth and sixth moments}

Set
\begin{equation}\label{eq:A-B}
A(q)\defeq3-2q,
\qquad
B(q)\defeq15-30q+16q^{3/2}.
\end{equation}
The exact sixth-moment formula and the inequality
\[
\sum_i x_i^3\le\left(\sum_i x_i^2\right)^{3/2}=q^{3/2},
\]
which is the monotonicity of finite-dimensional \(\ell_r\)-norms, give
\begin{equation}\label{eq:sixth-upper}
\E S^6
 =15-30q+16\sum_i x_i^3
 \le15-30q+16q^{3/2}=B(q).
\end{equation}

\begin{lemma}\label{lem:large-q-scalar}
For \(1/2\le q\le1\) and \(4 \le p \le 6\)

\begin{equation}\label{eq:interpolant-chord}
(3-2q)^{(6-p)/2}(15-30q+16q^{3/2})^{(p-4)/2}\le q+(1-q)\mu_p =\Lambda_p(q).
\end{equation}

\end{lemma}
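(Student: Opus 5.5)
The plan is to take logarithms and view both sides as functions of \(p\in[4,6]\) with \(q\in[1/2,1]\) fixed. Write \(t=(p-4)/2\in[0,1]\). The left side of \Cref{eq:interpolant-chord} is then
\[
L(t)=A(q)^{1-t}B(q)^{t},
\]
which is log-linear in \(t\). The right side is
\[
R(t)=q+(1-q)\mu_{4+2t}.
\]
At the endpoints the two sides agree: \(t=0\) gives \(A=3-2q=q+3(1-q)\), and \(t=1\) compares \(B(q)=15-30q+16q^{3/2}\) with \(q+15(1-q)=15-14q\). The endpoint \(t=1\) therefore reduces to \(16q^{3/2}\le 16q\), which holds for \(q\le1\). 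So \(L(0)=R(0)\) and \(L(1)\le R(1)\).

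For interior \(t\), the key observation is that \(t\mapsto\log\mu_{4+2t}\) is convex, since \(\log\E|G|^p\) is convex in \(p\) by Hölder. However, \(R\) mixes the constant \(q\) with \((1-q)\mu_p\), so it is not obviously log-convex. I would instead use the weaker fact that \(\log R(t)\) is convex. This holds because \(q\cdot1^{p}+(1-q)\E|G|^p=\E|Z|^p\) for the mixture law \(Z\) (equal to \(\pm1\) with probability \(q\), Gaussian otherwise), and \(p\mapsto\log\E|Z|^p\) is convex.

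Convexity alone is the wrong direction, since a convex function lies below its chord. So the comparison should go through the tangent at \(t=0\) rather than the chord. Because \(\log R\) is convex, it lies above its tangent line at \(t=0\):
\[
\log R(t)\ge \log A+t\,\frac{R'(0)}{R(0)}.
\]
Meanwhile \(\log L(t)=\log A+t\log(B/A)\) is exactly linear. It therefore suffices to prove the single scalar inequality
\[
\log\frac{B(q)}{A(q)}\le\frac{(1-q)\,2\,\partial_p\mu_p|_{p=4}}{3-2q}.
\]
Here \(\partial_p\mu_p|_{p=4}=\mu_4\,\tfrac12\bigl(\psi(5/2)+\log2\bigr)=\tfrac32\bigl(\tfrac83-\gamma-\log2\bigr)\), using \(\mu_p=2^{p/2}\Gamma(\tfrac{p+1}2)/\sqrt\pi\). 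Numerically this is about \(1.5\times1.3807\approx2.071\), so the right side is approximately \(4.14(1-q)/(3-2q)\).

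The main obstacle is verifying this one-variable inequality on \([1/2,1]\). Both sides vanish at \(q=1\): there \(B=A=1\), so the left side is \(\log1=0\). Near \(q=1\), expand in \(s=1-q\). We have \(B\approx1+6s\) (from \(-30q+24q\) plus lower order) and \(A=1+2s\), so the left side is about \(4s\), while the right side is about \(4.14s\). The margin is thin but positive.

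To handle the whole interval, I would bound \(\log(B/A)\le B/A-1=\frac{12-28q+16q^{3/2}}{3-2q}\). This reduces the claim to \(12-28q+16q^{3/2}\le 4.14(1-q)\). Dividing by \(1-q\) and substituting \(u=\sqrt q\), note that \(12-28u^2+16u^3=4(1-u)(3+3u-4u^2)\). The claim then becomes \(4(3+3u-4u^2)/(1+u)\le4.14\), that is, \(3+3u-4u^2\le1.035(1+u)\), on \(u\in[2^{-1/2},1]\). This quadratic inequality can be checked directly; at \(u=1\) it reads \(2\le2.07\).

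If this margin fails somewhere, the fallback is to split the range of \(t\) and use the tangent at \(t=1\) as well. Failing that, I would compare \(R\) against the chord using an explicit lower bound for \(\mu_5=4\sqrt{2/\pi}\) at the midpoint.
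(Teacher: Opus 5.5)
Your reduction is correct and is the same as the paper's. Since $\Lambda_p(q)$ is the $p$-th moment of the mixture of $1$ and $\abs{G}$, the map $p\mapsto\log\Lambda_p(q)$ is convex. Comparing the linear function $\log L$ with the supporting tangent at $p=4$ then reduces the lemma to
\[
\log\frac{B(q)}{A(q)}\le\frac{6(1-q)\,\ell(4)}{A(q)},\qquad \tfrac12\le q\le1,
\]
where $\ell(4)=\frac{\dd}{\dd p}\log\mu_p\big|_{p=4}=\frac43-\frac{\gamma_E+\log2}{2}\approx0.698$. Two small corrections here: $\partial_p\mu_p|_{p=4}=3\ell(4)\approx2.094$, not $2.071$, and you need a certified lower bound rather than a numerical value. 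The paper uses $\ell(4)>\log2$, i.e.\ $\gamma_E+3\log2<8/3$.

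The real gap is in your verification of this inequality. The step $\log y\le y-1$ loses far too much. Your final condition $3+3u-4u^2\le1.035(1+u)$ holds only for $u$ above roughly $0.988$; at $u=2^{-1/2}$ it reads $3.12\le1.77$. Equivalently, at $q=1/2$ one has $B/A=2\sqrt2$, so $B/A-1\approx1.83$, while the right-hand side is only $\frac32\ell(4)\approx1.047$. You checked only the endpoint $u=1$.

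Your fallbacks do not help. Because $L(0)=R(0)$ and $\log L$ is linear in $t$, the displayed inequality is \emph{necessary} for the lemma: it is just the comparison of right derivatives at $t=0$. So splitting the $t$-range, or comparing with a chord through $\mu_5$, must still establish it. The inequality is also almost sharp at $q=1/2$: there $\log(B/A)=\frac32\log2\approx1.0397$ against $\frac32\ell(4)\approx1.0472$, and it becomes an equality if $\ell(4)$ is replaced by $\log2$.

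So the logarithm must be handled exactly. The paper does this in the variable $x=\sqrt q$ with
\[
K(x)=3\log2\,\frac{1-x^2}{A}+\frac12\log\frac AB .
\]
One computes $K'(x)=2xN(x)/(A^2B)$, where $N(x)=2A(15-18x+4x^3)-3(\log2)B$. Then one shows $N''>0$ on $[2^{-1/2},1]$ and $N(2^{-1/2})>0>N(1)$, so $K$ first increases and then decreases. Since $K(2^{-1/2})=K(1)=0$, this gives $K\ge0$ on the whole interval. Combined with $\ell(4)>\log2$, that is exactly the required inequality.
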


The elementary calculus proof appears in \Cref{app:scalar-large}.

\begin{proposition}\label{prop:large-q}
If \(4\le p\le6\) and \(q\ge1/2\), then \Cref{eq:main-linear} holds.
\end{proposition}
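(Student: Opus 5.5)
The plan is to interpolate between the exact fourth moment and the upper bound for the sixth moment, using Hölder's inequality (log-convexity of \(r\mapsto\log\E\abs S^r\)), and then apply \Cref{lem:large-q-scalar}. For \(4\le p\le6\), write
\[
p=4\cdot\frac{6-p}{2}+6\cdot\frac{p-4}{2},
\qquad \frac{6-p}{2}+\frac{p-4}{2}=1 .
\]
Both weights are nonnegative. Hölder's inequality with exponents \(2/(6-p)\) and \(2/(p-4)\), applied to \(\abs S^{4(6-p)/2}\cdot\abs S^{6(p-4)/2}\), gives
\[
\E\abs S^p\le(\E S^4)^{(6-p)/2}(\E S^6)^{(p-4)/2}.
\]
At the endpoints \(p=4\) and \(p=6\) one factor has exponent zero and the inequality is trivial.

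Next we substitute the known moments. By \eqref{eq:fourth-moment}, \(\E S^4=3-2q=A(q)\). By \eqref{eq:sixth-upper}, \(\E S^6\le B(q)\), and \(B(q)>0\) because \(\E S^6\ge(\E S^2)^3=1\). Since the exponent \((p-4)/2\) is nonnegative, monotonicity gives
\[
\E\abs S^p\le A(q)^{(6-p)/2}B(q)^{(p-4)/2}.
\]
Under the hypothesis \(q\ge1/2\) (and of course \(q\le1\)), \Cref{lem:large-q-scalar} bounds the right-hand side by \(\Lambda_p(q)\). This is exactly \eqref{eq:main-linear}.

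There is essentially no obstacle at this stage, since all the scalar difficulty is packaged in \Cref{lem:large-q-scalar}. The only points to check are the ones already noted: the Hölder exponents are in the admissible range and the endpoint cases are trivial, and the sixth-moment bound can be substituted because its exponent is nonnegative.
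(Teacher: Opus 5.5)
Your proposal is correct and follows the paper's proof: Hölder (log-convexity) interpolation between the exact fourth moment $A(q)=3-2q$ and the sixth-moment bound $B(q)$, followed by \Cref{lem:large-q-scalar}. Your exponent bookkeeping and the monotonicity step under the nonnegative power $(p-4)/2$ match what the paper uses implicitly.
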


\begin{proof}
Fix \(q\in[1/2,1]\).  For \(4\le p\le6\), log-convexity between the fourth
and sixth moments, together with \(\E S^4=3-2q\), gives
\[
\E\abs{S}^p
\le(\E S^4)^{(6-p)/2}(\E S^6)^{(p-4)/2}
\le A(q)^{(6-p)/2}B(q)^{(p-4)/2}.
\]
The scalar bound \Cref{eq:interpolant-chord} then yields
\(\E\abs S^p\le\Lambda_p(q)\).
\end{proof}

\begin{corollary}\label{cor:direct-range}
The inequality \Cref{eq:main-linear} holds for every \(4\le p\le6\).
\end{corollary}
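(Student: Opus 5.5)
The corollary follows by splitting on the value of the fourth-order mass \(q=\sum_i a_i^4\in(0,1]\). Fix \(4\le p\le6\) and a normalized coefficient vector \(a\). If \(q\le1/2\), \Cref{prop:small-q} applies, since it covers every \(p\ge4\). If \(q\ge1/2\), then \Cref{prop:large-q} applies, since it covers exactly \(4\le p\le6\). The two cases overlap at \(q=1/2\) and together exhaust \((0,1]\). In either case \(\E\abs S^p\le\Lambda_p(q)\), which is \Cref{eq:main-linear}.

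The argument itself is just this case split. Both propositions were already reduced to scalar inequalities: \Cref{lem:small-q-scalar} for the small-\(q\) leave-one-out route, and \Cref{lem:large-q-scalar} for the interpolation between the fourth and sixth moments. Those lemmas are the real content, and their proofs are deferred to the appendices.

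The step I expect to need the most care is \Cref{lem:large-q-scalar}. It must hold uniformly in \(p\in[4,6]\), and the left side equals the right side at \(p=4\). I would handle it by fixing \(q\) and taking logarithms. The left side, \((6-p)/2\cdot\log A+(p-4)/2\cdot\log B\), is affine in \(p\). The right side, \(\log\Lambda_p(q)\), should be convex in \(p\), since \(\mu_p\) is log-convex and a positive combination of log-convex functions is log-convex. Both sides agree at \(p=4\). So the claim reduces to comparing \(p\)-derivatives at \(p=4\), plus checking the endpoint \(p=6\), where the inequality \(B(q)\le q+15(1-q)\) amounts to \(16q^{3/2}\le 16q\), which is true.

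The derivative comparison at \(p=4\) needs its own check, and I would not simply assume it. The remaining difficulty is then purely one-variable calculus in \(q\) on \([1/2,1]\).
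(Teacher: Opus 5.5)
Your proof is correct and is exactly the paper's argument: the case split at $q=1/2$ between \Cref{prop:small-q}, which holds for all $p\ge4$, and \Cref{prop:large-q}, which holds for $4\le p\le6$. Your sketch of \Cref{lem:large-q-scalar} also matches the appendix: log-convexity of $p\mapsto\Lambda_p(q)$, the supporting tangent at $p=4$, and then a one-variable inequality in $q$ obtained from $\ell(4)>\log 2$. The one difference is that your separate check at $p=6$ is not needed. Once the tangent-slope comparison at $p=4$ holds, convexity gives the bound for every $p\ge4$; if that comparison failed, no endpoint check could rescue the argument.
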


\begin{proof}
Combine \Cref{prop:small-q} when \(q\le1/2\) with \Cref{prop:large-q} when \(q\ge1/2\).
\end{proof}

\section{The fixed-moment reduction for \texorpdfstring{$p\geq5$}{p >= 5}}\label{sec:gaussian-shift}

For \(p\ge5\), the extremal principle in \Cref{lem:BMNO} reduces an arbitrary
Rademacher sum to one distinguished coefficient together with equal smaller
coefficients.  Passing to infinitely many small coefficients leaves one
Gaussian shift, so the remaining comparison is one-dimensional.  This gives a
second proof of \Cref{thm:main-linear} in that range.

\subsection{Reduction to a Gaussian shift}

\begin{lemma}[\cite{BMNO}]\label{lem:BMNO}
Let \(N\ge2\), let \(\Phi\in C^4(\R)\) be even with convex fourth derivative, and fix feasible
numbers \(\sigma_2,\sigma_4\).  On
\[
\left\{b\in\R^N:\sum_{i=1}^N b_i^2=\sigma_2,
                         \ \sum_{i=1}^N b_i^4=\sigma_4\right\},
\]
the functional
\[
b\longmapsto\E\Phi\left(\sum_{i=1}^N b_i\eps_i\right)
\]
has a maximizer of the form \((c,d,\ldots,d)\), where \(c\ge d\ge0\). 
\end{lemma}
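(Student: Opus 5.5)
The plan is to maximize over the compact constraint set and analyse a maximizer \(b\) through Lagrange multipliers combined with a two-coordinate perturbation argument. Two preliminary reductions come first.

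\textbf{Reductions.} The constraint set \(M=\{\sum b_i^2=\sigma_2,\ \sum b_i^4=\sigma_4\}\) is compact and the functional is continuous, so a maximizer exists. Since \(\Phi\) is even and the \(\eps_i\) are symmetric, the functional depends only on \((|b_i|)\) and is symmetric in the coordinates. We may therefore assume \(b_1\ge b_2\ge\cdots\ge b_N\ge0\), and it suffices to show that all coordinates except the largest are equal.

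\textbf{Main step: a local move on coordinate pairs.} Fix two coordinates \(i,j\) and write \(T=\sum_{k\ne i,j}b_k\eps_k\). Averaging over \(\eps_i,\eps_j\) gives
\[
F(b_i,b_j)=\E_T\,\tfrac14\sum_{\pm,\pm}\Phi(T\pm b_i\pm b_j).
\]
Pass to the variables \(u=b_i^2+b_j^2\) and \(v=b_i^4+b_j^4\), or equivalently \(s=b_i^2\), \(t=b_j^2\) with \(s+t\) and \(s^2+t^2\) prescribed. Within the full constraint set, the moves available on \(M\) are those using three coordinates \(i,j,k\) that preserve \(\sum b^2\) and \(\sum b^4\); these form one-dimensional curves. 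The first-order Lagrange condition at an interior point with \(b_i\neq0\) reads
\[
\E\,\eps_i\Phi'(S)/b_i=\lambda+2\mu b_i^2 .
\]
By the identity from \Cref{lem:leave-one-out}, the left side equals \(g(b_i)\defeq\E\,\Phi''(T_i+b_iU_i)\). Because \(\Phi\) is even, \(g\) depends only on \(b_i^2\); call it \(h(b_i^2)\). Hence every nonzero squared coordinate \(x\) solves \(h(x)=\lambda+2\mu x\).

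The key claim is that convexity of \(\Phi^{(4)}\) makes \(h\) essentially convex in \(x\), up to the dependence of \(T_i\) on \(i\), so that a line meets its graph in at most two points. One of these points can then be shown to be the spike and the other the common small value. To handle the dependence of \(T_i\) on \(i\), I would compare coordinates pairwise. For three distinct squared values \(x_1>x_2>x_3\), the second-order condition along the curve preserving both constraints must be nonpositive at a maximum. Expanding \(\Phi(T+\cdot)\) in the three variables and using the divided-difference representation, this second variation becomes a third divided difference of a function whose relevant derivative is \(\Phi^{(4)}\) averaged against positive kernels. Convexity of \(\Phi^{(4)}\) should then give a strict sign contradicting maximality, unless the values collapse. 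This rules out three distinct nonzero values, and a similar argument rules out zero coordinates coexisting with two distinct positive values (they can be merged). Among the two remaining values, it must then be shown that the larger one occurs only once. The plan is a second-variation argument on the pair \((c,c)\) versus \((d,\dots)\): convexity of the fourth derivative favours concentrating the fourth moment in a single coordinate. Degenerate cases, where the constraint gradients are dependent, occur exactly when all nonzero coordinates are equal, which is already of the required form.

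\textbf{Main obstacle.} The hardest step will be the sign computation for the three-coordinate second variation. I would first try to write \(\Phi(T+\sum \pm b_k)\), averaged over signs, as an integral of \(\Phi^{(4)}\) against a kernel obtained from the uniform-variable identity applied twice. Convexity of \(\Phi^{(4)}\) would then enter through the Hermite–Hadamard-type inequality for convex functions. Since this lemma is cited from \cite{BMNO}, the cleanest route may be to follow their argument, presenting the above as its structure.
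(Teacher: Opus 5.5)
\textbf{There is a genuine gap.} Your outline points at the right kind of move: three coordinates varied so that both constraints are preserved. But the inequality that carries the lemma is only asserted, and several surrounding claims are wrong. Note also that the paper imports this lemma from \cite{BMNO} without proof, so nothing in the paper fills the gap.

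\textbf{The core inequality is missing, and it is first-order.} The Lagrange step, as set up, yields nothing usable. The quantity \(\E\Phi''(T_i+b_iU_i)\) depends on all other coordinates through \(T_i\), so there is no single function \(h\), and you never resolve this. The dependence on the rest is removed not by pairwise comparison but by conditioning. Fix three coordinates, condition on the remaining signs \(T=t\), and replace \(\Phi\) by \(\Psi_t(y)=\tfrac12(\Phi(t+y)+\Phi(t-y))\), which is again even with convex fourth derivative. With \(x_k=b_k^2\), the curve on which \(x_1+x_2+x_3\) and \(x_1^2+x_2^2+x_3^2\) are fixed has tangent \((x_2-x_3,x_3-x_1,x_1-x_2)\). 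Along it, \(x_1x_2x_3\) changes at rate \((x_1-x_2)(x_2-x_3)(x_1-x_3)\). What you need is that \(\sum_{\mathrm{cyc}}(x_2-x_3)\,\partial_{x_1}\E\Psi_t\bigl(\sum_{k\le3}\sqrt{x_k}\eps_k\bigr)\) has this same sign, i.e.\ that the functional is nondecreasing in \(x_1x_2x_3\) along the curve.

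A second-variation condition cannot replace this. Even if it excluded interior local maxima, the maximum along the curve could sit at either endpoint. The wrong endpoint is precisely a configuration \((c,c,d)\) with \(c>d\), or \((c,d,0)\), and choosing between endpoints is a first-order question. Your sentence ``convexity of \(\Phi^{(4)}\) should then give a strict sign'' is where the whole proof has to happen, and the proposal contains no such computation. One route: by the integral representation of the even convex function \(\Psi_t^{(4)}\), it suffices to treat \(\Psi(y)=(\abs{y}-s)_+^5\), \(s\ge0\), since \(y^2\) and \(y^4\) have constant expectation on the curve.

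\textbf{The conclusion needs a selection argument, not a contradiction.} For \(\Phi(x)=x^4\), which satisfies the hypotheses, \(\E\Phi(S)=3\sigma_2^2-2\sigma_4\) is constant on the constraint set. So points with three distinct values, a repeated top value, or zeros are all maximizers. No ``strict sign'' argument can rule them out, and the lemma only asserts existence. Given the weak monotonicity, choose among all maximizers (a compact set) one maximizing \(\sum_i b_i^6\), which strictly increases with \(x_1x_2x_3\) on each such curve. Then every triple sits at the maximal-\(x_1x_2x_3\) end of its curve, which is the configuration in which the two smaller values coincide. This single fact gives the form \((c,d,\ldots,d)\) and disposes of the repeated-top-value and zero cases, for which you planned separate arguments.

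\textbf{The degenerate Lagrange points are mishandled.} The gradients \(2b\) and \(4b^3\) are dependent whenever all nonzero \(\abs{b_i}\) coincide. An example is \(b=(d,d,0,0)\) with \(N=4\), which is not of the form \((c,d',d',d')\), so your claim that these points already have the required form is false. Such points are exactly where multipliers carry no information. They too must be excluded by the triple move: the triple \((d,d,0)\) lies at the minimal end of its curve.
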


For \(\Phi(x)=\abs{x}^p\), the hypothesis holds whenever \(p\ge5\), because
\[
\Phi^{(4)}(x)=p(p-1)(p-2)(p-3)\abs{x}^{p-4}
\]
is convex.

\begin{proposition}[Gaussian reduction]\label{prop:gaussian-reduction}
Let \(p\ge5\).  Then
\begin{equation}\label{eq:gaussian-reduction}
\E\abs{S}^p
\le\E\abs{q^{1/4}+\sqrt{1-\sqrt q}\,G}^p
=U_p(q).
\end{equation}
\end{proposition}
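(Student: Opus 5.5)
Starting from \Cref{lem:BMNO} with \(\Phi(x)=\abs{x}^p\), which is admissible for \(p\ge5\), we reduce to one spike plus equal small coefficients. We then show that splitting the small coefficients further can only increase the moment, and pass to the Gaussian limit. Fix \(a\in\R^n\) with \(\sum a_i^2=1\), \(\sum a_i^4=q\), and pad with zeros so that we work in \(\R^N\) for any \(N\ge n\). \Cref{lem:BMNO} gives
\[
\E\abs{S}^p\le \E\abs{c\eps_0+d\,(\eps_1+\cdots+\eps_{N-1})}^p,
\qquad c^2+(N-1)d^2=1,\quad c^4+(N-1)d^4=q,
\]
with \(c\ge d\ge0\). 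This alone does not give a Gaussian. The bound holds for every \(N\ge n\), but the parameters \((c,d)\) depend on \(N\). As \(N\to\infty\), \((N-1)d^4\le d^2\cdot (N-1)d^2\le d^2\to0\) provided \(d\to0\). So \(c^4\to q\) and \(c^2\to\sqrt q\), and the remainder \(d(\eps_1+\cdots+\eps_{N-1})\) has variance \(1-c^2\to1-\sqrt q\) and converges in law to \(\sqrt{1-\sqrt q}\,G\) by the CLT.

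Uniform integrability of \(\abs{\cdot}^p\) follows from Khintchine (or from hypercontractivity with a slightly larger exponent), since all the sums are normalized. Hence
\[
\limsup_{N\to\infty}\E\abs{c_N\eps_0+d_N S'_{N-1}}^p=\E\abs{q^{1/4}\eps_0+\sqrt{1-\sqrt q}\,G}^p=U_p(q).
\]
The evenness of \(\abs{\cdot}^p\) and the symmetry of \(G\) let us drop \(\eps_0\), which gives \eqref{eq:gaussian-reduction}.

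The main obstacle is ensuring \(d_N\to0\). Solving the constraints directly gives \(c\) as the larger root determined by \(c^2\), \(c^4\) and \(N\). Writing \(u=c^2\), we have \((1-u)^2/(N-1)+u^2=q\), so \(u\to\sqrt q\) and \(d^2=(1-u)/(N-1)\to0\). The one subtlety is that the spike must be the large coordinate, i.e.\ the branch \(u\ge 1/N\). The lemma guarantees \(c\ge d\), and for large \(N\) the feasible root with \(c\ge d\) is the one with \(u\to\sqrt q\). The other root has \(u\) near \(1/N\) and would force \(q\approx1/(N-1)\), which is impossible for fixed \(q>0\) and large \(N\).

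A degenerate case is \(q=1\) (a coordinate vector), where \(d=0\) and the claim is trivial with equality. For \(q<1\) the feasible set is nonempty for all large \(N\), because the padded \(a\) itself lies in it. This makes the whole argument a limit of the finite-dimensional BMNO reduction, and no extra monotonicity in \(N\) is needed.
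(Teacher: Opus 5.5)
Your proposal is correct and is essentially the paper's proof: apply \Cref{lem:BMNO} in every padded dimension \(N\ge n\), let \(N\to\infty\) using the central limit theorem for the equal small coefficients together with independence of the spike, and pass the \(p\)-th moments to the limit by uniform integrability from Khintchine at a higher exponent. The one difference is that you get \(c_N^4\to q\) from the soft bound \((N-1)d_N^4\le d_N^2\le 1/(N-1)\) instead of the paper's explicit formulas \eqref{eq:cd-formulas}. That bound makes your root-selection discussion unnecessary; in any case the discarded root \((1-(N-1)\delta_{N,q})/N\) is negative for large \(N\), not near \(1/N\). It also makes your opening promise of a monotonicity-in-\(N\) step unnecessary.
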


\begin{proof}
Append zeros and regard the coefficient vector as an element of \(\R^N\), where
\(N\ge n\).  For each such \(N\), \Cref{lem:BMNO} gives
\[
\E\abs{S}^p
\le\E\abs{c_N\eps_1+d_N\sum_{i=2}^N\eps_i}^p,
\]
where \(c_N\ge d_N\ge0\) satisfy
\[
c_N^2+(N-1)d_N^2=1,
\qquad
c_N^4+(N-1)d_N^4=q.
\]
The two constraints can be solved explicitly.  Set
\begin{equation}\label{eq:delta-cd}
\delta_{N,q}\defeq\sqrt{\frac{Nq-1}{N-1}}.
\end{equation}
Then
\begin{equation}\label{eq:cd-formulas}
c_N^2=\frac{1+(N-1)\delta_{N,q}}{N},
\qquad
d_N^2=\frac{1-\delta_{N,q}}{N}.
\end{equation}
As \(N\to\infty\),
\[
c_N\longrightarrow q^{1/4},
\qquad
(N-1)d_N^2\longrightarrow1-\sqrt q,
\qquad
d_N\longrightarrow0.
\]
The Lindeberg--Feller central limit theorem therefore gives
\[
d_N\sum_{i=2}^N\eps_i\xrightarrow{D}\sqrt{1-\sqrt q}\,G.
\]
The leading sign is independent of the remaining sum, and
\(c_N\eps_1\to q^{1/4}\eps_1\) almost surely.  Thus, with an independent copy
\(\eps_0\),
\[
X_N\defeq c_N\eps_1+d_N\sum_{i=2}^N\eps_i
\xrightarrow{D} q^{1/4}\eps_0+\sqrt{1-\sqrt q}\,G.
\]
To pass from convergence in distribution to convergence of the \(p\)-th
moments, apply the sharp Khintchine inequality at exponent \(p+1\):
\[
  \sup_{N\ge n}\E|X_N|^{p+1}\le\mu_{p+1}.
  \]
Thus \(\{|X_N|^p\}_{N\ge n}\) is uniformly integrable, and convergence in
distribution implies
\[
\lim_{N\to\infty}\E\abs{X_N}^p
=\E\abs{q^{1/4}\eps_0+\sqrt{1-\sqrt q}\,G}^p.
\]
Since \(\E\abs{S}^p\le\E\abs{X_N}^p\) for every \(N\ge n\), the limit above,
together with the symmetry of \(G\), proves \Cref{eq:gaussian-reduction}.
\end{proof}

\subsection{The Gaussian-shift chord}

\begin{lemma}[Gaussian-shift chord]\label{lem:gaussian-chord}
Let \(p>4\), \(0\le u\le1\), and \(G\sim N(0,1)\).  Then
\begin{equation}\label{eq:gaussian-chord}
\E\abs{u+\sqrt{1-u^2}\,G}^p
\le\mu_p-(\mu_p-1)u^4.
\end{equation}
For \(0<u<1\), the inequality is strict.
\end{lemma}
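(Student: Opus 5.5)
The plan is to view the left-hand side of \Cref{eq:gaussian-chord} as a function of \(u\) alone and compare it with the chord through its two endpoint values. Put
\[
X_u\defeq u+\sigma G,\qquad \sigma=\sqrt{1-u^2},\qquad M(u)\defeq\E\abs{X_u}^p.
\]
Then \(M(0)=\mu_p\) and \(M(1)=1\), so \Cref{eq:gaussian-chord} says that
\[
\psi(u)\defeq\frac{\mu_p-M(u)}{u^4}
\]
satisfies \(\psi(u)\ge\psi(1)=\mu_p-1\) on \((0,1]\). I will try to prove the stronger statement that \(\psi\) is strictly decreasing on \((0,1)\). Strict monotonicity gives both the inequality and its strictness for \(0<u<1\).

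\textbf{Step 1: a closed differential system for \(M\).} Differentiate \(M\) under the expectation and use Gaussian integration by parts, \(\E[Gg(X_u)]=\sigma\E g'(X_u)\), to remove the factor \(G\). With
\[
N_{r}(u)\defeq\E\bigl[\abs{X_u}^{r-2}X_u\bigr],
\]
this gives
\[
M'(u)=p\,N_p(u)-p(p-1)\,u\,\E\abs{X_u}^{p-2}.
\]
The same manipulation, combined with the Stein identity \(\E[X_u f(X_u)]=u\E f(X_u)+\sigma^2\E f'(X_u)\), gives the three-term recursions linking \(\E\abs{X}^{r}\), \(N_{r-1}\) and \(\E\abs{X}^{r-2}\). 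These close into a first-order linear system in \(u\) for the pair \((M,N_p)\), or equivalently for \((\E\abs X^p,\E\abs X^{p-2})\), with coefficients rational in \(u\) and \(\sigma\).

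\textbf{Step 2: Riccati form of the monotonicity claim.} Let
\[
R(u)\defeq\frac{u\,M'(u)}{M(u)-\mu_p}.
\]
Then \(\psi'(u)\le0\) is equivalent to the comparison \(R(u)\le4\). From the linear system of Step 1, \(R\) satisfies a Riccati equation \(R'=\alpha(u)+\beta(u)R+\gamma(u)R^2\). I would establish \(R\le 4\) by a barrier argument.

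At \(u\to0^+\), expand \(M\) in powers of \(u\). The \(u^2\) term cancels because the variance is fixed, and the leading defect is of order \(u^4\) with a positive coefficient. Hence \(R(0^+)=4\), and the second-order correction should push \(R\) below \(4\) precisely when \(p>4\).

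For the barrier itself, one shows that on the line \(R=4\) the Riccati right-hand side is negative for \(0<u<1\), that is, \(\alpha+4\beta+16\gamma<0\). Then \(R\) cannot return to \(4\).

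\textbf{Step 3: the main obstacle.} The hard part is the sign of \(\alpha+4\beta+16\gamma\) on the whole interval. This quantity involves the noncentral Gaussian moment ratio \(\E\abs X^{p}/\E\abs X^{p-2}\), so it is not an elementary function of \(u\). My first attempt would be to bound that ratio between the two explicit envelopes coming from the Stein recursion, namely the value for the pure Gaussian (\(p-1\)) and the value for the point mass. I would then check that the resulting rational inequality in \((u,p)\) holds for \(p>4\). This check is delicate near \(u=1\), where \(\sigma\to0\) and the coefficients degenerate.

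If the barrier fails near \(u=1\), the fallback is to treat a neighbourhood of \(u=1\) separately by a Taylor expansion of \(M\) in \(\sigma^2\). Since \(M(1)=1\) and \(M'(1)=p-p(p-1)=-p(p-2)\), the tangent slope is \(-p(p-2)\), whereas the chord has slope \(-4(\mu_p-1)\). Since \(\mu_p\) grows much faster than \(p^2\), the tangent is less steep than the chord, so \(M\) lies strictly below the chord near \(u=1\) for \(p>4\) (presumably \(p(p-2)<4(\mu_p-1)\) holds for all \(p>4\), and this needs checking). The interior would then be handled by the Riccati barrier.
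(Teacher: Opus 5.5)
\textbf{There is a genuine gap in Steps 2--3.} Your reduction itself is correct: $\psi'\le0$ is equivalent to $uM'-4(M-\mu_p)\ge0$, and strict monotonicity of $\psi$ is in fact true.

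\textbf{The Riccati equation is not closed.} The system from your Step 1 is homogeneous. Explicitly, $M''=a(u)M'+b(u)M$ with $b(u)=-p(p-2)u^2/(1-u^2)^2\neq0$. So the constant $\mu_p$ is not a solution, and $M-\mu_p$ solves an inhomogeneous equation. Consequently $R=uM'/(M-\mu_p)$ does not satisfy a Riccati equation whose coefficients depend on $u$ alone. At a point where $R=4$ one finds $R'=(uM''-3M')/(M-\mu_p)$, and its sign depends on the unknown value $M(u)$. This is why the moment ratio reappears in your Step 3.

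\textbf{The proposed envelopes cannot close the barrier.} At a point where $R=4$, the identity $uM'=p\bigl(M-(p-1)\E\abs{X_u}^{p-2}\bigr)$ shows that the envelope value $\E\abs{X_u}^p/\E\abs{X_u}^{p-2}=p-1$ corresponds to $M(u)=\mu_p$. There $M'=0$ and $uM''-3M'=ub(u)\mu_p<0$, which is the wrong sign. The barrier is also tangent at the origin. Indeed, $M(u)/\mu_p=1-\frac{p(p-2)}{12}u^4+\frac{p(p-2)(p-4)}{45}u^6+O(u^8)$ gives $R(u)=4-\frac{8(p-4)}{15}u^2+O(u^4)$. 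The leading terms of the barrier condition cancel, so any workable bound must capture $M$ through relative order $u^6$. Your tangent comparison at $u=1$ does not reach this region, and it degenerates to equality $8=8$ as $p\downarrow4$.

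\textbf{The missing idea is to remove the constant before comparing.} Note that $\frac{\dd}{\dd u}\bigl[uM'-4(M-\mu_p)\bigr]=uM''-3M'$, and the bracket vanishes at $u=0$. So it suffices to prove $uM''-3M'>0$ on $(0,1)$.

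This is exactly the paper's condition. Its profile $g(s)=M(s^{1/4})$ satisfies $g''(s)=(uM''-3M')/(16u^7)$, so the paper proves the stronger statement that $g$ is strictly convex. That condition is homogeneous in $M$.

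The paper then rewrites $M$ through $m(t)=\E\abs{G+t}^{p-4}$ and $m'$, with $t=u/\sqrt{1-u^2}$. It does this by applying the Gaussian-shift moment recurrence twice, which lowers the exponent by $4$. The condition becomes $m'(t)/(tm(t))<Q(t)$ for an explicit rational $Q$. Because $m$ solves the homogeneous equation $m''+tm'-(p-4)m=0$, its logarithmic derivative satisfies a genuinely closed Riccati equation. The function $Q$ is an explicit strict supersolution with remainder of order $t^4$. The common starting value $p-4$ of both functions at $t=0$ is handled by variation of constants.

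Your barrier idea would go through once it is applied to this homogeneous quantity rather than to $uM'/(M-\mu_p)$.
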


\begin{proof}
Consider the fixed-moment profile
\[
g(s)\defeq
\E\abs{s^{1/4}+\sqrt{1-\sqrt s}\,G}^p,
\qquad 0\le s\le1.
\]
Its endpoint values are \(g(0)=\mu_p\) and \(g(1)=1\).  Strict convexity on
\((0,1)\) would therefore place \(g\) below the chord joining these endpoints,
which is precisely \Cref{eq:gaussian-chord}.

To verify the required convexity, write \(u=s^{1/4}\) and
\[
h(u)=\E\abs{u+\sqrt{1-u^2}\,G}^p.
\]
Differentiating gives
\begin{equation}\label{eq:g-second}
g''(s)=\frac{uh''(u)-3h'(u)}{16u^7}.
\end{equation}
Hence the sign of \(g''\) is determined by \(uh''(u)-3h'(u)\), and it remains
to prove
\(uh''(u)-3h'(u)>0\) for \(0<u<1\).

For the curvature calculation, put
\[
v=\sqrt{1-u^2},\qquad t=\frac{u}{v}.
\]
Since \(p>4\), set \(\alpha=p-4\) and
\[
m(t)=\E\abs{G+t}^{\alpha}.
\]
Moreover, the Gaussian convolution is smooth in t. Gaussian integration by parts gives
\begin{equation}\label{eq:gaussian-ode}
m''(t)+tm'(t)-\alpha m(t)=0.
\end{equation}
Using this identity, the curvature reduces to
\begin{align}
uh''(u)-3h'(u)
={}&p(\alpha+2)v^{\alpha+3}
\Bigl\{
\alpha t\bigl(3+(\alpha+2)t^2\bigr)m(t)\notag\\
&\hspace{29mm}
-\bigl(3+(2\alpha+3)t^2\bigr)m'(t)
\Bigr\}.
\label{eq:curvature-before-Q}
\end{align}
The calculation is given in \Cref{app:gaussian-chord-curvature}.  The bracket
is positive exactly when
\begin{equation}\label{eq:gaussian-log-derivative-comparison}
\frac{m'(t)}{t m(t)}
<
\alpha\frac{3+(\alpha+2)t^2}{3+(2\alpha+3)t^2},
\qquad t>0.
\end{equation}

Write \(R(t)\) for the logarithmic derivative on the left of
\Cref{eq:gaussian-log-derivative-comparison}, and \(Q(t)\) for the explicit
function on the right.  The function \(Q\) is the threshold at which the
bracket in \Cref{eq:curvature-before-Q} changes sign.  The differential equation
\Cref{eq:gaussian-ode} gives
\begin{equation}\label{eq:R-riccati}
tR'(t)=\alpha-(1+t^2)R(t)-t^2R(t)^2.
\end{equation}
Direct substitution shows that \(Q\) is a strict supersolution of the same
Riccati equation:
\begin{equation}\label{eq:Q-supersolution}
tQ'(t)>
\alpha-(1+t^2)Q(t)-t^2Q(t)^2,
\qquad t>0.
\end{equation}
The calculation is recorded in \Cref{app:gaussian-chord-comparison}.  Since
\(m\) is even and \Cref{eq:gaussian-ode} gives
\(m''(0)=\alpha m(0)\), both \(R(t)\) and \(Q(t)\) tend to \(\alpha\) as
\(t\downarrow0\).  The positive remainder in the supersolution inequality
shows that \(Q-R\) satisfies a first-order linear equation with positive
forcing.  Because \(Q(t)-R(t)\to0\) at the origin, variation of constants gives
\[
R(t)<Q(t),\qquad t>0.
\]
The endpoint comparison and variation-of-constants formula are written out in
\Cref{app:gaussian-chord-comparison}.

Hence the right-hand side of \Cref{eq:curvature-before-Q} is positive.  By
\Cref{eq:g-second}, \(g''(s)>0\) on \((0,1)\).  The endpoint values of \(g\)
then give \Cref{eq:gaussian-chord}, with strict inequality for \(0<s<1\).
\end{proof}

\begin{proposition}\label{prop:fixed-range}
The inequality \Cref{eq:main-linear} holds for every \(p\ge5\).
\end{proposition}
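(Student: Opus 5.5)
The plan is to chain \Cref{prop:gaussian-reduction} with \Cref{lem:gaussian-chord}, using the substitution \(u=q^{1/4}\). Fix \(p\ge5\) and a normalized coefficient vector \(a\) with \(q=\sum_i a_i^4\in(0,1]\).

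\textbf{Step 1.} By \Cref{prop:gaussian-reduction},
\[
\E\abs{S}^p\le U_p(q)=\E\abs{q^{1/4}+\sqrt{1-\sqrt q}\,G}^p .
\]

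\textbf{Step 2.} Set \(u=q^{1/4}\in(0,1]\). Then \(u^2=\sqrt q\), so \(\sqrt{1-\sqrt q}=\sqrt{1-u^2}\) and \(u^4=q\). Since \(p\ge5>4\), \Cref{lem:gaussian-chord} applies and gives
\[
U_p(q)=\E\abs{u+\sqrt{1-u^2}\,G}^p\le \mu_p-(\mu_p-1)u^4=\Lambda_p(q).
\]
Combining the two steps yields \Cref{eq:main-linear}.

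\textbf{Equality cases.} For \(0<q<1\), the lemma gives strict inequality, so equality in \Cref{eq:main-linear} forces \(q=1\). Under the normalization \(\sum_i a_i^2=1\), this means \(\sum_i a_i^4=1\), which happens only for a coordinate vector. The endpoint \(u=1\) gives \(U_p(1)=1=\Lambda_p(1)\), so coordinate vectors do attain equality. The endpoint \(q=0\) cannot occur for a finitely supported nonzero vector.

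\textbf{Main obstacle.} There is essentially none here: the substantive work sits in the two cited results, namely the BMNO reduction with its limiting argument and the Riccati comparison. The only point to check is that the substitution \(u=q^{1/4}\) matches the Gaussian variance \(1-\sqrt q\) with \(1-u^2\), which it does.
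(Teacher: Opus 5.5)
Your proof is correct and matches the paper's argument: you combine \Cref{prop:gaussian-reduction} with \Cref{lem:gaussian-chord} at \(u=q^{1/4}\). Your added treatment of the equality cases agrees with what the paper proves separately in \Cref{cor:equality}.
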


\begin{proof}
Apply \Cref{prop:gaussian-reduction}, followed by \Cref{lem:gaussian-chord} with
\(u=q^{1/4}\).
\end{proof}

\begin{proof}[Proof of \Cref{thm:main-linear}]
\Cref{cor:direct-range} covers \(4\le p\le6\), while \Cref{prop:fixed-range} covers
\(p\ge5\).  Their union covers the full range \(p\ge4\).  At \(p=4\), the theorem is exactly
\Cref{eq:fourth-moment}.  Coordinate vectors give equality for every \(p>4\),
so the coefficient \(\mu_p-1\) cannot be increased.  The equality cases are
proved in \Cref{cor:equality}.
\end{proof}

\section{The exact fixed-\texorpdfstring{$q$}{q} upper envelope}
\label{sec:fourth-order}

This section records the full upper envelope supplied by the fixed-moment
principle, rather than only its affine consequence.

\subsection{Finite-dimensional extremizers}

Fix \(N\ge2\) and a feasible \(q\in[1/N,1]\).  Let \(c_{N,q},d_{N,q}\) be given by
\Cref{eq:delta-cd,eq:cd-formulas}, and define
\begin{equation}\label{eq:finite-upper-law}
Y_{N,q}^+\defeq c_{N,q}\eps_1+d_{N,q}\sum_{j=2}^N\eps_j.
\end{equation}
Then
\[
\sum_{i=1}^N a_i^2=1,
\qquad
\sum_{i=1}^N a_i^4=q
\]
for the coefficients of \(Y_{N,q}^+\).

\begin{proposition}[Exact finite-dimensional optimization]\label{prop:finite-fixed-q}
Let \(\Phi\in C^4(\R)\) be even with convex fourth derivative, let \(N\ge2\), and let \(q\in[1/N,1]\).  Then
\begin{equation}
\max_{\substack{a\in\R^N\\\sum_i a_i^2=1,\ \sum_i a_i^4=q}}
\E\Phi\left(\sum_i a_i\eps_i\right)
 =\E\Phi(Y_{N,q}^+).
\label{eq:finite-max}
\end{equation}
\end{proposition}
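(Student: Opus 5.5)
The plan is to combine \Cref{lem:BMNO} with a compactness argument and an explicit solution of the two moment constraints. The feasible set
\[
K_{N,q}\defeq\Bigl\{a\in\R^N:\ \sum_i a_i^2=1,\ \sum_i a_i^4=q\Bigr\}
\]
is closed and bounded, hence compact. It is nonempty for \(q\in[1/N,1]\), since the coefficients of \(Y_{N,q}^+\) lie in it. The map \(a\mapsto\E\Phi(\sum_i a_i\eps_i)\) is a finite average of \(\Phi\) composed with linear forms, so it is continuous. Therefore the supremum over \(K_{N,q}\) is attained, and writing "max" in \eqref{eq:finite-max} is justified.

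Next apply \Cref{lem:BMNO} with \(\sigma_2=1\), \(\sigma_4=q\). It gives a maximizer of the form \((c,d,\ldots,d)\) with \(c\ge d\ge0\), \(c^2+(N-1)d^2=1\) and \(c^4+(N-1)d^4=q\). It remains to show that these constraints determine \((c,d)\) uniquely and that the solution is \((c_{N,q},d_{N,q})\). Set \(C=c^2\) and \(D=d^2\). Substituting \(C=1-(N-1)D\) into the quartic constraint gives a quadratic equation in \(D\):
\[
N(N-1)D^2-2(N-1)D+1-q=0,
\qquad\text{so}\qquad
D=\frac{1\pm\delta_{N,q}}{N},
\]
with \(\delta_{N,q}\) as in \eqref{eq:delta-cd}; one checks that the discriminant reduces to \((Nq-1)/(N-1)\). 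The corresponding values of \(C\) are \((1\mp(N-1)\delta_{N,q})/N\). The condition \(C\ge D\) reads \(\mp N\delta_{N,q}\ge0\) after subtracting, so it forces the lower sign for \(D\) whenever \(\delta_{N,q}>0\). This yields exactly \eqref{eq:cd-formulas}; when \(\delta_{N,q}=0\) the two roots coincide. Since \(c,d\ge0\), the maximizer equals the coefficient vector of \(Y_{N,q}^+\), and hence the maximum equals \(\E\Phi(Y_{N,q}^+)\).

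The main obstacle is not in this argument itself: all of the analytic difficulty is contained in \Cref{lem:BMNO}, which is taken as given. The delicate points to check are only the edge cases. At \(q=1\) the vector is a coordinate vector, with \(\delta_{N,1}=1\) and \(d=0\). At \(q=1/N\) it is the flat vector. In both cases the lemma still applies with \(c\ge d\ge0\), and the sign selection above remains valid. Signs and permutations do not matter, because \(\Phi\) is even and the \(\eps_i\) are symmetric and exchangeable.
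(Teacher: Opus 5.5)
Your proposal is correct and takes the same route as the paper: apply \Cref{lem:BMNO} with \(\sigma_2=1\), \(\sigma_4=q\), then solve \(c^2+(N-1)d^2=1\), \(c^4+(N-1)d^4=q\) under \(c\ge d\ge0\) to recover \Cref{eq:cd-formulas}. Your compactness remark, the explicit quadratic \(N(N-1)D^2-2(N-1)D+1-q=0\), and the sign selection through \(C-D=\mp\delta_{N,q}\) fill in details the paper leaves implicit.
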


\begin{proof}
\Cref{lem:BMNO} reduces the maximizer to coefficients
\((c,d,\ldots,d)\).  Solving
\[
c^2+(N-1)d^2=1,
\qquad c^4+(N-1)d^4=q,
\qquad c\ge d\ge0,
\]
gives \Cref{eq:cd-formulas} and hence the stated maximum.
\end{proof}

\subsection{Passage to one spike and a Gaussian sea}

\begin{proof}[Proof of \Cref{thm:fixed-q-envelope}]
Fix \(N\ge n\), append zeros to the coefficient vector of \(S\), and apply
\Cref{prop:finite-fixed-q}:
\[
\E\Phi(S)\le\E\Phi(Y_{N,q}^+).
\]
By \Cref{eq:cd-formulas},
\[
c_{N,q}\to q^{1/4},
\qquad
(N-1)d_{N,q}^2\to1-\sqrt q,
\qquad
d_{N,q}\to0.
\]
The Lindeberg--Feller and independence argument from
\Cref{prop:gaussian-reduction} gives \(Y_{N,q}^+\xrightarrow{D}Y_q\).  If
\(\Phi\) has polynomial growth of degree at most \(m\), the sharp Khintchine
inequality uniformly bounds a moment of order strictly larger than \(m\).
Thus \(\{\Phi(Y_{N,q}^+)\}\) is uniformly integrable, and
\[
\E\Phi(S)\le\lim_{N\to\infty}\E\Phi(Y_{N,q}^+)=\E\Phi(Y_q).
\]
Conversely, \(Y_{N,q}^+\) is feasible whenever \(Nq\ge1\).  Its expectations
converge to \(\E\Phi(Y_q)\), so the upper bound is approached by finite
Rademacher sums and is therefore the exact supremum.
\end{proof}

\subsection{Exact moment profiles and strict improvements}

\begin{proof}[Proof of the moment-profile and convexity assertions in
\Cref{thm:fixed-q-envelope}]
For \(p\ge5\), \(\Phi(x)=\abs{x}^p\) satisfies the hypotheses of
\Cref{thm:fixed-q-envelope}, giving the fixed-\(q\) supremum formula.  The
Riccati argument in \Cref{lem:gaussian-chord} proves that
the profile \(g(s)\) in \Cref{lem:gaussian-chord} is strictly convex for
\(p>4\).  Since \(s\) is the same fourth-moment parameter \(q\), this is
strict convexity of \(q\mapsto U_p(q)\).  Together with
\(U_p(0)=\mu_p\) and \(U_p(1)=1\), this gives
\(U_p(q)<\Lambda_p(q)\) for \(0<q<1\).
\end{proof}

\begin{corollary}[Equality cases]\label{cor:equality}
At \(p=4\), equality in \Cref{eq:main-linear} holds for every coefficient vector.  For every
\(p>4\), equality holds if and only if \(q=1\), equivalently, exactly one coefficient is nonzero
and has magnitude \(1\).
\end{corollary}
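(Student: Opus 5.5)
The plan is to treat the trivial directions first and then split the strict-inequality claim for \(p>4\) into the two ranges already used in the proof of \Cref{thm:main-linear}.

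\textbf{Trivial directions.} At \(p=4\), \Cref{eq:fourth-moment} gives \(\E S^4=3-2q=\mu_4-(\mu_4-1)q\) because \(\mu_4=3\), so equality holds identically. If \(q=1\), then \(\sum_i x_i^2=\sum_i x_i=1\) with \(0\le x_i\le1\), which forces a single \(x_i=1\). Then \(S=\pm\eps_i\), so \(\E\abs S^p=1=\Lambda_p(1)\). The converse direction \(q=1\iff\) coordinate vector follows from the same observation. What remains is to show that for \(p>4\) and \(q<1\) the inequality in \Cref{eq:main-linear} is strict.

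\textbf{Range \(p\ge5\).} Here I will chain \Cref{prop:gaussian-reduction}, \(\E\abs S^p\le U_p(q)\), with the strict convexity of \(U_p\) on \((0,1)\) proved in \Cref{lem:gaussian-chord}. For \(0<q<1\) this gives \(U_p(q)<\Lambda_p(q)\). Since \(q\ge 1/n>0\) automatically, the case \(q=0\) never occurs.

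\textbf{Range \(4<p<5\).} Here I will revisit the two direct estimates of \Cref{sec:direct}.
\begin{itemize}
\item[(i)] When \(q\le1/2\), the chain in \Cref{prop:small-q} ends with \(q\phi_p(2^{-1/2})\ge q(1-1/\mu_p)\). By \Cref{lem:small-q-scalar} the last step is strict for \(p>4\). Since \(q>0\), this makes \(\E\abs S^p<\Lambda_p(q)\).
\item[(ii)] When \(1/2\le q<1\), I will use \Cref{prop:large-q}. Strictness can come from either of two places.
  \begin{itemize}
  \item[\(\bullet\)] The inequality \(\sum x_i^3\le q^{3/2}\) is strict unless exactly one \(x_i\) is nonzero, that is, unless \(q=1\). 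So for \(q<1\) we get \(\E S^6<B(q)\). Since \(p>4\) makes the exponent \((p-4)/2\) positive, this strictness propagates through the log-convexity (Hölder) bound.
  \item[\(\bullet\)] Alternatively, \Cref{lem:large-q-scalar} itself may be strict for \(q<1\).
  \end{itemize}
  Either way, \(\E\abs S^p<\Lambda_p(q)\).
\end{itemize}

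\textbf{Main obstacle.} The delicate point is case (i) at the boundary, where strictness has to come from \Cref{lem:small-q-scalar}; its strict form is stated for \(p>4\), so this is fine. I also need to check that Hölder strictness in case (ii) is legitimate. Indeed, \(A(q)^{(6-p)/2}\) is finite and positive, and \(B(q)^{(p-4)/2}\) is strictly increasing in \(B\), so replacing \(\E S^6\) by the strictly larger \(B(q)\) strictly increases the bound. This yields the equality characterization claimed in \Cref{thm:main-linear}.
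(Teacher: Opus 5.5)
Your proposal is correct and follows the paper's proof: \(p=4\) comes from \Cref{eq:fourth-moment}, \(p\ge5\) from \Cref{prop:gaussian-reduction} plus strict convexity of \(U_p\), the case \(4<p<5\), \(q\le1/2\) from the strict form of \Cref{lem:small-q-scalar}, and \(q=1\) is identified with coordinate vectors. The only difference is the case \(4<p<5\), \(1/2\le q<1\): you obtain strictness from \(\sum_i x_i^3<q^{3/2}\), which is valid because \(q<1\) forces at least two nonzero \(x_i\) and the H\"older exponent \((p-4)/2\) is positive, whereas the paper uses strictness of the supporting-tangent step (strict log-convexity of \(p\mapsto\Lambda_p(q)\) when \(q<1\)); both arguments work.
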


\begin{proof}
The case \(p=4\) is \Cref{eq:fourth-moment}.  For \(p\ge5\), strict
convexity in \Cref{thm:fixed-q-envelope} gives strictness whenever
\(0<q<1\).  For \(4<p<5\), \Cref{lem:small-q-scalar} makes the proof of
\Cref{prop:small-q} strict when \(q\le1/2\).  When \(1/2\le q<1\), the
supporting-tangent comparison that leads from \Cref{eq:large-q-log} to
\Cref{eq:interpolant-chord} is strict.  Finally, under
\(\sum_i a_i^2=1\), the condition \(q=1\) is equivalent to a coordinate
vector.
\end{proof}

The following two-coordinate example shows why the original conjectured range
cannot hold.
\begin{proposition}\label{prop:threshold}
For every \(2<p<4\), the inequality \Cref{eq:main-linear} fails for
\(S_2=(\eps_1+\eps_2)/\sqrt2\).
\end{proposition}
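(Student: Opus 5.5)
We will compute both sides of \Cref{eq:main-linear} explicitly for \(S_2=(\eps_1+\eps_2)/\sqrt2\) and compare them. The variable \(S_2\) takes the value \(\pm\sqrt2\) with total probability \(1/2\) and the value \(0\) with probability \(1/2\). Hence
\[
\E\abs{S_2}^p=\tfrac12\cdot 2^{p/2}=2^{p/2-1}.
\]
Here \(q=2\cdot(1/2)^2=1/2\), so the right-hand side is \(\Lambda_p(1/2)=\tfrac12(1+\mu_p)\). The claim therefore reduces to the scalar inequality
\[
F(p)\defeq 2^{p/2}-1-\mu_p>0,\qquad 2<p<4,
\]
where \(\mu_p=2^{p/2}\Gamma((p+1)/2)/\sqrt\pi\).

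First we check the endpoints. At \(p=2\) we have \(2-1-1=0\), and at \(p=4\) we have \(4-1-3=0\), so \(F\) vanishes at both ends. Positivity in between will follow once we show that \(F\) is strictly concave on \([2,4]\), or some similar shape property. Writing
\[
F(p)=2^{p/2}\Bigl(1-\Gamma\bigl(\tfrac{p+1}{2}\bigr)/\sqrt\pi\Bigr)-1
\]
is one option, but a cleaner route substitutes \(s=p/2\in(1,2)\). The inequality then becomes
\[
2^{s}\Bigl(1-\frac{\Gamma(s+\tfrac12)}{\Gamma(\tfrac12)}\Bigr)>1.
\]
Equivalently, \(\psi(s)\defeq s\log2+\log\bigl(1-\Gamma(s+\tfrac12)/\sqrt\pi\bigr)\) must be positive on \((1,2)\), given that it vanishes at \(s=1\) and \(s=2\). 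Since \(\log\Gamma\) is convex, the function \(\Gamma(s+\tfrac12)/\sqrt\pi\) is log-convex and hence convex. The function \(1-\Gamma(s+\tfrac12)/\sqrt\pi\) is therefore concave. It is also positive on \([1,2]\), because \(\Gamma(5/2)/\sqrt\pi=3/4<1\). The logarithm of a positive concave function is concave, so \(\psi\) is concave, being the sum of a linear function and a concave function. A concave function vanishing at both endpoints is nonnegative in between. It is strictly positive there, because \(\Gamma\) is strictly log-convex, which makes \(\psi\) strictly concave.

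We expect this last step to be the main point requiring care: we must confirm positivity of \(1-\Gamma(s+\tfrac12)/\sqrt\pi\) on the whole interval, and we must justify strictness. Positivity holds because \(\Gamma(s+\tfrac12)\) is increasing for \(s+\tfrac12\ge 3/2\), so its maximum on \([1,2]\) is \(\Gamma(5/2)=\tfrac34\sqrt\pi\). With \(\psi>0\) on \((1,2)\), we obtain \(\E\abs{S_2}^p>\Lambda_p(1/2)\), so \Cref{eq:main-linear} fails for every \(2<p<4\).
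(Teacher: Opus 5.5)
Your proof is correct. The reduction to \(2^{p/2}>1+\mu_p\) matches the paper's, but you establish that scalar inequality differently. You show that \(s\mapsto\log(2^s-\mu_{2s})=s\log2+\log\bigl(1-\Gamma(s+\tfrac12)/\sqrt\pi\bigr)\) is strictly concave on \([1,2]\) and vanishes at both endpoints.

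The paper instead chains two explicit strict inequalities:
\begin{itemize}
\item Lyapunov's inequality (log-convexity of \(r\mapsto\E(G^2)^r\) between \(r=1\) and \(r=2\)) gives \(\mu_p<3^{p/2-1}\);
\item strict concavity of \(x\mapsto x^{p/2-1}\) gives \(1+3^{p/2-1}<2^{p/2}\).
\end{itemize}

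Both routes rest on the same fact, the log-convexity of Gaussian moments in the exponent. In your version it appears as log-convexity of \(\Gamma\). Neither route truly needs special functions: your \(\Gamma(s+\tfrac12)/\sqrt\pi\) is just \(\E(G^2/2)^s\), which is log-convex by H\"older. So your argument, like the paper's, only uses \(\E G^2=1\), \(\E G^4=3\) and non-degeneracy of \(G^2\).

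The paper's chain avoids any positivity or domain check for a logarithm. Yours packages everything into a single shape statement and gives slightly more, namely log-concavity of \(s\mapsto 2^s-\mu_{2s}\) on \([1,2]\).

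One simplification: positivity of \(1-\Gamma(s+\tfrac12)/\sqrt\pi\) on \([1,2]\) follows immediately from its concavity and its endpoint values \(1/2\) and \(1/4\). You therefore need not use where \(\Gamma\) attains its minimum.
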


\begin{proof}
Since \(G^2\) is nonconstant, strict log-convexity of its moments gives
\[
\mu_p=\E(G^2)^{p/2}
<(\E G^2)^{2-p/2}(\E G^4)^{p/2-1}=3^{p/2-1}.
\]
Since \(0<p/2-1 <1\), the concavity of \(x\mapsto x^{p/2-1}\) yields
\[
\frac{1+3^{p/2-1}}{2}<\left(\frac{1+3}{2}\right)^{p/2-1}=2^{p/2-1}.
\]
Combining these two strict inequalities gives \(1+\mu_p<2^{p/2}\).  For
\(S_2\), one has \(q=1/2\) and \(\E\abs{S_2}^p=2^{p/2-1}\), hence
\[
\E\abs{S_2}^p>\frac{\mu_p+1}{2}.
\]
\end{proof}

\subsection{Laplace transform}
The fixed-\(q\) principle also determines the sharp moment generating function
of \(S\).
\begin{theorem}[Sharp fixed-\(q\) Laplace transform]\label{thm:mgf}
Let \(q\in(0,1]\), let \(a\in\mathcal A(q)\), and write
\(S=\sum_i a_i\eps_i\).  Then, for every \(t\in\R\),
\begin{equation}\label{eq:mgf}
\E e^{tS}=\prod_i\cosh(ta_i)
\le \exp\left(\frac{1-\sqrt q}{2}t^2\right)
\cosh(q^{1/4}t).
\end{equation}
More precisely, the right-hand side equals
\(\sup_{a\in\mathcal A(q)}\E\exp(t\sum_i a_i\eps_i)\); it is the exact
supremum at fixed \(q\), approached in the finite-sum closure.
\end{theorem}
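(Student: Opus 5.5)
The plan is to apply \Cref{thm:fixed-q-envelope} not to \(\cosh(tx)\) itself, which is not of polynomial growth, but to its even Taylor truncations, and then let the truncation degree go to infinity. Fix \(t\in\R\). Since \(S\) and \(Y_q\) are symmetric, \(\E e^{tS}=\E\cosh(tS)\) and \(\E e^{tY_q}=\E\cosh(tY_q)\). Independence gives the product formula \(\E e^{tS}=\prod_i\cosh(ta_i)\). It also gives
\[
\E e^{tY_q}=\cosh(q^{1/4}t)\exp\bigl(\tfrac{1-\sqrt q}{2}t^2\bigr)
\]
from the Gaussian moment generating function. So the statement reduces to \(\E\cosh(tS)\le\E\cosh(tY_q)\), together with attainment in the closure.

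For \(K\ge2\), define
\[
\Phi_K(x)=\sum_{k=0}^{K}\frac{(tx)^{2k}}{(2k)!}.
\]
This is an even polynomial, so it is \(C^4\) and of polynomial growth. Its fourth derivative is
\[
\Phi_K^{(4)}(x)=t^4\sum_{k=2}^{K}\frac{(tx)^{2k-4}}{(2k-4)!},
\]
a sum of even powers with nonnegative coefficients, hence convex. \Cref{thm:fixed-q-envelope} therefore gives \(\E\Phi_K(S)\le\E\Phi_K(Y_q)\) for every \(K\). As \(K\to\infty\), \(\Phi_K(x)\uparrow\cosh(tx)\) pointwise, since every term is nonnegative. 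Monotone convergence applies on both sides: \(S\) is bounded, and \(\E\cosh(tY_q)<\infty\). This yields \(\E\cosh(tS)\le\E\cosh(tY_q)\), which is \eqref{eq:mgf}.

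For exactness, I use the feasible finite sums \(Y^+_{N,q}\) from \eqref{eq:finite-upper-law}, defined for \(N\ge1/q\). They converge in distribution to \(Y_q\), as shown in the proof of \Cref{thm:fixed-q-envelope}. It remains to show uniform integrability of \(e^{tY^+_{N,q}}\). I would use \(\E e^{2tY^+_{N,q}}=\prod\cosh(2ta_i)\le e^{2t^2}\), by \(\cosh x\le e^{x^2/2}\) and \(\sum a_i^2=1\). This uniform bound on a higher exponential moment gives \(\E e^{tY^+_{N,q}}\to\E e^{tY_q}\). Hence the right-hand side of \eqref{eq:mgf} is the supremum over \(\mathcal A(q)\), approached by finite sums.

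The only delicate point is the limit passage, namely justifying monotone convergence on the \(Y_q\) side and uniform integrability for the attainment. The bound \(\cosh x\le e^{x^2/2}\) handles both. Checking convexity of \(\Phi_K^{(4)}\) is routine once the degree is at least four.
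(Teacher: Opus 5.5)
Your proof is correct, but it follows a different route from the paper's formal proof. Your route is the one sketched in the paper's introduction (``applying it to Taylor polynomials of \(\cosh\) and passing to the limit'').

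\textbf{The paper's route.} The paper never truncates. It applies the finite-dimensional principle \Cref{lem:BMNO} directly to \(\Phi(x)=\cosh(tx)\). This is legitimate because that lemma has no growth hypothesis: in fixed dimension every sum is bounded. It gives the explicit bound \(\E e^{tS}\le\cosh(tc_{N,q})\cosh(td_{N,q})^{N-1}\) for every \(N\ge n\). Since \((N-1)d_{N,q}^2\to1-\sqrt q\) and \((N-1)d_{N,q}^4\to0\), the expansion \(\log\cosh z=z^2/2+O(z^4)\) computes the limit by hand. These products are exactly the Laplace transforms of the feasible vectors \(Y^+_{N,q}\), so the same computation also proves sharpness.

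\textbf{Comparison.} The paper's version uses the product structure of Rademacher Laplace transforms. At this step it needs no Lindeberg--Feller theorem and no uniform-integrability argument, and it handles the bound and its exactness together. Your version treats \Cref{thm:fixed-q-envelope} as a black box and respects its polynomial-growth hypothesis by truncating. Monotone convergence then passes to the limit without any finiteness input. Its advantage is modularity: it extends verbatim to any even power series with nonnegative coefficients.

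\textbf{A simplification for exactness.} You can drop the uniform-integrability bound. Since the upper bound is already proved, it suffices that \(\liminf_N\E e^{tY^+_{N,q}}\ge\E e^{tY_q}\). This follows from convergence in distribution and Fatou's lemma applied to the nonnegative continuous function \(x\mapsto e^{tx}\). Your \(L^2\) bound via \(\cosh x\le e^{x^2/2}\) is also perfectly valid.
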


\begin{proof}
Since \(S\) is symmetric, \(\E e^{tS}=\E\cosh(tS)\).  Apply
\Cref{lem:BMNO} to \(\Phi(x)=\cosh(tx)\), whose fourth derivative
\(t^4\cosh(tx)\) is convex.  After appending zeros, for every \(N\ge n\),
\[
\E e^{tS}
\le \cosh(tc_{N,q})\cosh(td_{N,q})^{N-1}.
\]
By \Cref{eq:cd-formulas}, \((N-1)d_{N,q}^4\to0\).  Hence the expansion
\(\log\cosh z=z^2/2+O(z^4)\) at the origin gives
\[
\cosh(tc_{N,q})\cosh(td_{N,q})^{N-1}
\longrightarrow
\cosh(q^{1/4}t)\exp\left(\frac{1-\sqrt q}{2}t^2\right).
\]
Because the finite upper vectors are feasible and converge to the limiting
law, the right-hand side is the exact supremum.
\end{proof}

\section{The finite-dimensional \texorpdfstring{$L_p$--$L_4$}{Lp--L4} conjecture}
\label{sec:finite-dim}

Bara\'nski, Murawski, Nayar, and Oleszkiewicz proved, for \(p\ge5\), that
\begin{equation}\label{eq:BMNO-reduction}
C_{p,4,n+1}
=\sup_{x\ge1}
\frac{\norm{x+S_n}_p}{\norm{x+S_n}_4},
\end{equation}
and conjectured that the supremum is attained at \(x=1\) \cite{BMNO}. We prove the stronger
monotonicity statement \Cref{eq:shift-ratio-monotone}, which resolves the conjecture.

\subsection{A determinant identity}

For an integer \(r\ge0\) and \(c\in\R\), set
\begin{equation}\label{eq:J-H}
J_{p,r}(c)\defeq
\E\bigl[(c+S_r)\abs{c+S_r}^{p-2}\bigr],
\qquad
H_r(c)\defeq\E(c+S_r)^3=c(c^2+3r).
\end{equation}
For fixed \(n\ge1\), define
\[
A_p(x)\defeq\E\abs{x+S_n}^p,
\qquad
B(x)\defeq\E\abs{x+S_n}^4.
\]
For use below, set \(S_0=0\) and put \(r=n-1\).

\begin{lemma}[Determinant identity]\label{lem:determinant}
For \(x>1\),
\begin{equation}\label{eq:determinant}
4B(x)\frac{\partial A_p}{\partial(x^2)}(x)
-pA_p(x)\frac{\partial B}{\partial(x^2)}(x)
=\frac{pn}{x}
\bigl[H_r(x+1)J_{p,r}(x-1)-H_r(x-1)J_{p,r}(x+1)\bigr].
\end{equation}
\end{lemma}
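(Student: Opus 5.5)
The plan is to express everything through the conditional quantities obtained by revealing one sign. Write \(S_n=\eps_1+S_r\) with \(S_r\) independent of \(\eps_1\), and abbreviate \(J_\pm\defeq J_{p,r}(x\pm1)\) and \(H_\pm\defeq H_r(x\pm1)\). Conditioning on \(\eps_1\) gives the averaging identities
\[
J_{p,n}(x)=\tfrac12(J_++J_-),\qquad H_n(x)=\tfrac12(H_++H_-).
\]

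First I compute the derivatives. Since \(p\ge 4>1\), the map \(y\mapsto\abs y^p\) is \(C^1\) with derivative \(p\,y\abs y^{p-2}\), and \(S_n\) takes finitely many values. Hence
\[
A_p'(x)=p\,J_{p,n}(x),\qquad B'(x)=4H_n(x).
\]
Using \(\partial/\partial(x^2)=(2x)^{-1}\,d/dx\) for \(x>0\), the left-hand side of \Cref{eq:determinant} equals
\[
\frac{2p}{x}\bigl[B(x)J_{p,n}(x)-A_p(x)H_n(x)\bigr].
\]

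Second, I rewrite \(A_p\) and \(B\) with a discrete Stein identity. Let \(g(y)=y\abs y^{p-2}\). Splitting \(x+S_n\) as \(x+S_n\) gives
\[
A_p(x)=\E\bigl[(x+S_n)g(x+S_n)\bigr]=x\,J_{p,n}(x)+\E\bigl[S_n\,g(x+S_n)\bigr].
\]
By exchangeability of the signs, the last term equals
\[
n\,\E\bigl[\eps_1 g(x+\eps_1+S_r)\bigr]=\tfrac n2(J_+-J_-).
\]
The same computation with \(g(y)=y^3\) gives
\[
B(x)=x\,H_n(x)+\tfrac n2(H_+-H_-).
\]

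Third, I substitute these into the bracket. The terms \(xJ_{p,n}H_n\) cancel, leaving
\[
BJ_{p,n}-A_pH_n=\tfrac n2\Bigl[(H_+-H_-)\tfrac{J_++J_-}{2}-(J_+-J_-)\tfrac{H_++H_-}{2}\Bigr].
\]
Expanding, the diagonal products \(H_+J_+\) and \(H_-J_-\) cancel. The bracket therefore reduces to \(H_+J_--H_-J_+\), so \(BJ_{p,n}-A_pH_n=\tfrac n2(H_+J_--H_-J_+)\). Multiplying by \(2p/x\) gives exactly \(\frac{pn}{x}\bigl[H_r(x+1)J_{p,r}(x-1)-H_r(x-1)J_{p,r}(x+1)\bigr]\), which is \Cref{eq:determinant}.

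There is no real obstacle here. The only care needed is bookkeeping: using exchangeability to replace \(\E[S_n g(x+S_n)]\) by \(n\) times the single-sign term, and keeping the factor \(1/(2x)\) from the change of variable \(x^2\). The hypothesis \(x>1\) is only used to make that change of variable legitimate (any \(x>0\) would do), and the case \(r=0\), where \(S_0=0\), is covered by the same computation.
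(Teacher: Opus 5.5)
Your proof is correct and is essentially the paper's argument: condition on one sign, use exchangeability to express \(A_p\) and \(B\) through the shifted quantities \(J_{p,r}(x\pm1)\) and \(H_r(x\pm1)\), differentiate, and observe that the diagonal products cancel. Your split \(A_p=xJ_{p,n}(x)+\tfrac n2(J_+-J_-)\) is just a rearrangement of the paper's \(\tfrac12[(x+n)J_{p,r}(x+1)+(x-n)J_{p,r}(x-1)]\), and it leads to the same cancellation.
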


\begin{proof}
The identity follows by expressing both moments through the two neighboring
shifts \(x-1\) and \(x+1\).  Condition on the last sign in \(S_n\).
Exchangeability gives
\begin{align}
A_p(x)
&=\frac12\bigl[(x+n)J_{p,r}(x+1)+(x-n)J_{p,r}(x-1)\bigr],
\label{eq:A-decomposition}\\
B(x)
&=\frac12\bigl[(x+n)H_r(x+1)+(x-n)H_r(x-1)\bigr].
\label{eq:B-decomposition}
\end{align}
To see the first identity, write
\(\abs{X}^p=X\cdot X\abs{X}^{p-2}\), where \(X=x+S_n\), and separate the
deterministic part \(x\) from the sum of signs; exchangeability handles the
sum.  The second identity follows in the same way from \(X^4=X\cdot X^3\).

Differentiating the two shift representations, and using
\(\partial_{x^2}=(2x)^{-1}\partial_x\), yields
\[
\frac{\partial A_p}{\partial(x^2)}
=\frac{p}{4x}\bigl(J_{p,r}(x-1)+J_{p,r}(x+1)\bigr),
\qquad
\frac{\partial B}{\partial(x^2)}
=\frac1x\bigl(H_r(x-1)+H_r(x+1)\bigr).
\]
Substitution of \Cref{eq:A-decomposition,eq:B-decomposition} cancels the
diagonal products and leaves \Cref{eq:determinant}.
\end{proof}

Since
\[
\frac{\dd}{\dd(x^2)}
\log\frac{A_p(x)^{1/p}}{B(x)^{1/4}}
=\frac{4B\,\partial_{x^2}A_p-pA_p\,\partial_{x^2}B}{4pA_pB},
\]
the desired decrease of the norm ratio will follow once \(J_{p,r}/H_r\) is
shown to be increasing.

\subsection{The cubic quotient lemma}

\begin{lemma}[Cubic quotient]\label{lem:cubic-quotient}
Let \(g\in C^3(\R)\) be odd, and suppose that \(h=g'''\) is even and convex.  Let \(a\ge0\)
and assume
\begin{equation}\label{eq:cubic-initial}
a h(0)\ge6g'(0).
\end{equation}
Then
\begin{equation}\label{eq:cubic-quotient}
c\longmapsto\frac{g(c)}{c(c^2+a)}
\end{equation}
is nondecreasing on \((0,\infty)\).  It is strictly increasing if
\(a h(0)>6g'(0)\).
\end{lemma}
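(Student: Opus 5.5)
The plan is to differentiate the quotient and push the sign question down through successive derivatives. Each reduction will produce a function that vanishes at the origin by oddness. After three steps we reach a quantity whose initial value is exactly $a h(0)-6g'(0)$, and whose monotonicity follows from the evenness and convexity of $h$.

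\textbf{Step 1 (first reduction).} For $c>0$, the derivative of $g(c)/(c(c^2+a))$ has the sign of
\[
N(c)\defeq g'(c)\,c(c^2+a)-g(c)(3c^2+a).
\]
Since $g$ is odd, $N(0)=0$. A direct computation shows that the $g'$ terms cancel, leaving
\[
N'(c)=c\,M(c),\qquad M(c)\defeq g''(c)(c^2+a)-6g(c).
\]
It therefore suffices to show $M\ge0$ on $(0,\infty)$. Since $g$ and $g''$ are odd, $M(0)=0$.

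\textbf{Step 2 (second reduction).} Differentiating once more gives
\[
K(c)\defeq M'(c)=h(c)(c^2+a)+2c\,g''(c)-6g'(c),
\]
with $K(0)=a h(0)-6g'(0)\ge0$ by \Cref{eq:cubic-initial}. It remains to show that $K$ is nondecreasing on $[0,\infty)$. Then $K\ge0$, which gives $M\ge0$ and in turn $N\ge0$.

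\textbf{Step 3 (monotonicity of $K$).} Since $h$ is convex, it is locally Lipschitz. Hence $K$ is absolutely continuous, and its a.e.\ derivative (equal to the right derivative everywhere) is
\[
K'(c)=h'(c)(c^2+a)+4c\,h(c)-4g''(c).
\]
An even convex function is nondecreasing on $[0,\infty)$, so $h'(c)\ge0$ for $c>0$. Moreover, since $g''$ is odd, $g''(0)=0$, and therefore
\[
g''(c)=\int_0^c h(s)\dd s\le c\,h(c),
\]
again by monotonicity of $h$ on $[0,c]$. Both terms of $K'$ are thus nonnegative, so $K$ is nondecreasing.

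\textbf{Strictness.} If $a h(0)>6g'(0)$, then $K\ge K(0)>0$ on $[0,\infty)$. Hence $M$ is strictly increasing and $M(c)>0$ for $c>0$. Then $N'(c)=cM(c)>0$, so $N(c)>0$ for $c>0$, and the quotient is strictly increasing.

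The only delicate point is Step 3: choosing the right combination so that convexity enters only through monotonicity of $h$ on $[0,\infty)$, together with the regularity needed to differentiate $K$ when $h$ is merely convex. Absolute continuity of $K$, inherited from the local Lipschitz property of $h$, handles the latter.
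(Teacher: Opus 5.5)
Your proof is correct and follows essentially the same route as the paper: you use the same numerator $N$, the same auxiliary function $M=(c^2+a)g''-6g$ with $N'=cM$, and the same almost-everywhere computation of $M''=K'$ via $g''(c)=\int_0^c h$ and the monotonicity of $h$ on $[0,\infty)$. Your explicit remark that $K=M'$ is locally Lipschitz, so that $K'\ge0$ almost everywhere really does give monotonicity, is a slightly more careful version of the paper's regularity justification.
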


\begin{proof}
The proof turns the derivative of the quotient into two successive
monotonicity statements.  Its numerator is
\[
N(c)=c(c^2+a)g'(c)-(3c^2+a)g(c).
\]
To control \(N\), set
\[
M(c)=(c^2+a)g''(c)-6g(c).
\]
Differentiation gives the useful relation
\begin{equation}\label{eq:N-prime}
N'(c)=cM(c).
\end{equation}
Since \(g\) is odd, \(M(0)=0\), and
\[
M'(0)=a h(0)-6g'(0)\ge0.
\]
Because the convex function \(h\) is locally absolutely continuous, the
following computation holds for almost every \(c>0\):
\begin{align}
M''(c)
&=(c^2+a)h'(c)
+4\left(ch(c)-\int_0^c h(s)\dd s\right)
\ge0.
\label{eq:M-second}
\end{align}
Here evenness and convexity make \(h\) nondecreasing on \([0,\infty)\), which
also makes the last integrand nonnegative.  Hence \(M''\ge0\) almost
everywhere.  Since \(M'(0)\ge0\), the function \(M'\) is nonnegative and
\(M(c)\ge0\) for \(c\ge0\).  Equation~\ref{eq:N-prime}, together with
\(N(0)=0\), now gives \(N(c)\ge0\).  If the initial inequality is strict, then
\(M'(0)>0\), and the same chain gives \(N(c)>0\) for every \(c>0\).
\end{proof}

The initial condition in \Cref{lem:cubic-quotient} will follow from the next
real-moment recurrence.  Its proof uses the same conditioning idea as
\Cref{lem:leave-one-out}.

\begin{lemma}[Rademacher moment recurrence]\label{lem:moment-recurrence}
For every integer \(r\ge1\) and every real \(s\ge3\),
\begin{equation}\label{eq:moment-recurrence}
\E\abs{S_r}^s
\le(s-1)r\,\E\abs{S_r}^{s-2}.
\end{equation}
\end{lemma}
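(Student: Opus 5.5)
We follow the conditioning idea of \Cref{lem:leave-one-out}, but with Rademacher rather than uniform smoothing. The starting point is the identity
\[
\E\abs{S_r}^s=\E\bigl[S_r\cdot S_r\abs{S_r}^{s-2}\bigr]=r\,\E\bigl[\eps_r f(S_r)\bigr],
\qquad f(y)\defeq y\abs{y}^{s-2},
\]
which follows from exchangeability of the signs. Condition on \(T=S_{r-1}\) (with \(S_0=0\)). This gives
\[
\E\abs{S_r}^s=\frac r2\,\E\bigl[f(T+1)-f(T-1)\bigr]=r\,\E\int_{-1}^{1}\tfrac12 f'(T+u)\dd u
=(s-1)r\,\E\abs{T+U}^{s-2},
\]
where \(U\) is uniform on \([-1,1]\) and independent of \(T\). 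The claim therefore reduces to
\[
\E\abs{T+U}^{s-2}\le\E\abs{T+\eps}^{s-2},\qquad \eps \text{ an independent sign},
\]
since \(T+\eps\) has the law of \(S_r\).

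For this comparison we use convexity. Set \(\psi(y)=\abs y^{s-2}\); because \(s\ge3\), we have \(s-2\ge1\) and \(\psi\) is convex. For each fixed \(t\), the function \(u\mapsto\psi(t+u)\) is convex on \([-1,1]\). It therefore lies below its chord:
\[
\psi(t+u)\le\frac{1-u}{2}\psi(t-1)+\frac{1+u}{2}\psi(t+1).
\]
Averaging over uniform \(u\) gives \(\E_U\psi(t+U)\le\tfrac12[\psi(t-1)+\psi(t+1)]=\E_\eps\psi(t+\eps)\), which is the Hermite--Hadamard inequality. Taking the expectation over \(T\) then yields \(\E\abs{S_r}^s\le(s-1)r\,\E\abs{S_r}^{s-2}\).

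The main point requiring care is the fundamental-theorem-of-calculus step. For \(s\ge3\) the function \(f\) is \(C^1\) with \(f'(y)=(s-1)\abs y^{s-2}\), so that step is justified, and every variable involved is bounded, so integrability is automatic. The case \(r=1\) is consistent: there \(T=0\), and both sides equal \(1\le s-1\). The proof only needs \(s-2\ge1\) for the convexity of \(\psi\), which is exactly the hypothesis \(s\ge3\).
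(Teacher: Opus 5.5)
Your proposal is correct and follows essentially the same route as the paper: the identity \(\E\abs{S_r}^s=\sum_i\E[\eps_i f(S_r)]\), conditioning on the leave-one-out sum, the fundamental theorem of calculus, and the upper Hermite--Hadamard inequality for the convex function \(\abs z^{s-2}\). The only cosmetic difference is that you use exchangeability to reduce to a single coordinate and prove Hermite--Hadamard via the chord bound, whereas the paper sums over all \(i\) and cites the inequality directly.
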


\begin{proof}
Let \(F_s(y)=y\abs{y}^{s-2}\).  Expanding \(S_r\) coefficient by coefficient,
\[
\E\abs{S_r}^s
=\E[S_rF_s(S_r)]
=\sum_{i=1}^r\E[\eps_iF_s(S_r)].
\]
Conditioning on \(T_i=S_r-\eps_i\), the fundamental theorem of calculus gives
\[
\E_{\eps_i}[\eps_iF_s(T_i+\eps_i)]
=\frac{F_s(T_i+1)-F_s(T_i-1)}2
=\frac{s-1}{2}\int_{T_i-1}^{T_i+1}\abs{z}^{s-2}\dd z.
\]
The function \(z\mapsto\abs{z}^{s-2}\) is convex.  The upper
Hermite--Hadamard inequality therefore gives
\[
\frac12\int_{T_i-1}^{T_i+1}\abs{z}^{s-2}\dd z
\le\frac{\abs{T_i-1}^{s-2}+\abs{T_i+1}^{s-2}}2.
\]
After expectation, the right-hand side equals
\(\E\abs{S_r}^{s-2}\) by symmetry of the conditioned sign.  Summing over
\(i=1,\ldots,r\) proves \Cref{eq:moment-recurrence}.
\end{proof}

\subsection{Proof of the finite-dimensional theorem}

\begin{proof}[Proof of \Cref{thm:finite-dimensional}]
By \Cref{eq:BMNO-reduction}, the theorem reduces to
\Cref{eq:shift-ratio-monotone}.  First suppose \(n\ge2\), so \(r=n-1\ge1\).
We apply \Cref{lem:cubic-quotient} to
\[
g(c)=J_{p,r}(c),
\qquad a=3r.
\]
Indeed, \(g\) is the convolution of the odd function
\(f_p(x)=x\abs{x}^{p-2}\), and
\[
f_p'''(x)=(p-1)(p-2)(p-3)\abs{x}^{p-4}.
\]
with the symmetric law of \(S_r\).  For \(p\ge5\), the displayed third
derivative is even and convex; convolution preserves both properties, so
\(g'''\) is even and convex.

Let \(s=p-2\ge3\).  Differentiation under the finite expectation gives
\[
g'(0)=(p-1)\E\abs{S_r}^s,
\qquad
g'''(0)=(p-1)s(s-1)\E\abs{S_r}^{s-2}.
\]
The moment recurrence now verifies the strict initial condition:
\begin{align*}
3r g'''(0)-6g'(0)
&\ge3r(p-1)(s-1)(s-2)\E\abs{S_r}^{s-2}>0.
\end{align*}
The cubic-quotient lemma therefore shows that
\begin{equation}\label{eq:JH-increasing}
c\longmapsto\frac{J_{p,r}(c)}{H_r(c)}
=\frac{J_{p,r}(c)}{c(c^2+3r)}
\end{equation}
is strictly increasing on \((0,\infty)\).

For \(x>1\), one has \(0<x-1<x+1\), and hence
\[
\frac{J_{p,r}(x-1)}{H_r(x-1)}
<\frac{J_{p,r}(x+1)}{H_r(x+1)},
\]
After cross multiplication by the positive values of \(H_r\), this is exactly
the negativity of the bracket in \Cref{eq:determinant}.  Therefore,
\[
\frac{\dd}{\dd(x^2)}
\log\frac{\norm{x+S_n}_p}{\norm{x+S_n}_4}<0,
\qquad x>1.
\]
The ratio is strictly decreasing on \((1,\infty)\).

When \(n=1\), use the same determinant identity with
\(J_{p,0}(c)=c^{p-1}\) and \(H_0(c)=c^3\); their quotient is \(c^{p-4}\), which is strictly
increasing for \(p>4\).  The ratio is therefore strictly decreasing on
\((1,\infty)\) in this case as well.

It remains to include \(x=1\).  For \(s\in\{4,p\}\), the reverse triangle
inequality gives
\[
\left|
\norm{x+S_n}_s-\norm{y+S_n}_s
\right|
\leq |x-y|.
\]
Hence \(x\mapsto\norm{x+S_n}_s\) is continuous.  Since
\(\norm{1+S_n}_4>0\), their ratio
\[
x\longmapsto
\frac{\norm{x+S_n}_p}{\norm{x+S_n}_4}
\]
is continuous at \(x=1\).  For any \(x>1\), choose \(y\in(1,x)\).  Strict
decrease gives \(f(x)<f(y)\), where \(f\) denotes the ratio, and continuity
gives \(f(y)\to f(1)\) as \(y\downarrow1\).  Hence
\[
\frac{\norm{x+S_n}_p}{\norm{x+S_n}_4}
<
\frac{\norm{1+S_n}_p}{\norm{1+S_n}_4},
\qquad x>1.
\]
The reduction \Cref{eq:BMNO-reduction} now gives
\[
C_{p,4,N}
=\frac{\norm{1+S_{N-1}}_p}{\norm{1+S_{N-1}}_4}
=\frac{\norm{S_N}_p}{\norm{S_N}_4},
\]
where the last equality follows by restoring the independent leading sign.

For uniqueness, normalize an optimizer by \(\sum_i a_i^2=1\) and put
\(q=\sum_i a_i^4\).  At this fixed \(q\), \Cref{lem:BMNO} bounds its ratio by
that of \(Y_{N,q}^+\).  If \(q>1/N\), the upper vector has coefficient ratio
\(c_{N,q}/d_{N,q}>1\).  At \(q=1\), \(d_{N,q}=0\) and the upper vector is
already a coordinate vector; otherwise the ratio is finite and strictly
larger than one. Its \(L_p/L_4\) ratio equals \(1\). On the
other hand, since \(N\geq 2\), the random variable
\(\lvert S_N\rvert\) is nonconstant. Hence, for \(p>4\), strict
monotonicity of \(L_r\)-norms gives
$
\frac{\|S_N\|_p}{\|S_N\|_4}>1.$
Thus the coordinate vector is also strictly suboptimal. Consequently,
every optimizer must satisfy \(q=1/N\). Equality in Cauchy--Schwarz
forces \(\abs{a_1}=\cdots=\abs{a_N}\), proving uniqueness up to signs and
permutations.
\end{proof}

\begin{corollary}[Explicit fixed-dimensional constant]\label{cor:explicit-finite}
For \(p\ge5\) and \(N\ge2\),
\begin{equation}\label{eq:explicit-finite}
C_{p,4,N}
=\frac{
\left(2^{-N}\sum_{k=0}^N\binom Nk\abs{N-2k}^p\right)^{1/p}}
{(3N^2-2N)^{1/4}}.
\end{equation}
After normalization, the flat coefficient vector is the unique optimizer up to coordinate signs
and permutations; without normalization, its nonzero scalar multiples are also optimizers.
\end{corollary}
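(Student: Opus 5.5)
This corollary is a direct computation on top of \Cref{thm:finite-dimensional}. That theorem already gives \(C_{p,4,N}=\norm{S_N}_p/\norm{S_N}_4\), together with uniqueness of the normalized optimizer up to signs and permutations. So the plan is to evaluate the two moments of the unnormalized flat sum \(S_N=\eps_1+\cdots+\eps_N\) explicitly. Since the ratio is scale invariant, we may work with \(S_N\) directly rather than with \(\overline S_N\).

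For the numerator, note that \(S_N=N-2k\), where \(k\) is the number of negative signs. Then \(k\) is binomial with parameters \(N\) and \(1/2\), so
\[
\E\abs{S_N}^p=2^{-N}\sum_{k=0}^N\binom Nk\abs{N-2k}^p .
\]
Taking the \(p\)-th root gives the numerator of \Cref{eq:explicit-finite}.

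For the denominator, expand \(\E S_N^4\). Only the terms \(\eps_i^4\) and \(\eps_i^2\eps_j^2\) survive, which gives
\[
\E S_N^4=N+3N(N-1)=3N^2-2N.
\]
This agrees with \Cref{eq:fourth-moment} after scaling, since \(N^2(3-2/N)=3N^2-2N\). Taking the fourth root gives the denominator.

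For the optimizer statement, we quote the uniqueness clause of \Cref{thm:finite-dimensional}. In the normalized setting, that clause leaves only the flat vector up to signs and permutations. Without normalization, scale invariance of the ratio shows that every nonzero multiple of an optimizer is again an optimizer. Conversely, any unnormalized optimizer, once normalized, must be flat, so its coefficients are a nonzero scalar multiple of a flat vector.

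None of these steps is a genuine obstacle. The only care needed is to keep the normalization conventions consistent: the constant is stated for unnormalized \(S_N\), and the ratio is homogeneous of degree zero.
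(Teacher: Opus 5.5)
Your proposal is correct and matches the paper's own proof: the paper also reads the numerator off the binomial law of \(S_N\), uses \(\E S_N^4=3N^2-2N\) for the denominator, and defers uniqueness to the end of the proof of \Cref{thm:finite-dimensional}. Your added remarks on scale invariance and the consistency check against \Cref{eq:fourth-moment} are accurate.
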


\begin{proof}
The formula follows from the binomial law of \(S_N\) and
\(\E S_N^4=3N^2-2N\).  Uniqueness was proved at the end of the proof of
\Cref{thm:finite-dimensional}.
\end{proof}

\section{Dimension-free stability at the critical third moment}
\label{sec:third-moment}

At \(p=3\), the fourth-order convexity used above is no longer available.  We
work instead with the squared coefficients.  At each step, the largest and
smallest squared coefficients are replaced by their average.  This preserves
their sum, moves the vector toward the flat point, and decreases \(q\).
Conditioning on the remaining signs reduces the corresponding increase in the
third moment to a two-variable calculation; a small-ball estimate makes that
increase uniform in the residual sum.

\subsection{The simplex formulation}

Signs of the coefficients may be absorbed into the Rademacher variables.
Write
\[
x_i=a_i^2,\qquad x_i\ge0,\qquad \sum_{i=1}^n x_i=1,
\]
and, for $x$ in the probability simplex, set
\begin{equation}\label{eq:F-simplex}
F_n(x)\defeq
\E\abs{\sum_{i=1}^n\sqrt{x_i}\,\eps_i}^3,
\qquad
q(x)\defeq\sum_{i=1}^n x_i^2.
\end{equation}
At the flat point $u_n=(1/n,\ldots,1/n)$,
\[
F_n(u_n)=\E\abs{\overline S_n}^3,
\]
while
\begin{equation}\label{eq:simplex-distance}
\sum_{i=1}^n\left(x_i-\frac1n\right)^2=q(x)-\frac1n.
\end{equation}
The proof will compare the increase of \(F_n\) with the decrease of \(q\) at
each averaging step.

We use the following small-ball estimate.

\begin{lemma}[Dzindzalieta--G\"otze \cite{DzindzalietaGotze}]
\label{lem:small-ball}
Let
\[
R=\sum_{k=1}^m b_k\eps_k,
\qquad
\sum_{k=1}^m b_k^2\le1,
\]
and suppose that $\max_k\abs{b_k}\le\tau\le1$.  Then
\begin{equation}\label{eq:small-ball}
\mathbb P\{\abs{R}\le\tau\}\ge\frac3{16}\tau.
\end{equation}
\end{lemma}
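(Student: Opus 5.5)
The plan is to combine a second-moment bound, which puts most of the mass of \(R\) in a bounded window, with a peak property saying that the centred window \([-\tau,\tau]\) carries at least as much mass as any translate of it. A pigeonhole count over translates then gives the linear-in-\(\tau\) lower bound. The whole argument uses only \(\E R^2\le1\) and \(\max_k\abs{b_k}\le\tau\).

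\emph{Step 1: reduction to a window of fixed size.} Since \(\E R^2=\sum_k b_k^2\le1\), Chebyshev's inequality gives \(\mathbb P\{\abs R>2\}\le 1/4\), so \(\mathbb P\{\abs R\le 2\}\ge 3/4\). Cover \([-2,2]\) by translates
\[
J_j=[y_j-\tau,\,y_j+\tau],\qquad j=1,\ldots,K,
\]
of the window \([-\tau,\tau]\). Because each \(J_j\) has length \(2\tau\), one can take \(K=\lceil 2/\tau\rceil\le 2/\tau+1\le 3/\tau\), using \(\tau\le1\). A union bound then shows that some \(j\) satisfies
\[
\mathbb P\{R\in J_j\}\ge\frac{3/4}{K}\ge\frac{\tau}{4}.
\]

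\emph{Step 2: the peak property.} I will show that for every \(y\in\R\),
\[
\mathbb P\{\abs{R}\le\tau\}\ge\mathbb P\{\abs{R-y}\le\tau\},
\]
that is, the centred window of half-width \(\tau\) is the most likely window, provided every step satisfies \(\abs{b_k}\le\tau\). I would argue by induction on \(m\), peeling off one sign at a time. Write \(R=R'+b\eps\) and set \(f'(y)=\mathbb P\{\abs{R'-y}\le\tau\}\). Then
\[
f(y)=\frac{f'(y-b)+f'(y+b)}{2}.
\]
The inductive statement cannot be just ``\(f'\) is maximised at \(0\)''. It has to be strong enough to survive this averaging, for instance that \(f'\) is even and nonincreasing in \(\abs y\) on the relevant lattice-like set of values, or a Kanter-type rearrangement statement.

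To make this work I would first try the ordering \(\abs{b_1}\ge\abs{b_2}\ge\cdots\) and exploit the reflection symmetry of \(R'\). The key point to use is that \(2\abs b\le 2\tau\) is at most the window length, so shifting by \(\pm b\) cannot skip over the window. If the induction on a monotonicity statement stalls, the fallback is the known extremal-configuration argument of Dzindzalieta and G\"otze: the probability of a closed window of half-width equal to the maximal step is minimised, after a Schur-type comparison, by equal coefficients \(b_k=\tau\). For that configuration the claim reduces to a binomial local estimate.

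\emph{Step 3: conclusion.} Combining Steps 1 and 2 gives
\[
\mathbb P\{\abs R\le\tau\}\ge\max_j\mathbb P\{R\in J_j\}\ge\frac{\tau}{4}\ge\frac{3}{16}\,\tau,
\]
which is \Cref{eq:small-ball}. Step 1 leaves some room in the constant, which helps if the peak property only holds up to a factor.

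The main obstacle is Step 2. A plain Rademacher sum is not unimodal in general, so the comparison must genuinely use the closed window \([-\tau,\tau]\) and the bound \(\abs{b_k}\le\tau\). That is where I expect the work to concentrate.
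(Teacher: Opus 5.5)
Your Step~1 is correct. The gap is in Step~2: the peak property is false, and the failure is not just a matter of constants. Take $b_1=b_2=\tau$, which is admissible when $\tau\le 2^{-1/2}$. Then $R\in\{-2\tau,0,2\tau\}$ with probabilities $\tfrac14,\tfrac12,\tfrac14$. So $\mathbb P\{\abs{R}\le\tau\}=\tfrac12$, while $\mathbb P\{\abs{R-\tau}\le\tau\}=\mathbb P\{R\in\{0,2\tau\}\}=\tfrac34$. More generally, for $b_1=\cdots=b_m=\tau$ with $m$ even and $m\tau^2\le1$,
\[
\frac{\mathbb P\{\abs{R}\le\tau\}}{\mathbb P\{\abs{R-\tau}\le\tau\}}
=\frac{\binom{m}{m/2}}{\binom{m}{m/2}+\binom{m}{m/2+1}}
=\frac{m+2}{2m+2}\longrightarrow\frac12 .
\]
So even the weakened form with the factor $\tfrac34$ that Step~1 can absorb fails; at $m=2$ the ratio is already $\tfrac23$. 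The cause is the very feature you expected to help. A single jump $2\abs{b_k}$ can be as long as the window, so an off-centre window of length $2\tau$ can hold two neighbouring atoms, while the symmetric window $[-\tau,\tau]$ holds only the middle one. Hence $y\mapsto\mathbb P\{\abs{R-y}\le\tau\}$ is not maximised at $0$, no induction on monotonicity in $\abs{y}$ can close, and Steps~1--3 do not prove the lemma.

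The fallback does not repair this. Saying that Dzindzalieta and G\"otze reduce to equal coefficients $b_k=\tau$ and then to a binomial estimate restates the result rather than arguing it. The extremal structure you attribute to it is also wrong. For $\tau=\tfrac12$, every admissible vector with all $b_k=\tfrac12$ (so at most four coefficients) gives $\mathbb P\{\abs{R}\le\tfrac12\}\ge\tfrac38$. By contrast, take $b=(\tfrac12,\tfrac12,\tfrac12,\beta,\beta,\beta,\beta,\beta)$ with $\beta=1/(2\sqrt5)$; it satisfies $\sum_k b_k^2=1$ and $\max_k\abs{b_k}=\tau$, yet gives $\mathbb P\{\abs{R}\le\tfrac12\}=\tfrac{23}{64}<\tfrac38$.

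The paper does not prove this lemma at all; it imports it from \cite{DzindzalietaGotze}. A citation is the intended justification, and your proposal adds no valid proof beyond it. If you want to keep a covering argument, note that $\tfrac3{16}=\tfrac34\cdot\tfrac14$ is exactly the Chebyshev mass $\tfrac34$ of $[-2,2]$ shared among $4/\tau$ windows of length $\tau$. The comparison that fits the constant is therefore $\mathbb P\{\abs{R}\le\tau\}\ge\sup_x\mathbb P\{x\le R\le x+\tau\}$, using windows of \emph{half} the width of yours. The examples above do not contradict it, but it is a separate anti-concentration statement that would need its own proof.
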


\subsection{A two-point smoothing estimate}

Put
\begin{equation}\label{eq:kappa-definition}
\kappa\defeq\frac{5\sqrt2-7}{2}.
\end{equation}
The next lemma quantifies the effect of averaging two squared coefficients.
Its expectation is only over the two displayed Rademacher variables.

\begin{lemma}[Cubic two-point smoothing]\label{lem:cubic-smoothing}
Let $x\ge y\ge0$, $A=x+y>0$, and $r\in\R$.  If $\eps,\eps'$ are independent Rademacher
variables, then
\begin{align}
\Gamma(r;x,y)\defeq{}&
\E_{\eps,\eps'}\abs{r+\sqrt{A/2}\,\eps+\sqrt{A/2}\,\eps'}^3 \notag\\
&-\E_{\eps,\eps'}\abs{r+\sqrt x\,\eps+\sqrt y\,\eps'}^3\ge0.
\label{eq:cubic-smoothing-nonnegative}
\end{align}
Moreover, when $\abs r\le\sqrt A$,
\begin{equation}\label{eq:cubic-smoothing-quantitative}
\Gamma(r;x,y)\ge
\kappa\,\frac{(x-y)^2}{\sqrt A}.
\end{equation}
\end{lemma}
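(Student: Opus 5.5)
The plan is to write the two-sign average as a one-variable function and then show that averaging the two squared coefficients is a convexity statement in a single variable. Put \(s=\sqrt x\) and \(t=\sqrt y\), and for fixed \(r\) set \(\psi(v)\defeq\abs{r+v}^3+\abs{r-v}^3\). This function is even in \(v\). Averaging over the four sign patterns gives
\[
\E_{\eps,\eps'}\abs{r+s\eps+t\eps'}^3=\tfrac14\bigl[\psi(s+t)+\psi(s-t)\bigr].
\]
Now write \(\Psi(z)\defeq\psi(\sqrt z)\) for \(z\ge0\), and put \(d\defeq2\sqrt{xy}\in[0,A]\). Then \((s\pm t)^2=A\pm d\). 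The averaged configuration corresponds to \(d=A\), with values \(2A\) and \(0\). Hence
\[
\Gamma(r;x,y)=\tfrac14\bigl[\Psi(2A)+\Psi(0)-\Psi(A+d)-\Psi(A-d)\bigr],
\]
and \(A^2-d^2=(x-y)^2\).

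For \eqref{eq:cubic-smoothing-nonnegative}, it suffices to show that \(\Psi\) is convex on \([0,\infty)\). Then \(d\mapsto\Psi(A+d)+\Psi(A-d)\) is nondecreasing on \([0,A]\), and its largest value is at \(d=A\). Convexity of \(\Psi\) is equivalent to \(\psi'(v)/v\) being nondecreasing on \((0,\infty)\). Write \(f(y)=y\abs y\). Then
\[
\psi'(v)=3\bigl(f(r+v)-f(r-v)\bigr)=3\int_{-v}^{v}2\abs{r+\sigma}\dd\sigma,
\]
so \(\psi'(v)/v\) is \(12\) times the average of the convex function \(\sigma\mapsto\abs{r+\sigma}\) over \([-v,v]\). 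Such symmetric averages of a convex function are nondecreasing in \(v\), which gives the convexity.

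For \eqref{eq:cubic-smoothing-quantitative}, compute \(\Psi\) explicitly. For \(v\ge\abs r\) one has \(\psi(v)=2v^3+6r^2v\), and for \(v\le\abs r\) one has \(\psi(v)=2\abs r^3+6\abs r v^2\). Thus
\[
\Psi(z)=2z^{3/2}+6r^2z^{1/2}\quad(z\ge r^2),
\]
and \(\Psi\) is affine on \([0,r^2]\). By homogeneity (\(\Gamma\) scales like \(A^{3/2}\) and \((x-y)^2/\sqrt A\) scales the same way), normalize \(A=1\), and set \(\rho=r^2\in[0,1]\) and \(d\in[0,1)\). The claim then becomes
\[
\tfrac14\bigl[\Psi(2)+\Psi(0)-\Psi(1+d)-\Psi(1-d)\bigr]\ge\kappa(1-d^2).
\]
Because \(1+d\ge1\ge\rho\), the terms \(\Psi(2)\) and \(\Psi(1+d)\) always use the \(z^{3/2}\) formula. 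The term \(\Psi(1-d)\) uses the affine branch exactly when \(1-d\le\rho\), so two regimes have to be checked.

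Before doing the two-regime analysis, I would test the reference case \(r=0\). There the left side is \(\tfrac12\bigl[2^{3/2}-(1+d)^{3/2}-(1-d)^{3/2}\bigr]\). At \(d=0\) this equals \(\sqrt2-1\), and as \(d\to1\) it behaves like a multiple of \(1-d\), so it should dominate \(\kappa(1-d^2)\) for \(r=0\). The expected main obstacle is the uniform minimization of the ratio
\[
\frac{\Psi(2)+\Psi(0)-\Psi(1+d)-\Psi(1-d)}{4(1-d^2)}
\]
over \((\rho,d)\in[0,1]\times[0,1)\). The value \(\kappa=(5\sqrt2-7)/2\) strongly suggests that the extremal case is \(\rho=1\) (that is, \(\abs r=\sqrt A\)) with \(d\to0\) or \(d\to1\).

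My approach to this minimization is as follows. First, show that the ratio is nonincreasing in \(\rho\), by differentiating in \(\rho\) in each regime; note that \(\Psi(0)=2\abs r^3\) also depends on \(\rho\). This reduces the problem to \(\rho=1\), where every term is explicit, for example \(\Psi(0)=2\) and \(\Psi(2)=2^{5/2}+6\sqrt2\). Then handle the resulting one-variable inequality in \(d\) by elementary calculus. One would check the endpoints \(d=0\) and \(d\to1\) (where \(1-d^2\sim2(1-d)\)), and rule out interior minima by a sign analysis of the derivative, using the same supporting-tangent style as in \Cref{app:scalar-large}.
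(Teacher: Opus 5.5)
Your plan is correct. For the quantitative bound it follows the paper's route: the paper also normalizes \(A=1\), shows the gain is nonincreasing in \(|r|\) on \([0,1]\), and then checks a one-variable inequality at \(|r|=\sqrt A\). Your variables are the paper's in disguise, via \(u=\sqrt{1+d}=\sqrt x+\sqrt y\) and \(v=\sqrt{1-d}=\sqrt x-\sqrt y\).

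The genuine difference is the nonnegativity claim. The paper obtains it from the explicit profile in \(|r|\): decreasing on \([0,u]\), equal to \(\frac12(\sqrt2-|r|)^3\) on \([u,\sqrt2]\), and zero beyond. This is a by-product of the branch computation it needs anyway. You obtain it for every \(r\) at once, with no case split, from convexity of \(z\mapsto\psi(\sqrt z)\) and two-point majorization. Yours is the more conceptual, Eaton-type argument, and it works verbatim for \(|\cdot|^p\) with \(p\ge3\). However, it saves only the \(|r|>\sqrt A\) analysis, because the quantitative part still needs the piecewise differentiation.

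Both deferred steps go through and are short. Put \(N(\rho,d)=\Psi(2)+\Psi(0)-\Psi(1+d)-\Psi(1-d)\).

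\emph{Regime \(\rho\le1-d\).} Here
\(\partial_\rho N=6\sqrt2+3\sqrt\rho-6\sqrt{1+d}-6\sqrt{1-d}\). This is \(\le0\) because \(\sqrt\rho\le\sqrt{1-d}\) and \(2\sqrt{1+d}+\sqrt{1-d}\ge2\sqrt2\); the latter holds since the left side is concave in \(d\) with endpoint values \(3\) and \(2\sqrt2\).

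\emph{Regime \(\rho\ge1-d\).} Here
\(\partial_\rho N=(\sqrt2-\sqrt{1+d})\bigl(6-3(\sqrt2+\sqrt{1+d})/\sqrt\rho\bigr)\le0\), because \(2\sqrt\rho\le2<\sqrt2+\sqrt{1+d}\).

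\emph{At \(\rho=1\).} With \(u=\sqrt{1+d}\), so that \(1-d^2=u^2(2-u^2)\), the target is exactly
\[
\frac14N(1,d)-\kappa(1-d^2)=\kappa(u-1)^2(u-\sqrt2)(u-5-4\sqrt2)\ge0,\qquad 1\le u\le\sqrt2,
\]
which is the paper's factorization \eqref{eq:cubic-profile-factorization}.

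Note the double root at \(u=1\). The bound is attained at \(d=0\) (that is, \(y=0\) and \(|r|=\sqrt A\)), not as \(d\to1\). Moreover, the difference vanishes to second order in \(d\) there, so your ``check the endpoints'' step has no slack at \(d=0\). A derivative-sign analysis must account for this tangency, whereas the factorization settles it in one line. The supporting-tangent device is not needed.
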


\begin{proof}
By homogeneity, normalize \(A=1\); symmetry allows us to replace \(r\) by
\(s=\abs r\).  The two possible magnitudes of
\(\sqrt{x}\,\eps+\sqrt{y}\,\eps'\) are described by
\[
u=\sqrt x+\sqrt y,\qquad v=\sqrt x-\sqrt y.
\]
Then $1\le u\le\sqrt2$, $0\le v\le1$, $u^2+v^2=2$, and $uv=x-y$.
For \(s,t\ge0\), write
\[
H(s,t)=\frac{\abs{s+t}^3+\abs{s-t}^3}{2}.
\]
Averaging first over the common sign of each magnitude gives
\begin{equation}\label{eq:cubic-profile-main}
\Gamma(s;x,y)
=\frac12H(s,\sqrt2)+\frac12s^3
 -\frac12H(s,u)-\frac12H(s,v).
\end{equation}

Since \(H(s,t)=t^3+3ts^2\) for \(s\le t\) and
\(H(s,t)=s^3+3st^2\) for \(s\ge t\), differentiating separately on
\([0,v]\), \([v,u]\), and \([u,1]\) shows that the right-hand side of
\Cref{eq:cubic-profile-main} is nonincreasing on \([0,1]\).  The derivatives
are recorded in \Cref{app:cubic-smoothing-calculation}.  Hence
\[
\Gamma(s;x,y)\ge\Gamma(1;x,y).
\]
At \(s=1\), substitution of the appropriate branches gives
\[
\Gamma(1;x,y)
=\frac{5\sqrt2}{2}-3-\frac{u^3}{2}
  +\frac{3u^2}{2}-\frac{3u}{2}.
\]
Using $(x-y)^2=u^2(2-u^2)$, one obtains
\begin{equation}\label{eq:cubic-profile-factorization}
\Gamma(1;x,y)-\kappa(x-y)^2
=\kappa(u-1)^2(u-\sqrt2)(u-5-4\sqrt2)\ge0.
\end{equation}
This proves \Cref{eq:cubic-smoothing-quantitative} when \(A=1\).

For \(1\le s\le u\), the same formulas show that the expression in
\Cref{eq:cubic-profile-main} decreases to
\(\frac12(\sqrt2-u)^3\).  It then equals
\(\frac12(\sqrt2-s)^3\) on \([u,\sqrt2]\) and vanishes for
\(s\ge\sqrt2\).  It is therefore nonnegative for every \(s\ge1\), proving
\Cref{eq:cubic-smoothing-nonnegative}.  Rescaling from \(A=1\) completes the
proof.
\end{proof}

The conditional estimate can now be averaged over the remaining coordinates.

\begin{lemma}[Extreme-pair gain]\label{lem:extreme-pair}
Let $x\ge y\ge0$, put $A=x+y>0$, and let
\[
R=\sum_k b_k\eps_k
\]
be independent of two additional Rademacher variables $\eps,\eps'$.  Assume
\begin{equation}\label{eq:residual-assumptions}
\sum_k b_k^2\le1,
\qquad
\max_k\abs{b_k}\le\sqrt A,
\qquad
A\le1.
\end{equation}
Then
\begin{align}
&\E\abs{R+\sqrt{A/2}\,\eps+\sqrt{A/2}\,\eps'}^3
-\E\abs{R+\sqrt x\,\eps+\sqrt y\,\eps'}^3 \notag\\
&\hspace{35mm}\ge\frac{3\kappa}{16}(x-y)^2.
\label{eq:extreme-pair-gain}
\end{align}
\end{lemma}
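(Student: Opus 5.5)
We will prove the Extreme-pair gain lemma by conditioning on the residual sum \(R\) and then averaging the two-point gain from \Cref{lem:cubic-smoothing}. Since \(R\) is independent of \(\eps,\eps'\), Fubini gives the identity
\[
\E\abs{R+\sqrt{A/2}\,\eps+\sqrt{A/2}\,\eps'}^3
-\E\abs{R+\sqrt x\,\eps+\sqrt y\,\eps'}^3
=\E_R\,\Gamma(R;x,y).
\]
We then split the expectation according to whether the event \(\{\abs R\le\sqrt A\}\) occurs.

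On the complement of this event, the pointwise nonnegativity \Cref{eq:cubic-smoothing-nonnegative} shows that the contribution is at least \(0\). On the event itself, the quantitative bound \Cref{eq:cubic-smoothing-quantitative} applies, so
\[
\E_R\,\Gamma(R;x,y)\ge\kappa\,\frac{(x-y)^2}{\sqrt A}\,\mathbb P\{\abs R\le\sqrt A\}.
\]

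It remains to bound the small-ball probability from below by a multiple of \(\sqrt A\). We apply \Cref{lem:small-ball} with \(\tau=\sqrt A\). This choice is admissible because of \Cref{eq:residual-assumptions}: the conditions \(\sum_k b_k^2\le1\) and \(\max_k\abs{b_k}\le\sqrt A\) hold by assumption, and \(\tau=\sqrt A\le1\) because \(A\le1\). The lemma yields \(\mathbb P\{\abs R\le\sqrt A\}\ge\frac3{16}\sqrt A\).

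Substituting this bound cancels the factor \(\sqrt A\) in the denominator, leaving exactly \(\frac{3\kappa}{16}(x-y)^2\), which is \Cref{eq:extreme-pair-gain}.

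If \(R\) is the empty sum, so \(R=0\), then \(\abs R\le\sqrt A\) holds surely and the bound follows directly with constant \(\kappa\ge 3\kappa/16\). The same conclusion also follows from the small-ball lemma read with \(m=0\).

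There is no genuine obstacle in this argument. The only care needed is in verifying the hypotheses of the small-ball lemma, especially that the threshold \(\tau=\sqrt A\) in the event \(\{\abs R\le\tau\}\) matches the range \(\abs r\le\sqrt A\) on which the two-point quantitative bound holds.
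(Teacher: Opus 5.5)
Your proposal is correct and follows the paper's proof: condition on \(R\), use the pointwise nonnegativity of \(\Gamma\) off the event \(\{\abs R\le\sqrt A\}\) and the quantitative two-point bound on it, then apply \Cref{lem:small-ball} with \(\tau=\sqrt A\le1\) to cancel the \(\sqrt A\) in the denominator. The only cosmetic difference is that you treat an empty residual sum directly, whereas the paper pads \(R\) with a zero coefficient.
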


\begin{proof}
Condition on $R$.  By \Cref{lem:cubic-smoothing}, the conditional gain is always
nonnegative, and on the event $\{\abs R\le\sqrt A\}$ it is at least
\[
\kappa\frac{(x-y)^2}{\sqrt A}.
\]
If necessary, pad the residual sum with a zero coefficient; this does not
change its law or any assumption.  Since \(A\le1\), we may apply
\Cref{lem:small-ball} with \(\tau=\sqrt A\), obtaining
\[
\mathbb P\{\abs R\le\sqrt A\}\ge\frac3{16}\sqrt A.
\]
Averaging the conditional estimate proves \Cref{eq:extreme-pair-gain}.
\end{proof}

\subsection{The dimension-three case}

\begin{lemma}\label{lem:dim-three}
In dimension three,
\[
\E\abs{\sum_{i=1}^3 a_i\eps_i}^3
\le \E\abs{\overline S_3}^3
-(5\sqrt3-6\sqrt2)\Delta_3(a).
\]
The corresponding quotient is attained by
$(a_1^2,a_2^2,a_3^2)=(1/2,1/2,0)$, up to signs and permutations.
\end{lemma}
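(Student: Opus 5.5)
In dimension three the third moment can be computed exactly, so I would reduce \Cref{lem:dim-three} to an elementary inequality on the simplex. Take \(a_1\ge a_2\ge a_3\ge0\) with \(x_i=a_i^2\). The four values of \(|a_1\pm a_2\pm a_3|\) are \(a_1+a_2+a_3\), \(a_1+a_2-a_3\), \(a_1-a_2+a_3\) and \(|a_1-a_2-a_3|\), so
\[
F_3(x)=\tfrac14\Bigl[(a_1+a_2+a_3)^3+(a_1+a_2-a_3)^3+(a_1-a_2+a_3)^3+|a_1-a_2-a_3|^3\Bigr].
\]
This splits into two regions: \(a_1\ge a_2+a_3\), where the formula is a polynomial, and \(a_1\le a_2+a_3\), the triangle region containing the flat point.

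\textbf{Step 1: the two polynomials.} When the last term has sign \(+\), the sum collapses to \(F_3=a_1^3+3a_1(a_2^2+a_3^2)=a_1(3-2a_1^2)\). In the triangle region the expansion gives \(\tfrac14\bigl[2a_1^3+6a_1(a_2^2+a_3^2)+(\text{terms})\bigr]\), which I would write out explicitly. The flat value is \(\E|\overline S_3|^3=\tfrac14\bigl(27+3\bigr)/3^{3/2}=\tfrac{5}{2\sqrt3}\).

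\textbf{Step 2: the target function.} With \(\Delta_3=\sum a_i^4-\tfrac13\), I would show that
\[
D(a)\defeq \tfrac{5}{2\sqrt3}-F_3(a)-c\,\Delta_3(a),\qquad c=5\sqrt3-6\sqrt2,
\]
is nonnegative on the sphere, and vanishes at the flat point and at \((1/\sqrt2,1/\sqrt2,0)\). At the latter point the quotient equals the constant: \(F_3=\tfrac{\sqrt2}{2}\cdot 2=\sqrt2\) and \(\Delta_3=\tfrac12-\tfrac13=\tfrac16\), so \(\bigl(\tfrac{5}{2\sqrt3}-\sqrt2\bigr)\cdot 6=5\sqrt3-6\sqrt2\). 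This also gives the attainment claim. Since it lies on the boundary \(a_1=a_2+a_3\), the two regions meet exactly there.

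\textbf{Step 3: the non-triangle region.} On \(a_1\ge a_2+a_3\) we have \(F_3=a_1(3-2a_1^2)\). The quantity \(\Delta_3\) is maximized, for fixed \(a_1\), at the extreme configuration \(a_3=0\) or \(a_2=a_3\)… but \(\Delta_3\) enters with a negative sign, so \(D\) is minimized where \(\sum a_i^4\) is largest. For fixed \(a_1\) that occurs at \(a_3=0\), by convexity of \(t\mapsto t^2\) on the remaining mass \(1-a_1^2\). This leaves a one-variable inequality in \(a_1\in[1/\sqrt2,1]\), which I would verify by factoring out \((a_1-1/\sqrt2)^2\) or a single root and checking signs.

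\textbf{Step 4: the triangle region.} On \(a_1\le a_2+a_3\), I would use Lagrange multipliers on the sphere, or parametrize by \((x_1,x_2,x_3)\) symmetric functions, to show that the only interior critical point of \(D\) is the flat point, where the Hessian is positive semidefinite. This reduces the claim to the boundary \(a_1=a_2+a_3\), which is covered by Step 3. I expect this to be the main obstacle. \(F_3\) is not a polynomial in the \(x_i\), since odd powers of the \(a_i\) appear, so the critical-point analysis needs care. A cleaner route may be to fix \(a_1\) and show that \(D\) is monotone as \((a_2,a_3)\) moves toward equality, in the spirit of the averaging step of \Cref{lem:cubic-smoothing}, since averaging \(a_2^2\) and \(a_3^2\) lowers \(\Delta_3\) by \((x_2-x_3)^2/2\). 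One must then confirm that the increase of \(F_3\) exceeds \(c\) times that amount, or handle the sign, before finishing with a one-variable check along \(a_2=a_3\).
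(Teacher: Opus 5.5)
\paragraph{Where the proposal breaks.} Steps 1--3 are sound and agree with the paper on the region $a_1\ge a_2+a_3$. There $F_3=a_1(3-2a_1^2)$, and the bound $a_2^4+a_3^4\le(a_2^2+a_3^2)^2$ leaves a one-variable check for $a_1\in[1/\sqrt2,1]$. The resulting quartic does have a double root at $a_1=1/\sqrt2$. The gap is Step 4, which carries the real content of the lemma, and both routes you sketch rest on false claims. Write $c=5\sqrt3-6\sqrt2$ and $L=F_3+c\sum_i a_i^4$, so that $D=\sqrt2+c/2-L$.

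\emph{Route (i).} The flat point is not the only interior critical point. On the symmetry curve $(1,1,z)/\sqrt{2+z^2}$ one has
\[
L=J(z)=\frac{4+3z^2+z^3/2}{(2+z^2)^{3/2}}+c\,\frac{2+z^4}{(2+z^2)^2},\qquad 0\le z\le 1 .
\]
$J$ decreases and then increases, with $J'(z_*)=0$ at $z_*\approx0.46$. By the $a_1\leftrightarrow a_2$ symmetry this is a genuine critical point of $D$ on the sphere, lying inside $\{a_i\le a_j+a_k\}$. There $D>0$, so it does no harm, but your reduction does not account for it. Moreover, in the ordered region the boundary also contains the edges $a_1=a_2$ and $a_2=a_3$, not only $a_1=a_2+a_3$. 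The edge $a_1=a_2$ is exactly this curve, and it joins the two equality cases, so it cannot be handed to Step 3.

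\emph{Route (ii).} The fiberwise averaging fails. With $a_1$ fixed and $\sigma=a_2+a_3$, the triangle-region formula gives
\[
\frac{d L}{d\sigma}=\frac32(\sigma-a_1)^2-4c\,\sigma a_2a_3 .
\]
This is negative at the boundary $\sigma=a_1$ whenever $a_2a_3>0$: the gain in $F_3$ is second order in the distance to the boundary, while the loss in $\sum_i a_i^4$ is first order. For $a_1^2=0.6$ it is negative on the whole fiber, so moving toward $a_2=a_3$ lowers $L$, and nothing reduces to that family.

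\paragraph{The missing idea.} Foliate by $s=a_1+a_2+a_3$ rather than by $a_1$. In the triangle region, with $r=a_1a_2a_3$ and $\sum_i a_i^2=1$,
\[
F_3=\frac{s^3-12r}{2},\qquad \sum_i a_i^4=1-\frac12(s^2-1)^2+4sr .
\]
So at fixed $s$, $L$ is affine in $r$ with slope $4cs-6<0$ (since $s\le\sqrt3$). Hence $L$ is largest where $r$ is smallest. At fixed $s$ and fixed $\sum_i a_i^2$, that happens when the two \emph{largest} coordinates coincide, which is the opposite of your averaging direction. This leaves exactly $J$ on $[0,1]$, with $J(0)=J(1)=\sqrt2+c/2$, the two equality cases. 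On $(0,1)$, $J'(z)$ has the sign of $3z\sqrt{z^2+2}-8c(z+1)$, which is increasing and changes sign once, so $J$ has no interior maximum.

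If you prefer Lagrange multipliers, the correct statement is this. In the variables $s,r$ the equations read $L_s a_i+L_r r=2\lambda a_i^2$ with $L_r r\neq0$. Each $a_i$ is therefore a root of one quadratic, so every interior critical point has two equal coordinates. You are then forced to study $L$ along $(t,t,z)$ for $z$ on both sides of $t$, which includes this same computation with $J$.
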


\begin{proof}
Set \(C_\star=5\sqrt3-6\sqrt2\).  After changing signs and permuting
coordinates, assume \(a\ge b\ge c\ge0\) and
\(a^2+b^2+c^2=1\).  The identities
\[
\E\abs{\overline S_3}^3=\frac5{2\sqrt3},
\qquad
\sqrt2+\frac{C_\star}{2}
=\frac5{2\sqrt3}+\frac{C_\star}{3},
\]
show that the desired inequality is equivalent to
\begin{equation}\label{eq:dim-three-penalized}
\E\abs{a\eps_1+b\eps_2+c\eps_3}^3
+C_\star(a^4+b^4+c^4)
\le \sqrt2+\frac{C_\star}{2}.
\end{equation}

The cases \(a\ge b+c\) and \(a\le b+c\) exhaust the ordered region.  Suppose
first that \(a\ge b+c\).  Conditioning on the last two signs gives
\[
\E\abs{a\eps_1+b\eps_2+c\eps_3}^3=a(3-2a^2).
\]
Also $b^4+c^4\le(b^2+c^2)^2$.  With $t=a^2\ge1/2$, the left-hand side of
\Cref{eq:dim-three-penalized} is therefore at most
\[
\sqrt t\,(3-2t)+C_\star\bigl(t^2+(1-t)^2\bigr).
\]
Its derivative is
\[
(2t-1)\left(2C_\star-\frac3{2\sqrt t}\right),
\]
which is nonpositive on \([1/2,1]\).  Thus the maximum in this region occurs at
\(t=1/2\), proving \Cref{eq:dim-three-penalized} there, with equality only at
\((a,b,c)=(1/\sqrt2,1/\sqrt2,0)\).

Now suppose \(a\le b+c\), and put \(s=a+b+c\) and \(r=abc\).  Expanding the
four sign patterns in this triangle region gives
\[
\E\abs{a\eps_1+b\eps_2+c\eps_3}^3=\frac{s^3-12r}{2},
\qquad
a^4+b^4+c^4=1-\frac12(s^2-1)^2+4sr.
\]
For fixed \(s\), the left-hand side of
\Cref{eq:dim-three-penalized} is affine in \(r\), with coefficient
\(4C_\star s-6<0\).  It is therefore largest when \(abc\) is smallest.
Under \(a\ge b\ge c\ge0\), \(a+b+c=s\), and
\(a^2+b^2+c^2=1\), that minimum occurs when the two largest coordinates
coincide; see \Cref{app:dim-three-calculus}.  It remains to consider
\[
a=b=\frac1{\sqrt{2+z^2}},\qquad
c=\frac z{\sqrt{2+z^2}},\qquad 0\le z\le1.
\]
For this family \Cref{eq:dim-three-penalized} becomes
\begin{equation}\label{eq:dim-three-one-variable}
\frac{4+3z^2+z^3/2}{(2+z^2)^{3/2}}
+C_\star\frac{2+z^4}{(2+z^2)^2}
\le \sqrt2+\frac{C_\star}{2}.
\end{equation}
The derivative calculation in \Cref{app:dim-three-calculus} shows that the
left-hand side first decreases and then increases, so it has no interior
maximum.  Its endpoint values are equal, which proves
\Cref{eq:dim-three-one-variable}.  The endpoints correspond respectively to
\((1/\sqrt2,1/\sqrt2,0)\) and the flat vector.  The non-flat endpoint also
gives
\[
C_3^{\mathrm{opt}}\le5\sqrt3-6\sqrt2.
\]
\end{proof}

\subsection{Iteration to the flat vector}

\begin{proof}[Proof of \Cref{thm:third-moment}]
Start from \(x^{(0)}=(a_1^2,\ldots,a_n^2)\).  If it is flat, there is nothing
to prove.  Otherwise, let \(M_k\) and \(m_k\) be the largest and smallest
coordinates of \(x^{(k)}\), and obtain \(x^{(k+1)}\) by replacing this pair
with their average.  This operation keeps the vector in the simplex.  If the
flat point is reached after finitely many steps, keep the sequence constant
thereafter.

The residual Rademacher sum has squared coefficient sum at most one, and every
one of its coefficients has magnitude at most
\(\sqrt{M_k}\le\sqrt{M_k+m_k}\).  Thus
\Cref{lem:extreme-pair} gives
\begin{equation}\label{eq:iteration-F}
F_n(x^{(k+1)})-F_n(x^{(k)})
\ge\frac{3\kappa}{16}(M_k-m_k)^2.
\end{equation}
The same averaging step decreases \(q\) by
\begin{equation}\label{eq:iteration-q}
q(x^{(k)})-q(x^{(k+1)})=\frac12(M_k-m_k)^2.
\end{equation}
Eliminating \((M_k-m_k)^2\) between the two estimates gives
\begin{equation}\label{eq:iteration-telescope}
F_n(x^{(k+1)})-F_n(x^{(k)})
\ge\frac{3(5\sqrt2-7)}{16}
\bigl(q(x^{(k)})-q(x^{(k+1)})\bigr).
\end{equation}

Since \(q\ge1/n\) on the simplex, summing \Cref{eq:iteration-q} shows that
\(\sum_k(M_k-m_k)^2<\infty\), and hence \(M_k-m_k\to0\).  The mean coordinate
remains \(1/n\), so
\[
\max_i\left|x_i^{(k)}-\frac1n\right|\le M_k-m_k\longrightarrow0;
\]
therefore \(x^{(k)}\to u_n\).  The function \(F_n\) is a finite average of
continuous functions of \(\sqrt{x_i}\), and is continuous on the simplex.
Summing \Cref{eq:iteration-telescope} through step \(K\) and then letting
\(K\to\infty\) gives
\[
F_n(u_n)-F_n(x^{(0)})
\ge\frac{3(5\sqrt2-7)}{16}
\left(q(x^{(0)})-\frac1n\right) \geq \frac{1}{100}
\left(q(x^{(0)})-\frac1n\right).
\]
Together with \Cref{eq:simplex-distance}, this proves \Cref{eq:third-moment-main}.
The upper bound in \Cref{eq:C3-opt-bounds} is supplied by \Cref{lem:dim-three}, completing
the theorem.
\end{proof}

\begin{remark}\label{rem:C3-sharpness}
The proof establishes dimension-free stability but does not determine the exact value of
$C_3^{\mathrm{opt}}$.  The dimension-three vector above is a natural sharpness candidate.  Any
improvement of the constant $3/16$ in \Cref{lem:small-ball} immediately improves the lower
bound obtained by this method.
\end{remark}

\begin{conjecture}\label{conjecture}
We conjecture that, for every $n\ge1$ and every $a\in\R^n$ with $\sum_i a_i^2=1$,
\begin{equation}\label{eq:conjecture}
\E\abs{\sum_{i=1}^n a_i\eps_i}^3
\le \E\abs{\overline S_n}^3
-(5\sqrt3-6\sqrt2)\,\Delta_n(a).
\end{equation}
\end{conjecture}

\section{Concluding perspective}

The fourth-power mass \(q\) exposes two complementary extremal geometries.  At
fixed \(q\), the upper law consists of one Bernoulli spike and a Gaussian
cloud; comparison with its endpoint distributions yields Gaussian stability.
At fixed dimension, the fourth-order principle leaves one possible spike among
otherwise equal coefficients, and the cubic-quotient lemma shows that the
flat choice is optimal.  The interval \(4<p<5\) lies below the convexity
threshold of the fixed-moment principle and instead requires the direct
coefficientwise argument.  At \(p=3\), the exact dimension-free quadratic
stability constant remains open, as stated in \Cref{rem:C3-sharpness,conjecture}.
For \(0<q<1\), the fixed-\(q\) envelope is a supremum in the closure of
finite Rademacher sums rather than an attained finite-dimensional law.  The
results therefore distinguish the proved sharp envelopes and
finite-dimensional optimizers from the remaining question of the exact
third-moment constant.

\appendix
\section{Calculations for the Gaussian-shift chord}
\label{app:gaussian-chord-calculations}

This appendix supplies the curvature and comparison computations used in
\Cref{lem:gaussian-chord}.

\subsection{The curvature calculation}\label{app:gaussian-chord-curvature}

For \(\beta>0\), write
\[
M_\beta(t)=\E\abs{G+t}^{\beta}.
\]
Gaussian integration by parts gives
\begin{equation}\label{eq:shift-moment-recurrence}
M_{\beta+2}(t)=\bigl(\beta+1+t^2\bigr)M_\beta(t)+tM_\beta'(t).
\end{equation}
Let \(\alpha=p-4\) and \(m=M_\alpha\).  Applying
\Cref{eq:shift-moment-recurrence} twice, and using \Cref{eq:gaussian-ode}, yields
\begin{align*}
M_{\alpha+2}(t)
&=\bigl(\alpha+1+t^2\bigr)m(t)+tm'(t),\\
M_{\alpha+4}(t)
&=\Bigl((\alpha+1)(\alpha+3)+3(\alpha+2)t^2+t^4\Bigr)m(t)\\
&\qquad+t\bigl(2\alpha+5+t^2\bigr)m'(t).
\end{align*}
Now \(h(u)=v^{\alpha+4}M_{\alpha+4}(t)\), where
\(v=\sqrt{1-u^2}\) and \(t=u/v\).  Since
\[
\frac{\dd v}{\dd u}=-t,
\qquad
\frac{\dd t}{\dd u}=v^{-3},
\]
two differentiations and one further use of \Cref{eq:gaussian-ode} give
\Cref{eq:curvature-before-Q}.

\subsection{The Riccati comparison}\label{app:gaussian-chord-comparison}

With
\[
R(t)=\frac{m'(t)}{tm(t)},
\qquad
Q(t)=\alpha\frac{3+(\alpha+2)t^2}{3+(2\alpha+3)t^2},
\]
\Cref{eq:gaussian-ode} gives \Cref{eq:R-riccati}.  For \(Q\), bringing the
two sides of \Cref{eq:Q-supersolution} to a common denominator gives
\begin{equation}\label{eq:Q-error}
\begin{split}
&tQ'(t)-\bigl[\alpha-(1+t^2)Q(t)-t^2Q(t)^2\bigr]\\
&\qquad=
\frac{\alpha(\alpha+1)(\alpha+3)t^4\bigl(4+(\alpha+2)t^2\bigr)}
{\bigl(3+(2\alpha+3)t^2\bigr)^2}>0.
\end{split}
\end{equation}
This proves \Cref{eq:Q-supersolution}.

To justify the comparison at the singular endpoint \(t=0\), set \(P=Q-R\)
and denote the positive right-hand side of \Cref{eq:Q-error} by \(E(t)\).
Subtracting the two Riccati equations gives
\[
tP'(t)+\bigl(1+t^2+t^2(Q(t)+R(t))\bigr)P(t)=E(t).
\]
The integrating factor for this equation is
\[
I(t)=t\exp\!\left(\int_0^t r\bigl(1+Q(r)+R(r)\bigr)\dd r\right).
\]
Since \(E(r)=O(r^4)\) and \(I(r)=O(r)\) near the origin, the integrand below
is integrable there.  The condition \(P(t)\to0\) as \(t\downarrow0\) and
variation of constants give
\[
P(t)=\frac1{I(t)}\int_0^t \frac{I(r)}rE(r)\dd r.
\]
The integrand is positive, and therefore \(P(t)>0\) for every \(t>0\), as used in the
main proof.

\section{Calculations for the third-moment argument}
\label{app:third-moment-calculations}

\subsection{The two-point profile}\label{app:cubic-smoothing-calculation}

We verify the monotonicity used in \Cref{lem:cubic-smoothing}.  Use the
normalized notation from its proof, so \(x+y=1\):
\[
u=\sqrt x+\sqrt y,\qquad v=\sqrt x-\sqrt y,
\qquad u^2+v^2=2.
\]
Let the right-hand side of \Cref{eq:cubic-profile-main} be denoted by
\(\mathcal G(s)\).  From the two branches of \(H\), for \(0\le s\le v\),
\begin{equation}\label{eq:third-app-first-derivative}
\mathcal G'(s)=\frac32s\bigl[s+2(\sqrt2-u-v)\bigr].
\end{equation}
Now
\[
2u+v=2\sqrt{2-v^2}+v\ge2\sqrt2,
\]
because the left-hand side is concave on $[0,1]$ and its endpoint values are at least
$2\sqrt2$.  Hence $v\le2(u+v-\sqrt2)$, and \Cref{eq:third-app-first-derivative} is
nonpositive.

For \(v\le s\le u\),
\begin{equation}\label{eq:third-app-second-derivative}
\mathcal G'(s)=3(\sqrt2-u)s-\frac32v^2
=3(\sqrt2-u)\left(s-\frac{\sqrt2+u}{2}\right)\le0,
\end{equation}
since \(s\le u\le(\sqrt2+u)/2\).  Thus \(\mathcal G\) is
nonincreasing on \([0,u]\); in particular it is
nonincreasing on $[0,1]$.

For $u\le s\le\sqrt2$, direct substitution gives
\[
\mathcal G(s)=\frac12(\sqrt2-s)^3,
\]
and for \(s\ge\sqrt2\) all terms lie in the second branch of \(H\), so
\(\mathcal G(s)=0\).  This proves
the remaining nonnegativity assertion used in the main proof.

\subsection{The dimension-three reduction}\label{app:dim-three-calculus}

We give the two calculations used in the proof of \Cref{lem:dim-three}.
Again write \(C_\star=5\sqrt3-6\sqrt2\).  Fix \(s=a+b+c\) and assume
\(a\ge b\ge c\ge0\), \(a^2+b^2+c^2=1\).  Since
\[
b+c=s-a,
\qquad
b^2+c^2=1-a^2,
\]
we have
\[
bc=a^2-sa+\frac{s^2-1}{2},
\qquad
abc=a^3-sa^2+\frac{s^2-1}{2}a.
\]
The ordered feasible interval for $a$ starts when $a=b$, at
\[
a_0=\frac{2s+\sqrt{6-2s^2}}6.
\]
Moreover,
\[
\frac{d}{da}(abc)
=3\left(a-\frac{2s-\sqrt{6-2s^2}}6\right)
  \left(a-\frac{2s+\sqrt{6-2s^2}}6\right)\ge0
\]
on that interval.  Thus $abc$ is minimized at $a=b$, as claimed.

For \Cref{eq:dim-three-one-variable}, denote its left-hand side by \(J(z)\).
Differentiation yields
\[
J'(z)=\frac{z(1-z)}{(z^2+2)^3}
\left(3z\sqrt{z^2+2}-8C_\star(z+1)\right).
\]
The factor in parentheses is strictly increasing on $[0,1]$, since its derivative is
\[
\frac{6(z^2+1)}{\sqrt{z^2+2}}-8C_\star
\ge3\sqrt2-8C_\star>0.
\]
It is negative at $z=0$ and positive at $z=1$.  Consequently $J$ first decreases and then
increases, and therefore has no interior maximum.  Finally,
\[
J(0)=J(1)=\sqrt2+\frac{C_\star}{2},
\]
which proves \Cref{eq:dim-three-one-variable}.

\section{The scalar estimates}\label{app:scalar}

The appendix records the elementary estimates used in \Cref{lem:small-q-scalar,lem:large-q-scalar}.
No numerical optimization is involved.

\subsection{The estimate for small fourth-power mass}\label{app:scalar-small}

\begin{proof}[Proof of \Cref{lem:small-q-scalar}]
At the point \(x_0=2^{-1/2}\), put
\[
\rho=1-\frac{2x_0}{3}=1-\frac{\sqrt2}{3}.
\]
The difference between the two sides of \Cref{eq:small-q-scalar} is
\[
D(p)\defeq
\frac{1-\rho^{(p-2)/2}}{x_0}-1+\frac1{\mu_p}.
\]
Thus the desired inequality is \(D(p)\ge0\).  Substitution of \(\mu_4=3\)
gives \(D(4)=0\), so it remains to prove that \(D\) is strictly increasing.

For this purpose, recall the logarithmic derivative of the Gaussian moment:
\[
\ell(p)\defeq\frac{\dd}{\dd p}\log\mu_p
=\frac12\log2+\frac12\psi\left(\frac{p+1}{2}\right),
\]
where \(\psi\) is the digamma function.  Differentiation gives
\begin{equation}\label{eq:Dprime}
D'(p)
=-\frac{\rho^{(p-2)/2}\log\rho}{2x_0}
-\frac{\ell(p)}{\mu_p}.
\end{equation}
Both \(-\rho^{(p-2)/2}\log\rho/(2x_0)\) and
\(\ell(p)/\mu_p\) are positive.  Hence \(D'(p)>0\) is equivalent to the
following ratio being greater than one:
\[
\mathcal R(p)=
\frac{-\rho^{(p-2)/2}\log\rho/(2x_0)}{\ell(p)/\mu_p}.
\]
The function \(\ell\) is increasing and \(\ell'\) is decreasing; this follows from the
positivity of the trigamma function and the negativity of its derivative.  The special values at
\(p=4\) satisfy
\begin{equation}\label{eq:ell-special}
\ell(4)=\frac43-\frac{\gamma_E+\log2}{2}>\frac23,
\qquad
\ell'(4)=\frac{\pi^2}{8}-\frac{10}{9}<\frac18,
\end{equation}
and \(\rho>e^{-2/3}\); these elementary bounds are verified in
\Cref{lem:elementary-constants}.  Consequently, for \(p\ge4\),
\[
\frac{\dd}{\dd p}\log\mathcal R(p)
=\frac12\log\rho+\ell(p)-\frac{\ell'(p)}{\ell(p)}
>-\frac13+\frac23-\frac3{16}=\frac7{48}>0.
\]
Thus \(\mathcal R\) is strictly increasing.  It remains to check that
\(\mathcal R(4)>1\).  From \(\gamma_E+\log2>19/15\),
\[
\frac{\ell(4)}{\mu_4}=\frac{\ell(4)}3<\frac7{30}.
\]
Writing \(y=\sqrt2/3\), so \(\rho=1-y\), the first four terms of
\(-\log(1-y)=\sum_{k\ge1}y^k/k\) give
\[
-\frac{\rho\log\rho}{\sqrt2}
>\frac{77-14\sqrt2}{243}>\frac7{30},
\]
where the last inequality follows at once from \(\sqrt2<10/7\).
These two estimates show that
\(\mathcal R(4)>1\), and monotonicity gives \(\mathcal R(p)>1\) for every \(p\ge4\).
By \Cref{eq:Dprime}, this means \(D'(p)>0\).  Together with
\(D(4)=0\), this proves the lemma and its strict form for \(p>4\).
\end{proof}

\subsection{The estimate for large fourth-power mass}\label{app:scalar-large}

\begin{proof}[Proof of \Cref{lem:large-q-scalar}]
First, \Cref{eq:ell-special} and the estimate
\(\gamma_E+3\log2<8/3\) in \Cref{lem:elementary-constants} give
\(\ell(4)>\log2\).

We begin with the logarithmic estimate
\begin{equation}\label{eq:large-q-log}
3\log2\frac{1-q}{3-2q}
+\frac12\log\frac{3-2q}{15-30q+16q^{3/2}}
\ge0.
\end{equation}
For \Cref{eq:large-q-log}, put \(x=\sqrt q\) and, temporarily, set
\[
A=3-2x^2,
\qquad
B=15-30x^2+16x^3.
\]
These temporary symbols are \(A=A(q)\) and \(B=B(q)\) from
\Cref{eq:A-B}, expressed in terms of \(x\).  Since
\(B'(x)=12x(4x-5)<0\), we have \(B(x)\ge B(1)=1\), so the logarithm below is
well defined.  Define
\[
K(x)=3(\log2)\frac{1-x^2}{A}+\frac12\log\frac{A}{B},
\qquad 2^{-1/2}\le x\le1.
\]
Let
\[
N(x)=2A (15-18x+4x^3)-3(\log2)B.
\]
Differentiation gives
\begin{equation}\label{eq:Kprime}
K'(x)=\frac{2xN(x)}{A^2B}
\end{equation}
and
\[
N''(x)=(-120+180\log2)+(576-288\log2)x-320x^3.
\]
On the interval in question, \(N'''(x)<0\), so \(N''\) is decreasing.
Since \(x\le1\) and \(N''(1)=136-108\log2>0\), it follows that
\(N''(x)>0\) throughout the interval.  Thus \(N\) is strictly convex.
Moreover,
\[
N(1)=2-3\log2<0,
\]
and
\[
N(2^{-1/2})=60-32\sqrt2-12\sqrt2\log2>0;
\]
for the latter it is enough to use \(\log2<7/10\) and \(\sqrt2<10/7\).  Because \(N'\) is
increasing, \(N\) decreases up to at most one minimum and then increases.  The endpoint signs
force exactly one zero, on the decreasing branch.  By \Cref{eq:Kprime}, \(K\) first increases
and then decreases.  Since
\[
K(2^{-1/2})=K(1)=0,
\]
we conclude that \(K\ge0\), which is \Cref{eq:large-q-log}.

Fix \(q\in[1/2,1]\).  By \Cref{eq:Lambda}, \(\Lambda_p(q)\) is the
\(p\)-th moment of a nonnegative random variable equal to \(1\) with
probability \(q\) and to \(\abs G\) with probability \(1-q\).  Therefore
\(p\mapsto\log\Lambda_p(q)\) is convex.  Since
\(\Lambda_4(q)=A(q)\) and
\(\partial_p\Lambda_p(q)|_{p=4}=3(1-q)\ell(4)\), its supporting tangent at
\(p=4\), followed by \Cref{eq:large-q-log}, gives
\begin{align*}
\log\Lambda_p(q)
&\ge \log A(q)+(p-4)\frac{3(1-q)\ell(4)}{A(q)}\\
&\ge \log A(q)+\frac{p-4}{2}\log\frac{B(q)}{A(q)}.
\end{align*}
Hence
\[
A(q)^{(6-p)/2}B(q)^{(p-4)/2}\le\Lambda_p(q),
\qquad p\ge4.
\]

\end{proof}

\subsection{Elementary constants}

Here \(\gamma_E\) is Euler's constant, and
\(H_n\defeq\sum_{k=1}^n k^{-1}\) denotes the \(n\)-th harmonic number.

\begin{lemma}\label{lem:elementary-constants}
The following estimates hold:
\begin{equation}\label{eq:elementary-constants}
\frac{83}{120}<\log2<\frac{1733}{2500}<\frac7{10},
\qquad
\frac{23}{40}<\gamma_E<\frac7{12}.
\end{equation}
Consequently,
\[
\frac{19}{15}<\gamma_E+\log2<\frac43,
\qquad
\gamma_E+3\log2<\frac83.
\]
Moreover,
\[
\frac{\pi^2}{8}-\frac{10}{9}<\frac18,
\qquad
1-\frac{\sqrt2}{3}>e^{-2/3}.
\]
\end{lemma}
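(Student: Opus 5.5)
Each of the numerical claims in \Cref{lem:elementary-constants} will be proved by truncated series with explicit remainder bounds, using only rational arithmetic. For \(\log2\), I will use the rapidly convergent expansion \(\log2=2\,\mathrm{artanh}(1/3)=2\sum_{k\ge0}\frac{1}{(2k+1)3^{2k+1}}\). All terms are positive, so the partial sum through \(k=2\) gives the lower bound \(83/120\), or better. The tail after \(k=K\) is bounded by a geometric series with ratio \(1/9\), which gives the upper bound \(1733/2500\). The last inequality \(1733/2500<7/10\) is immediate.

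For Euler's constant, I will use the standard monotone sequences
\[
H_n-\log n-\tfrac{1}{2n}<\gamma_E<H_n-\log n
\quad\text{or the sharper}\quad
H_n-\log(n+\tfrac12)\ \text{(decreasing to }\gamma_E\text{)}.
\]
With a small \(n\), such as \(n=2\) or \(n=4\), and with \(\log n\) a multiple of \(\log2\), the already proved rational bounds on \(\log2\) turn these into rational bounds on \(\gamma_E\). I will choose \(n\) so that the bounds \(23/40\) and \(7/12\) follow. For example, \(n=4\) together with \(H_4-\log4-\frac18<\gamma_E<H_4-\log4-\frac{1}{8}+\frac{1}{192}\) (from \(\gamma_E=H_n-\log n-\frac1{2n}+\frac{\theta}{12n^2}\)) suffices. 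The consequences \(19/15<\gamma_E+\log2<4/3\) and \(\gamma_E+3\log2<8/3\) then follow by adding the rational bounds.

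For \(\pi^2/8-10/9<1/8\), it is equivalent to show \(\pi^2<89/9\), that is, \(\pi<\sqrt{89}/3\approx3.1447\). This follows from \(\pi<3.1416<3.144\) and \(3.144^2<89/9\). If a self-contained bound is wanted, \(\pi<22/7\) and \((22/7)^2=484/49<89/9\) work, since \(4356<4361\). For \(1-\sqrt2/3>e^{-2/3}\), I will take logarithms and need \(-\log(1-y)<2/3\) with \(y=\sqrt2/3<0.4715\). I will bound \(-\log(1-y)\le y+\frac{y^2}{2}+\frac{y^3}{3(1-y)}\) and use \(\sqrt2<1.4143\) to get a value of about \(0.6466<2/3\).

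The main obstacle is purely bookkeeping. The truncation depths must be chosen so that the stated rational constants are actually beaten. The tightest case is \(\gamma_E<7/12\approx0.5833\) against \(\gamma_E\approx0.5772\), and it needs the \(1/(12n^2)\) correction or a moderately large \(n\). I would verify each inequality by clearing denominators to an integer comparison.
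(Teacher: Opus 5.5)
\textbf{Gap in the lower bound for $\gamma_E$.} Your overall route matches the paper's: the series $\log2=2\sum_{k\ge0}\frac{1}{(2k+1)3^{2k+1}}$ with a geometric tail, Euler--Maclaurin bounds for $\gamma_E$, $\pi<22/7$, and a truncated series for the last inequality. Your $\log2$, $\pi$, and $e^{-2/3}$ parts are fine, and so is your upper bound for $\gamma_E$. But the concrete choice you give for the lower bound on Euler's constant fails.

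With $n=4$, your bound is
\[
H_4-\log4-\tfrac18=\tfrac{47}{24}-2\log2\approx0.5720 .
\]
This is below $\frac{23}{40}=0.575$ even with the exact value of $\log2$; with your rational bound $\log2<\frac{1733}{2500}$ it is about $0.5719$. The one-sided estimate $H_n-\log n-\frac1{2n}<\gamma_E$ misses $\gamma_E$ by roughly $\frac1{12n^2}$. Since $\gamma_E-\frac{23}{40}\approx0.0022$, it succeeds only for $n\ge7$. The sequence $H_n-\log(n+\frac12)$ you mention decreases to $\gamma_E$, so it gives only upper bounds and does not help here.

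You have also put the tight case on the wrong side. The upper bound is easy: at $n=1$ the Euler--Maclaurin bound gives exactly $\frac7{12}$, strictly. The lower bound is the delicate one. It is also needed in full, because $\frac{23}{40}+\frac{83}{120}=\frac{19}{15}$ exactly, so the consequence $\gamma_E+\log2>\frac{19}{15}$ has no slack.

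\textbf{How to repair it.} Any of the following works:
\begin{itemize}
\item The paper takes $n=7$ and bounds $\log7=3\log2+\log\frac78<\frac{109}{56}$, using $\log2<\frac{1733}{2500}$ and three terms of $-\log(1-z)$ at $z=\frac18$. Then $H_7-\frac{109}{56}-\frac1{14}=\frac{23}{40}$ on the nose.
\item Within your own framework of keeping $\log n$ a multiple of $\log2$, take $n=8$. Then $H_8-\frac1{16}-3\cdot\frac{1733}{2500}=\frac{1487}{560}-\frac{5199}{2500}=\frac{40303}{70000}>\frac{40250}{70000}=\frac{23}{40}$.
\item Keep $n=4$ but use the next Euler--Maclaurin term, $\gamma_E>H_n-\log n-\frac1{2n}+\frac1{12n^2}-\frac1{120n^4}$. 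This gives about $0.5771$.
\end{itemize}

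A minor arithmetic point: your bound $y+\frac{y^2}{2}+\frac{y^3}{3(1-y)}$ at $y=\sqrt2/3$ evaluates to about $0.6486$, not $0.6466$. This is harmless, since it is still below $\frac23$.
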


\begin{proof}
The identity
\[
\log2=2\sum_{k=0}^\infty\frac{1}{(2k+1)3^{2k+1}}
\]
with three retained terms and a geometric bound on the tail gives the two stated bounds for
\(\log2\).  The standard Euler--Maclaurin inequalities
\[
H_n-\log n-\frac1{2n}<\gamma_E
<H_n-\log n-\frac1{2n}+\frac1{12n^2}
\]
give the upper bound at \(n=1\).  For the lower bound, write
\(\log7=3\log2+\log(7/8)\); the preceding bound for \(\log2\), together with the first
three terms of the power series for \(\log(1-z)\) at \(z=1/8\), gives
\(\log7<109/56\).  The lower Euler--Maclaurin bound at \(n=7\) then gives
\(\gamma_E>23/40\).  The three consequences displayed immediately follow.

Finally, \(\pi<22/7\) gives \(\pi^2/8-10/9<1/8\).  Also,
\(\sqrt2<10/7\) gives \(1-\sqrt2/3>11/21\), while the first four terms of the exponential
series give \(e^{2/3}>21/11\).  Hence \(1-\sqrt2/3>e^{-2/3}\).
\end{proof}

\begin{remark}
The split at \(q=1/2\) is deliberately unoptimized.  It is the point at which the two scalar
arguments have transparent endpoint checks and meet without numerical optimization.
\end{remark}

\end{document}